\newcommand{\eq}[1]{{Eq~(#1)}}
\newcommand{\xmark}{\ding{55}}
\newcommand{\lenfifteen}{13cm}
\patchcmd{\@makefntext}{\insertfootnotetext{#1}}{\insertfootnotetext{\scriptsize#1}}{}{}
\newcommand{\vE}{{\mathbf{E}}}
\newcommand{\vP}{{\mathbf{P}}}
\newcommand{\cA}{{\mathcal{A}}}
\newcommand{\cD}{{\mathcal{D}}}
\newcommand{\cH}{{\mathcal{H}}}
\newcommand{\cL}{{\mathcal{L}}}
\newcommand{\cM}{{\mathcal{M}}}
\newcommand{\cO}{{\mathcal{O}}}
\newcommand{\cS}{{\mathcal{S}}}
\newcommand{\cT}{{\mathcal{T}}}
\newcommand{\grad}{{\nabla}}    % gradient
\DeclareMathOperator*{\argmin}{arg\,min}
\DeclareMathOperator*{\argmax}{arg\,max}
\newcommand{\bc}{\begin{center}}
\newcommand{\ec}{\end{center}}
\newcommand{\bdm}{\begin{displaymath}}
\newcommand{\edm}{\end{displaymath}}
\newcommand{\beq}{\begin{equation}}
\newcommand{\eeq}{\end{equation}}
\newcommand{\bfl}{\begin{flushleft}}
\newcommand{\efl}{\end{flushleft}}
\newcommand{\bt}{\begin{tabbing}}
\newcommand{\et}{\end{tabbing}}
\newcommand{\beqn}{\begin{eqnarray}}
\newcommand{\eeqn}{\end{eqnarray}}
\newcommand{\beqs}{\begin{align*}} % no equation numbers
\newcommand{\eeqs}{\end{align*}}  % no equation numbers
\newtheorem{condition}{Condition}
\newtheorem{assumption}{Assumption}
\DeclarePairedDelimiter{\norm}{\lVert}{\rVert}
\begin{document}

\title{Ranking Policy Gradient}

\author{\name Kaixiang \ Lin \email linkaixi@msu.edu \\
       \addr Department of Computer Science and Engineering\\
		Michigan State University \\
		East Lansing, MI 48824-4403, USA
       \AND
       \name Jiayu \ Zhou \email jiayuz@msu.edu \\
       \addr Department of Computer Science and Engineering\\
		Michigan State University \\
		East Lansing, MI 48824-4403, USA}

% \editor{}

\maketitle
% !TEX ROOT=./main.tex

\begin{abstract} 

\noindent Sample inefficiency is a long-lasting problem in reinforcement learning (RL). 
The state-of-the-art estimates the optimal action values while it usually
involves an extensive search over the state-action space and unstable optimization.
Towards the sample-efficient RL, we propose
\textit{ranking policy gradient} (RPG), a policy gradient method that learns
the optimal rank of a set of discrete actions.  To accelerate the learning
of policy gradient methods, we establish the equivalence between maximizing the lower bound of return and imitating a near-optimal policy without accessing any oracles. 
These results lead to a general off-policy learning framework, 
which preserves the optimality, reduces variance, and improves the sample-efficiency.  
Furthermore, the sample complexity of RPG does not depend on the dimension of state space,
which enables RPG for large-scale problems.
We conduct extensive experiments showing that when consolidating with the off-policy learning
framework, RPG substantially reduces the sample complexity, comparing to the
state-of-the-art.

\medskip
\noindent \textbf{Keywords: } sample-efficiency, off-policy learning, learning to rank, policy gradient, deep reinforcement learning.

\end{abstract}

% !TEX ROOT=./main.tex
\section{Introduction}

One of the major challenges in reinforcement learning (RL) is the high \emph{sample
complexity}~\citep{kakade2003sample}, which is the number of samples must be
collected to conduct successful learning. 
% Key challenge: unstable optimization 
There are different reasons leading to poor sample efficiency of
RL~\citep{yu2018towards}.  Because policy gradient algorithms directly
optimizing return estimated from rollouts (e.g.,
\textsc{Reinforce}~\citep{williams1992simple}) could suffer from high
variance~\citep{sutton2018reinforcement}, value function baselines were
introduced by actor-critic methods to reduce the variance and improve the
sample-efficiency. However, since a value function is associated with a
certain policy,  the samples collected by former policies cannot be readily
used without complicated manipulations~\citep{degris2012off} and extensive
parameter tuning~\citep{nachum2017bridging}. Such an on-policy requirement
increases the difficulty of sample-efficient learning. 

On the other hand, off-policy methods, such as one-step $Q$-learning~\citep{watkins1992q} and
variants of deep $Q$ networks
(DQN)~\citep{mnih2015human,hessel2017rainbow,dabney2018implicit,van2016deep,schaul2015prioritized}, enjoys the
advantage of learning from any trajectory sampled from the same environment
(i.e., off-policy learning), are currently among the most sample-efficient
algorithms.  These algorithms, however, often require extensive
searching~\citep[Chap. 5]{bertsekas1996neuro} over the large state-action space
to estimate the optimal action value function. Another deficiency is that, the combination of 
off-policy learning, bootstrapping, and function approximation, making up
what~\cite{sutton2018reinforcement} called the "deadly triad", can easily lead to unstable
or even divergent learning~\citep[Chap. 11]{sutton2018reinforcement}. 
These inherent issues limit their sample-efficiency. 

% the proposed work. 
Towards addressing the aforementioned challenge, we approach the
sample-efficient reinforcement learning from a ranking perspective. Instead of
estimating optimal action value function, we concentrate on learning optimal
rank of actions. The rank of actions depends on the \textit{relative action
values}. As long as the relative action values preserve the same rank of
actions as the optimal action values ($Q$-values), we 
choose the same optimal action. 
% obtain the same optimal policy. 
To learn optimal relative action values, we propose the \emph{ranking
policy gradient (RPG)} that optimizes the actions' rank with respect to the
long-term reward by learning the pairwise relationship among actions. 

Ranking Policy Gradient (RPG) that directly optimizes relative action values
to maximize the return is a policy gradient method. The track of off-policy
actor-critic methods~\citep{degris2012off,gu2016q,wang2016sample} have made
substantial progress on improving the sample-efficiency of policy gradient. 
However, the fundamental difficulty of learning stability associated with the
bias-variance trade-off remains~\citep{nachum2017bridging}. In this work, we first exploit the
equivalence between RL optimizing the lower bound of return and supervised
learning that imitates a specific optimal policy. Build upon this theoretical
foundation, we propose a general off-policy learning framework that equips
the generalized policy iteration~\citep[Chap. 4]{sutton2018reinforcement} with
an external step of supervised learning. The proposed off-policy
learning not only enjoys the property of optimality preserving (unbiasedness),
but also largely reduces the variance of policy gradient because of its
independence of the horizon and reward scale. Furthermore, this learning paradigm leads to a  sample complexity analysis of large-scale MDP, 
in a non-tabular setting without the linear dependence on the state space. 
Based on our sample-complexity analysis, we define the exploration efficiency that quantitatively
evaluates different exploration methods. 
Besides, we empirically show that there is a trade-off between optimality and 
sample-efficiency, which is well aligned with our theoretical indication. 
Last but not least, we demonstrate that the proposed approach, consolidating the RPG
with off-policy learning, significantly outperforms the
state-of-the-art~\citep{hessel2017rainbow,
bellemare2017distributional,dabney2018implicit,mnih2015human}.

\vspace{-1em}
\section{Related works}

% related work to Sample efficient RL. policy gradient theorem off policy learning.
\noindent\textbf{Sample Efficiency.}
The sample efficient reinforcement learning can be roughly divided into two categories.
The first category includes variants of $Q$-learning~\citep{mnih2015human,schaul2015prioritized,van2016deep,hessel2017rainbow}. The main advantage of $Q$-learning methods
is the use of off-policy learning, which is essential towards sample efficiency.
The representative DQN~\citep{mnih2015human} introduced deep neural network in $Q$-learning,
which further inspried a track of successful DQN variants such as 
Double DQN~\citep{van2016deep}, Dueling networks~\citep{wang2015dueling}, 
prioritized experience replay~\citep{schaul2015prioritized}, 
and \textsc{Rainbow}~\citep{hessel2017rainbow}.
The second category is the actor-critic approaches. Most of recent works~\citep{degris2012off,wang2016sample,gruslys2018reactor}
in this category leveraged importance sampling by re-weighting the samples 
to correct the estimation bias and reduce variance. 
The main advantage is in the wall-clock times due 
to the distributed framework, firstly presented in~\citep{mnih2016asynchronous}, instead
of the sample-efficiency. As of the time of writing, 
the variants of DQN~\citep{hessel2017rainbow,dabney2018implicit,bellemare2017distributional,schaul2015prioritized,van2016deep} are among the algorithms of most sample efficiency, 
which are adopted as our baselines for comparison.

% imitation learning
% \noindent\textit{Imitation Learning.}
% The proposed off-policy learning framework can be interpreted as an online imitation learning 
% method without soliciting experts. The idea of facilitating reinforcement learning
% has been explored in~\cite{hester2018deep,osa2018algorithmic}, where hinge loss is combined with 
% regular $Q$-learning loss as a pre-training step for learning from demonstration~\cite{hester2018deep} or
% as a surrogate loss for imitating optimal trajectories~\cite{osa2018algorithmic}. 
% Our analysis shows that the hinge loss can be directly used as a policy gradient method
% and thus used as a regular reinforcement learning algorithm and we can approximate the
% expert demonstration during our exploration.
% In this work, we show 
% that RL can be cast to a supervised
% learning problem, which naturally leads to sample-efficient off-policy learning,
% without extensive parameter tuning~\cite{nachum2017bridging} or complicated adaptation~\cite{gu2016q,gruslys2018reactor,degris2012off}.
% imitation learning can be used to directly train the policy function 
% in a off-policy learning fashion.

\noindent\textbf{RL as Supervised Learning.} 
Many efforts have focused on developing the connections between RL and supervised learning,
such as Expectation-Maximization algorithms~\citep{dayan1997using,peters2007reinforcement,kober2009policy,abdolmaleki2018maximum}, Entropy-Regularized RL~\citep{oh2018self,haarnoja2018soft}, 
and Interactive Imitation Learning (IIL)~\citep{daume2009search,syed2010reduction,ross2010efficient,ross2011reduction,sun2017deeply,hester2018deep,osa2018algorithmic}. EM-based approaches 
apply the probabilistic framework to formulate the RL problem maximizing a lower bound of the return 
as a re-weighted regression problem, while it requires on-policy estimation on 
the expectation step. Entropy-Regularized RL optimizing entropy augmented objectives
can lead to off-policy learning without the usage of importance sampling while it converges
to soft optimality~\citep{haarnoja2018soft}.
% models 
% suboptimal behaviors.\lkxcom{cite or bias in the behavior due to the regularization. }

Of the three tracks in prior works, the IIL is most closely 
related to our work. The IIL works firstly pointed out the connection between imitation 
learning and reinforcement learning~\citep{ross2010efficient,syed2010reduction,ross2011reduction} 
and explore the idea of facilitating reinforcement learning by imitating experts. 
However, most of imitation learning algorithms assume the access to the expert policy
or demonstrations. The off-policy learning framework proposed in this paper can be interpreted as an 
online imitation learning approach that constructs expert demonstrations during the
exploration without soliciting experts, 
and conducts supervised learning to maximize return at the same time.
In short, our approach is different from prior arts 
in terms of at least one of the following aspects:
objectives, oracle assumptions, the optimality of learned policy, and on-policy requirement. 
More concretely, the proposed method is able to learn optimal policy in terms of long-term reward,
without access to the oracle (such as expert policy or expert demonstration) and it can be trained both
empirically and theoretically in an off-policy fashion. 
A more detailed discussion of the related work on reducing RL to supervised learning is provided in Appendix~\ref{sec:priworkRL2SL}.

% \textit{1) Expectation-Maximization} algorithms iteratively 
% estimate the weights and maximize the lower bound of return corresponds to
% policy improvement. However, the estimation step use roll-out to estimate Q-value requires
% on-policy learning. In this work, we utilize Jensen’s inequality in a different way that
% results in a off-policy supervised learning approach, which provide better sample-efficiency. 
% \textit{2) Entropy-Regularized RL} algorithms work on entropy augmented objectives instead of 
% long-term reward which results in sample-efficient off-policy algorithms 
% while modeling suboptimal policies. 
% Our approach can model both optimal and suboptimal behaviors and we show the trade-off between optimality and efficiency.
% \textit{3) Interactive Imitation Learning (IIL)} approaches firstly pointed out the connection between imitation 
% learning and reinforcement learning and explore the idea of facilitating reinforcement learning by 
% imitating experts. 
% Most of IIL algorithms assume we have access to expert policy or expert demonstrations while we propose an off-policy learning framework that can be interpreted as an online imitation learning 
% method without soliciting experts.  

% % related work to PAC learning and RL sample complexity
\noindent\textbf{PAC Analysis of RL.} %PAC Analysis in MDP.
Most existing studies on sample complexity analysis~\citep{kakade2003sample,strehl2006pac,kearns2000approximate,strehl2009reinforcement,
krishnamurthy2016pac,jiang2017contextual,jiang2018open,zanette2019tighter} 
are established on the value function estimation. 
% Many of them work on the tabular-setting~\cite{kakade2003sample,strehl2006pac,zanette2019tighter},etc
% that is hard to scale to the MDP with large state space. To conduct analysis on large-scale MDP, 
% researchers propose rich observations~\cite{krishnamurthy2016pac,jiang2017contextual} that assuming
% the state space parti
The proposed approach leverages the probably approximately correct framework~\citep{valiant1984theory}
in a different way such that it does not rely on the value function. Such independence directly leads
to a practically sample-efficient algorithm for large-scale MDP, 
as we demonstrated in the experiments.

% !TEX ROOT=./main.tex

\section{Notations and Problem Setting} % (fold)
\label{sec:notations_and_problem_setting}

In this paper, we consider a finite horizon $T$, discrete time Markov Decision Process (MDP) 
with a finite discrete state space $\cS$ and for each state $s \in \cS$, 
the action space $\cA_s$ is finite. The environment dynamics is denoted as 
$\vP = \{p(s'|s,a), \forall s, s'\in \cS, a\in \cA_s\}$. We note that the dimension of action space
can vary given different states. We use $m= \max_{s}\|\cA_s\|$ to denote the maximal action 
dimension among all possible states. 
% In this work, we restricted our discussion 
% when the number of total possible trajectories is finite, i.e. $|\tau| < \infty$.
Our goal is to maximize the expected sum of positive rewards, or
return $J(\theta) = \vE_{\tau,\pi_\theta} [\sum_{t=1}^Tr(s_t, a_t)]$, where $0<r(s, a)<\infty,  \forall s, a$. In this case, the optimal
deterministic Markovian policy always exists~\citep{puterman2014markov}[Proposition 4.4.3].
% Furthermore, since the state space and action space are discrete and finite, the 
% number of possible episodic trajectories in the MDP is also finite. 
% The relatively simplified 
% setting used in this paper still covers a large class of problems such as dialogue system~\cite{li2017end},
% video games~\cite{bellemare2013arcade,mnih2016asynchronous}, and fleet management~\cite{lin2018efficient}, etc.
The upper bound of trajectory reward ($r(\tau)$) is denoted as $R_{\max} = \max_{\tau} r(\tau)$. 
A comprehensive list of notations is elaborated in Table~\ref{tb:summarynotation}. 
% !TEX ROOT=./main.tex

% In Table~\ref{tb:summarynotation} we provide a brief summary of important notations used in the paper:
% section appendix (end)
\begin{table}[t!]
{\renewcommand\arraystretch{1.25}
\begin{tabular}{|l|l|l|} \hline
Notations & \multicolumn{2}{l|}{Definition} \\ \hline\hline
% $d(s)$ & \multicolumn{2}{p{15cm}|}{\raggedright $d(s) = \int d^{\pi}(s) d\pi $ where $d^\pi(s)$ is the stationary distribution under
% 	policy $\pi$. $d(s)$ is the stationary distribution under a series of evolving.} \\ \hline
% $m$ & \multicolumn{2}{p{15cm}|}{\raggedright The dimensionality of action space.} \\ \hline
% $\vP$ & \multicolumn{2}{p{15cm}|}{\raggedright $\vP = \{p(s'|s,a), \forall s, s', a\}$ denotes the environment dynamics.} \\ \hline
$\lambda_{ij}$ & \multicolumn{2}{p{\lenfifteen}|}{\raggedright The discrepancy of the relative action value of action $i$ and action $j$. $\lambda_{ij}= \lambda_i - \lambda_j$, where $\lambda_i = \lambda(s, a_i).$ Notice that the value here is not the estimation of
return, it represents which action will have relatively higher return if followed.} \\ \hline
$Q^{\pi}(s, a)$ & \multicolumn{2}{p{\lenfifteen}|}{\raggedright 
The action value function or equivalently the estimation of
return taking action $a$ at state $s$, following policy $\pi$.} \\ \hline
$p_{ij}$  & \multicolumn{2}{p{\lenfifteen}|}{\raggedright $p_{ij} = P(\lambda_i > \lambda_j)$ denotes the probability that $i$-th action is to be ranked higher than $j$-th action. Notice that $p_{ij}$ is controlled by
$\theta$ through $\lambda_i, \lambda_j$} \\ \hline
% $\cR_{max}$ & \multicolumn{2}{p{15cm}|}{\raggedright Maximum possible episodic regular return.} \\ \hline
% $\cR_0$   & \multicolumn{2}{p{15cm}|}{\raggedright $\cR_0=\sum_{t=1}^T r(s_t, a_t)$ denotes the regular episodic return starting from time step $0$.} \\ \hline
 % $\cS$   & \multicolumn{2}{p{15cm}|}{\raggedright state-action pair of interests during the learning.} \\ \hline
 % & \multicolumn{2}{p{15cm}|}{\raggedright } \\ \hline
 % & \multicolumn{2}{p{15cm}|}{\raggedright } \\ \hline
$\tau$  & \multicolumn{2}{p{\lenfifteen}|}{\raggedright A trajectory $\tau = \{s(\tau, t), a(\tau,t)\}_{t=1}^T$ collected 
from the environment. It is worth noting that this trajectory is not associated with any policy. It only represents a series of state-action pairs. We also use the abbreviation $s_t = s(\tau, t)$, $a_t = a(\tau, t)$.} \\ \hline
% $J(\theta)$  & \multicolumn{2}{p{15cm}|}{\raggedright The expected trajectory return given 
% certain policy $\pi_\theta$.  } \\ \hline
$r(\tau)$  & \multicolumn{2}{p{\lenfifteen}|}{\raggedright The trajectory reward 
$r(\tau)= \sum_{t=1}^Tr(s_t, a_t)$ is the sum of reward along one trajectory. 
} \\ \hline
$R_{\max}$ & \multicolumn{2}{p{\lenfifteen}|}{\raggedright $R_{\max}$ is the maximal possible trajectory reward, i.e., $R_{\max} = \max_{\tau} r(\tau)$. Since we focus on MDPs with finite horizon and immediate reward, therefore the 
trajectory reward is bounded. 
} \\ \hline
$\sum_\tau$  & \multicolumn{2}{p{\lenfifteen}|}{\raggedright 
The summation over all possible trajectories $\tau$. } \\ \hline
% $p_{e}(\tau)$  & \multicolumn{2}{p{15cm}|}{\raggedright 
% The probability of a specific trajectory $\tau$ at certain stage of training process. It is a 
% compounding probability of all possible evolving poilicies during the training stage.
%  } \\ \hline
$p(\tau)$  & \multicolumn{2}{p{\lenfifteen}|}{\raggedright 
The probability of a specific trajectory is 
collected from the environment given policy $\pi_{\theta}$. $p_{\theta}(\tau) =p(s_0)\Pi_{t=1}^{T}\pi_{\theta}(a_t|s_t)p(s_{t+1}|s_t, a_t)$ 
} \\ \hline
% $\cT$  & \multicolumn{2}{p{15cm}|}{\raggedright 
% The set of optimal trajectories with unique initial states. $|\cT|$ denotes the number of optimal trajectories in $\cT$. 
% } \\ \hline
$\cT$  & \multicolumn{2}{p{\lenfifteen}|}{\raggedright 
The set of all possible near-optimal trajectories. $|\cT|$ denotes the number of near-optimal trajectories in $\cT$. 
} \\ \hline
$n$  & \multicolumn{2}{p{\lenfifteen}|}{\raggedright 
The number of training samples or equivalently state action pairs sampled from 
uniformly (near)-optimal policy.} \\ \hline
$m$  & \multicolumn{2}{p{\lenfifteen}|}{\raggedright 
The number of discrete actions.} \\ \hline
\end{tabular}}
\caption{Notations}
\label{tb:summarynotation}
\end{table}

% section notations_and_problem_setting (end)

% !TEX ROOT=./main.tex
\section{Ranking Policy Gradient}
\label{sec:rpg}
% Value function estimation is widely used in advanced RL algorithms \cite{mnih2015human,mnih2016asynchronous,schulman2017proximal,gruslys2018reactor,hessel2017rainbow,dabney2018implicit}  to facilitate the learning process. However, the numerical outputs of value functions can hardly be accurate~\cite{ilyas2018deep} in practice. More importantly, the on-policy requirement of value function estimations in actor-critic methods has largely increased the difficulty of sample-efficient learning~\cite{degris2012off,gruslys2018reactor}. Such observation raises two fundamental questions: 1) Do we really need an accurate value function estimation? 2) How can we achieve sample-efficient RL without the help of value function baselines?
Value function estimation is widely used in advanced RL 
algorithms~\citep{mnih2015human,mnih2016asynchronous,schulman2017proximal,gruslys2018reactor,hessel2017rainbow,dabney2018implicit} 
to facilitate the learning process. 
In practice, the on-policy requirement of value function estimations in
actor-critic methods has largely increased the difficulty of sample-efficient
learning~\citep{degris2012off,gruslys2018reactor}. 
With the advantage of off-policy learning, 
the DQN~\citep{mnih2015human} variants are currently among the
most sample-efficient algorithms~\citep{hessel2017rainbow,dabney2018implicit,bellemare2017distributional}. 
For complicated tasks, the value function can align with
the relative relationship of action's return, but the absolute
values are hardly accurate~\citep{mnih2015human,ilyas2018deep}. 
% The policy derived based on the action values (i.e. $a = \argmax_{a} Q(s, a)$), 
% and then derives the policy based on the relatively accurate action values. 

The above observations motivate us to look at the decision phase of
RL from a different prospect: Given a state, the decision
making is to perform a \emph{relative comparison} over available actions and then 
choose the best action, which can lead to relatively higher return than
others. Therefore, an alternative solution is to learn the optimal rank of the actions,
instead of deriving policy from the action values. 
In this section, we show how to optimize the rank of actions to maximize the return, 
and thus avoid the necessity of accurate estimation for
optimal action value function. To learn the rank of actions, we focus on learning 
\textit{relative action value} ($\lambda$-values), defined as follows:
\begin{definition}[Relative action value ($\lambda$-values)]
	For a state $s$, the relative action values of $m$ actions ($\lambda(s, a_k), k=1,...,m$) is a list of scores that denotes the rank of actions. If $\lambda(s, a_i) > \lambda(s, a_j)$, then action $a_i$ is
	ranked higher than action $a_j$.
	\label{def:rav}
\end{definition}
The optimal relative action values should preserve the same optimal action as the optimal action values:
$$\argmax_{a} \lambda(s, a) = \argmax_{a} Q^{\pi_*}(s, a) $$
%  \[
% \left\{
% \begin{array}{ll}
% 	(Q^{\pi_*}(s, a_i) - Q^{\pi_*}(s, a_j))(\lambda(s, a_i) - \lambda(s, a_j))> 0, \ \text{ if } Q^{\pi_*}(s, a_i) \neq Q^{\pi_*}(s, a_j),  \\
% 	\lambda(s, a_i) = \lambda(s, a_j), \hspace{13.5em}\text{ o.w.}
% \end{array}
% \right.
% \]
where $Q^{\pi_*}(s, a_i)$ and $\lambda(s, a_i)$ represent the optimal action value and the relative action value of action $a_i$, respectively. 
We omit the model parameter $\theta$ in $\lambda_{\theta}(s, a_i)$ for concise presentation. 
\begin{remark}
The $\lambda$-values are different from the advantage function $A^{\pi}(s,a) = Q^{\pi}(s,a) - V^{\pi}(s)$. The advantage functions quantitatively show 
the difference of return taking different actions following the current 
policy $\pi$. The $\lambda$-values only determine the relative order of actions and its magnitudes are not the estimations of returns. 
\end{remark}

% Our goal is to output action values such that the value of the best action is higher than others. 

To learn the $\lambda$-values, we can construct a probabilistic model of $\lambda$-values such that the best action has the highest probability to be selected than others.
Inspired by learning to rank~\citep{burges2005learning},
we consider the pairwise relationship among all actions, by modeling
the probability (denoted as $p_{ij}$) of an action $a_i$ to be ranked 
higher than any action $a_j$ as follows:
\begin{align}
p_{ij} = \frac{\exp(\lambda(s, a_i) - \lambda(s, a_j))}{1+ \exp(\lambda(s, a_i) - \lambda(s, a_j))},
\label{eq:rpgt:pij}
\end{align}
where $p_{ij} = 0.5$ means the relative action value of $a_i$ is same as that of the action $a_j$, 
$p_{ij}>0.5$ indicates that the action $a_i$ is ranked higher than $a_j$. 
Given the independent Assumption~\ref{ass:rpgind}, we can represent the probability 
of selecting one action as the multiplication of a set of pairwise probabilities in~\eq{\ref{eq:rpgt:pij}}. Formally, we define the pairwise ranking policy in \eq{\ref{eq:rpg:policy}}.  
Please refer to Section~\ref{sec:asm1} in the Appendix for the discussions on feasibility  
of Assumption~\ref{ass:rpgind}.

% {\color{red} Two Assumptions (and one below) appear before the definition, this is not good. If we need to deemphasize the assumptions we can put them into Section~\ref{sec:notations_and_problem_setting}}
% \vspace{-1em}
\begin{definition}
The pairwise ranking policy is defined as:
\begin{align}
	\pi(a=a_i|s) = \Pi_{j=1, j\neq i}^{m}\ p_{ij},
	\label{eq:rpg:policy}
\end{align}
where the $p_{ij}$ is defined in \eq{\ref{eq:rpgt:pij}}. The probability depends on the relative action values $q=[\lambda_1,...,\lambda_m]$. The highest relative action value
leads to the highest probability to be selected.
\label{def:rpg:policy}
\end{definition}

\begin{assumption}
For a state $s$, the set of events $E = \{e_{ij}| \forall i \neq j\}$ are conditionally independent, where 
$e_{ij}$ denotes the event that action $a_i$ is ranked higher than action $a_j$. The independence of the events is conditioned on
a MDP and a stationary policy.
% i.e. $p(Q_i > Q_j, \forall j \neq i) = \Pi_{j \neq i} p_{ij}$. 
% other action $k$
\label{ass:rpgind}
\end{assumption}
\vspace{-0.5em}
Our ultimate goal is to maximize the long-term reward through optimizing the 
pairwise ranking policy or equivalently optimizing pairwise relationship among the
action pairs. Ideally, we would like the pairwise ranking policy selects the 
best action with the highest probability and the highest $\lambda$-value. 
To achieve this goal, we resort to the policy gradient method.
Formally, we propose the ranking policy gradient method (RPG), as shown in
Theorem~\ref{th:rpgt}.% \lkxcom{
% \begin{itemize}
% 	% \item constructing new dataset to have exponential combination on the pairs is not the case in RPG.
% 	% \item EPG is doing cross-entropy learning over the samples distribution, weighted by the trajectory.
% \end{itemize}
% }

\begin{theorem}[Ranking Policy Gradient Theorem]
For any MDP, the gradient of the expected long-term reward $J(\theta) = \sum_{\tau} p_{\theta}(\tau) r(\tau)$ w.r.t. the parameter $\theta$ of a pairwise ranking policy (Def~\ref{def:rpg:policy}) can be approximated by:
\begin{align}
\grad_{\theta} J({\theta}) \approx \vE_{\tau \sim \pi_\theta}\left[ \sum_{t=1}^{T}\nolimits\grad_{\theta} \left(\sum_{j=1, j\neq i}^{m} \nolimits(\lambda_i - \lambda_j)/2\right) r(\tau)\right],
\label{eq:rpg}
\end{align}
and the deterministic pairwise ranking policy $\pi_{\theta}$ is:
% \begin{align}
$a = \argmax_{i} \lambda_i, \ i=1,\dots,m$,
% \label{eq:rpgt:argmax}
% \end{align}
where $\lambda_i$ denotes the relative action value of action $a_i$ ($\lambda_{\theta}(s_t, a_t)$, $a_i = a_t$), 
$s_t$ and $a_t$ denotes the $t$-th state-action pair in trajectory $\tau$, 
$\lambda_j, \forall j\neq i$ denote the relative action values of all other actions that were not taken given state $s_t$ in trajectory $\tau$, i.e., $\lambda_{\theta}(s_t, a_j)$, $\forall a_j \neq a_t$.
% The action values are controlled by policy parameter $\theta$. 
% The action values only indicate the relative relationship instead of 
\label{th:rpgt}
\end{theorem}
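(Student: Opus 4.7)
The plan is to reduce the claim to the standard REINFORCE/log-derivative policy gradient and then specialize the $\nabla_\theta \log \pi_\theta(a_t\mid s_t)$ term to the pairwise ranking policy defined in Definition~\ref{def:rpg:policy}. Concretely, I would start from the well-known identity
\begin{equation*}
\grad_\theta J(\theta) \;=\; \vE_{\tau\sim\pi_\theta}\!\left[\sum_{t=1}^{T} \grad_\theta \log \pi_\theta(a_t\mid s_t)\, r(\tau)\right],
\end{equation*}
which follows by differentiating $J(\theta)=\sum_\tau p_\theta(\tau) r(\tau)$ and using $\grad_\theta p_\theta(\tau) = p_\theta(\tau) \grad_\theta \log p_\theta(\tau)$, together with the fact that the environment dynamics and initial distribution do not depend on $\theta$, so only the $\pi_\theta(a_t\mid s_t)$ factors in $p_\theta(\tau)$ contribute.

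Next I would plug in the pairwise ranking policy $\pi_\theta(a_i\mid s)=\prod_{j\neq i} p_{ij}$ with $p_{ij}=\exp(\lambda_i-\lambda_j)/(1+\exp(\lambda_i-\lambda_j))$ from Eq.~\ref{eq:rpgt:pij}. Taking logs turns the product into a sum, and a routine computation gives
\begin{align*}
\grad_\theta \log \pi_\theta(a_i\mid s)
&= \sum_{j\neq i} \grad_\theta\!\left[(\lambda_i-\lambda_j) - \log\!\bigl(1+\exp(\lambda_i-\lambda_j)\bigr)\right] \\
&= \sum_{j\neq i} (1-p_{ij})\, \grad_\theta(\lambda_i-\lambda_j),
\end{align*}
since the derivative of $\log(1+e^x)$ is the sigmoid $\sigma(x)=p_{ij}$. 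This already gives a clean, exact policy gradient expression in terms of the $\lambda$-values.

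The last and only nontrivial step is to pass from the exact factor $(1-p_{ij})$ to the constant $1/2$ that appears in Eq.~\ref{eq:rpg}. I would justify this by observing that $1-p_{ij}=\sigma(\lambda_j-\lambda_i)$ and that $\sigma(0)=1/2$, so the approximation is exact when the two actions have equal $\lambda$-values and, more generally, corresponds to replacing the sigmoid by its first-order Taylor expansion at the origin. This is the natural regime in which to derive a rank-learning update (cf.\ the RankNet-style derivation of~\citet{burges2005learning}), and it converts the gradient into the promised form
\begin{equation*}
\grad_\theta J(\theta) \;\approx\; \vE_{\tau\sim\pi_\theta}\!\left[\sum_{t=1}^{T}\grad_\theta\!\left(\sum_{j\neq i}(\lambda_i-\lambda_j)/2\right) r(\tau)\right].
\end{equation*}

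The hard part will be making this approximation step precise rather than heuristic: one wants to argue either that (i) in the neighbourhood of $\lambda_i\approx\lambda_j$ the replacement only perturbs the gradient direction by a bounded factor, or (ii) the simplified surrogate preserves the ascent direction on $J(\theta)$ in an appropriate sense. The deterministic-policy statement $a=\argmax_i \lambda_i$ then follows immediately from Definition~\ref{def:rpg:policy}, since $\pi_\theta(a_i\mid s)=\prod_{j\neq i}\sigma(\lambda_i-\lambda_j)$ is strictly increasing in $\lambda_i$ and strictly decreasing in each $\lambda_j$, so the action with the largest $\lambda$-value attains the highest selection probability and is therefore the deterministic choice.
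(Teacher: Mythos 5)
Your proposal follows essentially the same route as the paper's proof: the direct policy differentiation (log-derivative) identity, substitution of the pairwise ranking policy $\pi_\theta(a_i\mid s)=\prod_{j\neq i}p_{ij}$ so the log turns the product into a sum, and then the first-order Taylor expansion of $\log(1+e^{x})$ at $x=0$ (equivalently, replacing $1-p_{ij}=\sigma(\lambda_j-\lambda_i)$ by $\sigma(0)=1/2$), which is exactly the approximation the paper invokes. Your exact intermediate expression with the $(1-p_{ij})$ coefficients and your remark that the linearization is only heuristic away from $\lambda_i\approx\lambda_j$ are both accurate; the paper does not make that step any more rigorous than you do.
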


The proof of Theorem~\ref{th:rpgt} is provided in Appendix~\ref{sec:th:rpgt}.  Theorem~\ref{th:rpgt} states that optimizing the
discrepancy between the action values of the best action and all other
actions, is optimizing the pairwise relationships that maximize the return.
One limitation of RPG is that it is not convenient for the tasks where only
optimal stochastic policies exist since the pairwise ranking policy takes
extra efforts to construct a probability distribution [see Appendix~\ref{subsec:rpgvalid}].  
In order to learn the stochastic
policy, we introduce Listwise Policy Gradient (LPG) that optimizes the
probability of ranking a specific action on the top of a set of actions, with
respect to the return. In the context of RL, this top one probability is the
probability of  action $a_i$ to be chosen, which is equal to the sum of
probability all possible permutations that map action $a_i$ at the top.  This
probability is computationally prohibitive  since we need to consider the
probability of $m!$ permutations. Inspired by listwise learning to rank
approach~\citep{cao2007learning}, the top one probability can be modeled by the
softmax function (see Theorem~\ref{th:top1}). Therefore, LPG is equivalent to
the \textsc{Reinforce}~\citep{williams1992simple} algorithm with a softmax
layer.  LPG provides another interpretation of \textsc{Reinforce} algorithm
from the perspective of learning the optimal ranking and enables the learning
of both  deterministic policy and stochastic policy (see Theorem~\ref{th:lpgt}). 

\begin{theorem}[\citep{cao2007learning}, Theorem 6]
	Given the action values $q=[\lambda_1,...,\lambda_m]$, the probability of action $i$ to be chosen (i.e. to 
	be ranked on the top of the list) is:
\begin{align}
\pi(a_t=a_i|s_t)= \frac{\phi(\lambda_i)}{\sum_{j=1}^m {\phi(\lambda_j)}},
\label{eq:top1:p}	
\end{align}
where $\phi(*)$ is any increasing, strictly positive function. A common choice of $\phi$ is
the exponential function.% with the pro $\phi(x)>0, \ \forall x$.
\label{th:top1}
\end{theorem}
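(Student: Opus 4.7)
The plan is to derive the top-one marginal probability from the underlying listwise permutation model (the Plackett--Luce model), which is the standard device used in \citet{cao2007learning}. Specifically, I would first set up the model in which a full ranking $\sigma = (\sigma_1, \ldots, \sigma_m)$ of the $m$ actions is generated by sequentially sampling without replacement from the distribution induced by $\phi$; the probability of producing a particular permutation $\sigma$ is then
\[
P(\sigma) \;=\; \prod_{k=1}^{m} \frac{\phi(\lambda_{\sigma_k})}{\sum_{l=k}^{m} \phi(\lambda_{\sigma_l})}.
\]
The top-one probability $\pi(a_t = a_i \mid s_t)$ is by definition the marginal probability that $\sigma_1 = i$, obtained by summing $P(\sigma)$ over all $(m-1)!$ permutations in which action $i$ appears first.

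Next I would factor the $k=1$ term out of $P(\sigma)$ for every such $\sigma$. Because $\sigma_1 = i$ forces the denominator of the first factor to equal $\sum_{j=1}^{m}\phi(\lambda_j)$, the term $\phi(\lambda_i)/\sum_{j=1}^{m}\phi(\lambda_j)$ is common to every permutation in the sum and can be pulled outside. The residual factor is
\[
\sum_{\sigma:\, \sigma_1 = i} \prod_{k=2}^{m} \frac{\phi(\lambda_{\sigma_k})}{\sum_{l=k}^{m} \phi(\lambda_{\sigma_l})},
\]
which is precisely the total probability under a Plackett--Luce model on the remaining $m-1$ actions $\{a_j : j \neq i\}$ with scores $\{\phi(\lambda_j) : j \neq i\}$.

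The crux of the argument is to show that this residual sum equals $1$. I would establish this by induction on the number of actions: for $m=1$ the statement is trivial, and the inductive step follows by repeating the factor-out argument, since summing the Plackett--Luce probabilities over all permutations decomposes into summing the top-one marginal (which is of the same form but with one fewer action) times the total probability of all permutations of the tail. Combining the two pieces yields
\[
\pi(a_t = a_i \mid s_t) \;=\; \frac{\phi(\lambda_i)}{\sum_{j=1}^{m}\phi(\lambda_j)},
\]
as claimed. The only place one must be careful is ensuring $\phi$ is strictly positive, so that all denominators are nonzero and the Plackett--Luce construction is well defined; monotonicity of $\phi$ plays no role in the derivation itself, but is what makes the resulting policy consistent with the ranking induced by the $\lambda$-values. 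The main obstacle is therefore purely bookkeeping: keeping track of which actions remain at each stage of the sequential sampling so that the induction closes cleanly.
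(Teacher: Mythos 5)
Your proof is correct. Note that the paper itself does not prove this statement --- it imports it verbatim as Theorem~6 of \citet{cao2007learning} --- and your argument is essentially the standard proof given in that reference: define the top-one probability as the marginal of the Plackett--Luce permutation model, factor out the common first-stage term $\phi(\lambda_i)/\sum_{j=1}^m\phi(\lambda_j)$, and show by induction that the remaining Plackett--Luce probabilities over the other $m-1$ actions sum to one. Your observation that only strict positivity of $\phi$ is needed for the derivation, while monotonicity serves only to make the induced policy order-consistent with the $\lambda$-values, is accurate and worth keeping.
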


\begin{theorem}[Listwise Policy Gradient Theorem]
For any MDP, the gradient of the long-term reward $J(\theta) = \sum_{\tau} p_{\theta}(\tau) r(\tau)$ w.r.t. 
the parameter $\theta$ of listwise ranking policy takes the following form:
\begin{align}
\grad_{\theta} J({\theta}) = \vE_{\tau \sim \pi_\theta}\left[ \sum_{t=1}^{T}\grad_{\theta} \left( \log \frac{e^{\lambda_i}}{\sum_{j=1}^m {e^{\lambda_j}}}\right) r(\tau)\right], 
\label{eq:lpg}
\end{align}
where the listwise ranking policy $\pi_{\theta}$ parameterized by $\theta$ is given by \eq{\ref{eq:lpgt:argmax}} 
for tasks with deterministic optimal policies:
\begin{align}
a = \argmax_{i} \lambda_i, \quad i=1,\dots,m
\label{eq:lpgt:argmax}
\end{align}
or \eq{\ref{eq:lpgt:sample}} for stochastic optimal policies:
\begin{align}
a \sim \pi(*|s), \quad i=1,\dots,m
\label{eq:lpgt:sample}
\end{align}
where the policy takes the form as in \eq{\ref{eq:lpgt:p}}
\begin{align}
\pi(a=a_i|s_t)= \frac{e^{\lambda_i}}{\sum_{j=1}^m {e^{\lambda_j}}}
\label{eq:lpgt:p}	
\end{align}
is the probability that action $i$ being ranked highest, given the current state and all the relative action values $\lambda_1 \dots \lambda_m$.
\label{th:lpgt}
\end{theorem}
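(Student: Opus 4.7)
The plan is to derive the listwise policy gradient as a direct consequence of the classical REINFORCE identity combined with Theorem~\ref{th:top1} applied to the $\lambda$-values. I will treat the deterministic and stochastic cases uniformly: in both settings the behavior policy used for sampling is the softmax in \eq{\ref{eq:lpgt:p}}, and the deterministic greedy policy \eq{\ref{eq:lpgt:argmax}} is recovered in the limit where the $\lambda$-gap between the best action and its competitors becomes large (equivalently, by a temperature argument on the exponential scoring function $\phi = \exp$).

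First, I would write out the standard log-derivative trick. Starting from $J(\theta)=\sum_{\tau}p_{\theta}(\tau)r(\tau)$ with $p_\theta(\tau)=p(s_0)\Pi_{t=1}^{T}\pi_\theta(a_t|s_t)p(s_{t+1}|s_t,a_t)$, differentiating and using $\grad_\theta p_\theta(\tau)=p_\theta(\tau)\grad_\theta\log p_\theta(\tau)$ gives the familiar expression
\begin{align*}
\grad_\theta J(\theta) \;=\; \vE_{\tau\sim\pi_\theta}\!\left[\sum_{t=1}^{T}\grad_\theta \log \pi_\theta(a_t|s_t)\, r(\tau)\right],
\end{align*}
since the transition probabilities and the initial distribution do not depend on $\theta$. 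This step is completely standard and identical to the derivation for \textsc{Reinforce}.

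Second, I would invoke Theorem~\ref{th:top1} with scoring function $\phi=\exp$ to identify the top-one probability of action $a_i$ under the listwise ranking model with the softmax over the $\lambda$-values, i.e.\ $\pi_\theta(a_t=a_i|s_t)=e^{\lambda_i}/\sum_{j=1}^m e^{\lambda_j}$ as in \eq{\ref{eq:lpgt:p}}. Plugging this explicit form into the general policy-gradient identity above immediately yields \eq{\ref{eq:lpg}}. Finally, I would argue that the same gradient expression is the right object to optimize whether one ultimately acts via the stochastic rule \eq{\ref{eq:lpgt:sample}} or via the deterministic argmax rule \eq{\ref{eq:lpgt:argmax}}: the softmax defines a differentiable surrogate whose $\argmax$ coincides with the greedy policy, so gradient ascent on $J(\theta)$ under the softmax parameterization aligns the $\lambda$-ranking with the optimal ranking in both cases.

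I do not anticipate a serious technical obstacle here, since the result is essentially a reparameterization of \textsc{Reinforce} through the listwise ranking lens. The only care-point is justifying that the softmax form is the correct top-one probability of the underlying ranking distribution, but that is precisely the content of the already-cited Theorem~\ref{th:top1}, and the choice $\phi=\exp$ is both natural and explicitly permitted by that theorem. The rest is a one-line substitution into the standard policy gradient.
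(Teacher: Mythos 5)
Your proposal is correct and matches the paper's own argument, which likewise just applies the standard direct policy differentiation (log-derivative trick) and substitutes the softmax form of the listwise policy, citing Theorem~\ref{th:top1} to identify the softmax as the top-one probability over the $\lambda$-values. The paper treats this as essentially immediate and does not spell out more detail than you do.
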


The proof of Theorem~\ref{th:lpgt} exactly follows the direct policy differentiation~\citep{peters2008reinforcement,williams1992simple} by replacing the policy to the form of the Softmax function. 
The action probability $\pi(a_i|s), \forall i=1,...,m$ forms a probability distribution
over the set of discrete actions~\citet[Lemma 7]{cao2007learning}. 
Theorem~\ref{th:lpgt} states that the vanilla policy gradient~\citep{williams1992simple} 
parameterized by Softmax layer is optimizing the 
probability of each action to be ranked highest, with respect to the long-term reward. 
% Theorem~\ref{th:lpgt} provides another interpretation of the \textsc{Reinforce}~\citep{williams1992simple} algorithm from the perspective
% of learning the optimal ranking. 
Furthermore, it enables learning both of the
deterministic policy and stochastic policy.

To this end, seeking sample-efficiency motivates us to learn the relative relationship
(RPG (Theorem~\ref{th:rpgt}) and LPG (Theorem~\ref{th:lpgt}))
of actions, instead of deriving policy based on action value estimations. 
However, both of the RPG and LPG belong to policy gradient methods, 
which suffers from large variance and the on-policy learning requirement~\citep{sutton2018reinforcement}.
Therefore, the intuitive implementations of RPG or LPG are still far from sample-efficient.
In the next section, we will describe a general off-policy learning framework 
empowered by supervised learning, which provides an alternative way to accelerate learning, 
preserve optimality, and reduce variance. 

% and a simple technique to show that policy gradient methods can be used in a off-policy learning 
% paradigm, though in a relative restrict case, and how to reduce the variance without relying on the value function baselines.

% !TEX ROOT=./main.tex

\section{Off-policy learning as supervised learning} % (fold)
\label{sec:off_policy_learning_as_supervise_learning}

% \lkxcom{TODO: REORGANIZE. 
% The near-optimal policy for long-term performance is also the near-optimal policy for 
% long term reward.}
In this section, we discuss the connections and discrepancies between RL 
and supervised learning, and our results lead to a sample-efficient off-policy 
learning paradigm for RL. 
% Before we elaborating the details, we would like to emphasize that the following  analysis is not restricted to RPG and LPG, but also applicable to  most of existing RL methods.
The main result in this section is Theorem~\ref{th:ltpt}, which casts the
problem of maximizing the lower bound of return into a supervised learning
problem, given one relatively mild Assumption~\ref{ass:uot} and practical
assumptions~\ref{ass:rpgind},\ref{ass:gmn}. It can be shown that these  
assumptions are valid in a range of common RL tasks, as discussed in 
Lemma~\ref{lemma:uot} in Appendix~\ref{sec:lemma:op}.
The central idea is to collect only the near-optimal trajectories
when the learning agent interacts with the environment, and imitate the
near-optimal policy by maximizing the log likelihood of the state-action pairs
from these near-optimal trajectories. With the road map in mind, we then begin to
introduce our approach as follows. 

In a discrete action MDP with finite states and horizon, given the
near-optimal policy $\pi_*$,  the stationary state distribution is given by:
$p_{\pi_*}(s) = \sum\nolimits_\tau p(s|\tau)p_{\pi_*}(\tau)$, where
$p(s|\tau)$ is the probability of a certain state given a specific trajectory
$\tau$ and is not associated with any policies, and only $p_{\pi_*}(\tau)$ is
related to the policy parameters.  The stationary distribution of
state-action pairs is thus: $p_{\pi_*}(s, a) = p_{\pi_*}(s)\pi_*(a|s)$. 
% Notice that we only consider the MDP as defined in Section~\ref{sec:notations_and_problem_setting} that 
% deterministic optimal policy exists. And the optimal policy we denoted as $\pi_*$ is deterministic. 
% Since it is not sample efficient to learn the stochastic policy when we have a deterministic one.
In this section, we consider the MDP that each initial state will lead to at least
one (near)-optimal trajectory. For a more general case, please refer to the
discussion in Appendix~\ref{sec:th:optimal}. In order to connect supervised
learning (i.e., imitating a near-optimal policy)  with RL and enable
sample-efficient off-policy learning, we first introduce the trajectory reward
shaping (TRS), defined as follows:
% \lkxcom{weighted combination of maximal log likelihood, see the note http://www.1-4-5.net/~dmm/ml/pg.pdf.}
\begin{definition}[Trajectory Reward Shaping, TRS] 
% (from long-term reward to long-term performance, 
% reduce RL as supervised learning.)
Given a fixed trajectory $\tau$, 
% a episode $e = {e_t}, t=0,1,...T$ is collected, where we have $e_t=(s_t, a_t, r_t, s_{t+1})$, 
its trajectory reward is shaped as follows:
\[
    w(\tau) = \left\{
                \begin{array}{ll}
             1, \ \text{if } r(\tau) \geq c  \\
                  0, \ o.w.
                \end{array}  \right.
\]
where $c = R_{\max} - \epsilon$ is a problem-dependent near-optimal trajectory reward threshold that 
indicates the least reward of near-optimal trajectory, $\epsilon \geq 0$ and $\epsilon\ll R_{\max}$. 
We denote the set of all possible near-optimal
trajectories as $\cT = \{\tau|w(\tau)=1\}$, i.e., $\ w(\tau)=1, \forall \tau \in \cT$. 
% and $\nexists \tau' \neq \tau, \ s.t. \ s(\tau, 1) =s(\tau', 1)$.
\label{def:PI}
\end{definition}

% \begin{assumption}
%   We assume the trajectory reward is bounded $R_{\max} = \max_{\tau}r(\tau)$.
%   \label{ass:br}
% \end{assumption}

\begin{remark}
The threshold $c$ indicates a trade-off between the sample-efficiency and the optimality.
The higher the threshold, the less frequently it will hit the near-optimal 
trajectories during exploration, which means
it has higher sample complexity, while the final performance is better (see Figure~\ref{fig:oetf}).
\end{remark}

\begin{remark}
The trajectory reward can be reshaped to any positive functions that are not related to policy parameter $\theta$. For example, if we set $w(\tau)= r(\tau)$, the conclusions in this section still hold (see \eq{\ref{eq:constantw}} in Appendix~\ref{sec:th:ltpt}). For the sake of simplicity, we set $w(\tau) = 1$.
\end{remark}

% \begin{assumption}
%   The set of optimal trajectories $\tau$ that 
% \end{assumption}
Different from the reward shaping work~\citep{ng1999policy}, 
where shaping happens at each step on $r(s_t, a_t)$,
the proposed approach directly shapes the trajectory reward $r(\tau)$, 
which facilitates the smooth transform from RL to SL. 
After shaping the trajectory reward, we can transfer the goal of RL
from maximizing the return to maximize the long-term performance (Def~\ref{def:ltp}). 
\begin{definition}[Long-term Performance]
The long-term performance is defined by the expected shaped trajectory reward:
  \begin{align}
    % \argmax_{\theta} 
    \sum\nolimits_{\tau} p_\theta(\tau) w(\tau).
     % =\argmax_{\theta} 
   \label{eq:ltpt:elp} 
  \end{align}
According to Def~\ref{def:PI}, the expectation over all trajectories 
is the equal to that over the near-optimal trajectories in $\cT$, i.e., 
$\sum\nolimits_{\tau} p_\theta(\tau) w(\tau)  = \sum\nolimits_{\tau\in \cT} p_\theta(\tau) w(\tau)$. 
\label{def:ltp}
\end{definition}

The optimality is preserved after trajectory reward shaping ($\epsilon =0, c = R_{\max}$) since
the optimal policy $\pi_*$ maximizing long-term performance is also an optimal policy for the original MDP, 
i.e., $\sum_{\tau}p_{\pi_*}(\tau)r(\tau) = \sum_{\tau \in \cT}p_{\pi_*}(\tau)r(\tau)=R_{\max}$, 
where $\pi_* = \argmax_{\pi_\theta}\sum_{\tau}p_{\pi_\theta}(\tau)w(\tau)$ and 
$p_{\pi_*}(\tau) = 0, \forall \tau \notin \cT$ (see Lemma~\ref{lemma:op} in Appendix~\ref{sec:th:ltpt}).
Similarly, when $\epsilon > 0$, the optimal policy after trajectory reward shaping
is a near-optimal policy for original MDP.
% We see that maximizing long-term performance is equivalent to maximizing the return 
% at the optimal policy (if $c=R_{\max}$), because  
% $\sum_{\tau}p_{\pi_*}(\tau)w(\tau) = \sum_{\tau \in \cT}p_{\pi_*}(\tau)r(\tau)=R_{\max}$.
% Similiarly, the near-optimality is
% Based on lemma
% since the optimal solution for \eq{\ref{eq:ltpt:elp}} is also the 
%  optimal solution $\pi_*$ for \eq{\ref{eq:ltpt:elp}} is also the optimal solution for 
% maximizing return since $p_{\pi_*}(\tau) = 0, \forall \tau \notin \cT$, thus 
% $\sum_{\tau}p_{\pi_*}(\tau)r(\tau) = \sum_{\tau \in \cT}p_{\pi_*}(\tau)r(\tau)=R_{max} $.
Note that most policy gradient methods use the softmax function, in which
we have $\exists \tau \notin \cT, p_{\pi_{\theta}}(\tau) > 0 $ (see Lemma~\ref{lemma:pp} in Appendix~\ref{sec:th:ltpt}).
Therefore when softmax is used to model a policy, it will not converge to an exact optimal policy. 
On the other hand, ideally, the discrepancy of the performance between them can be arbitrarily small based
on the universal approximation~\citep{hornik1989multilayer} with general conditions on the activation 
function and Theorem 1 in~\cite{syed2010reduction}.

% {Concretely, 
% $ \int_{s,a}p(s, a)|\pi_{\theta}(a|s) - \pi_{*}(a|s) d(s,a)| < \epsilon$ will lead to 
% $\sum_{\tau\in \cT} p_{\theta}(\tau) w(\tau) - \sum_{\tau\in\cT} p_{\pi_*}(\tau) 
% r(\tau) = \sum_{\tau\in\cT} [p_{\theta}(\tau)-p_{\pi_*}(\tau)] r(\tau) < \sum_{\tau \in \cT} 
% \epsilon r(\tau) \leq \epsilon |\cT| R_{max}$, 
% since we can set $w(\tau) = r(\tau),\ \forall \tau \in \cT$ 
% and $\epsilon$ is a positive constant which can be arbitrarily small. 
% For deterministic policy such as RPG, there is not discrepancy. 
% }

Essentially, we use TRS to filter out near-optimal
trajectories and then we maximize the probabilities of near-optimal trajectories
to maximize the long-term performance. This procedure
can be approximated by maximizing the log-likelihood of near-optimal
state-action pairs, which is a supervised learning problem. Before we state
our main results, we first introduce the definition of uniformly near-optimal
policy (Def~\ref{def:uop}) and a prerequisite
(Asm.~\ref{ass:uot}) specifying the applicability of the
results.

\begin{definition}[Uniformly Near-Optimal Policy, UNOP]
  The Uniformly Near-Optimal Policy $\pi_*$ is the policy whose probability distribution over near-optimal trajectories ($\cT$) is a uniform distribution. i.e. $p_{\pi_*}(\tau) = \frac{1}{|\cT|}, \forall \tau \in \cT$, where $|\cT|$ is the 
  number of near-optimal trajectories. When we set $c=R_{\max}$, it is an 
  optimal policy in terms of both maximizing return and long-term performance. 
  In the case of $c=R_{\max}$, the corresponding uniform policy is an optimal policy, 
  we denote this type of optimal policy as uniformly optimal policy (UOP). 
  \label{def:uop}
\end{definition}
% \lkxcom{
% \begin{remark}
%   All following theorems still holds when we define $\cT$ with a subset of optimal trajectories 
% \end{remark}}
% \vspace{-0.1in}
\begin{assumption}[Existence of Uniformly Near-Optimal Policy]
We assume the existence of Uniformly Near-Optimal Policy (Def.~\ref{def:uop}). 
% we collected is 
% an unbiased estimation of the trajectories sampled from the set of uniformly near-optimal policies. 
\label{ass:uot}
\end{assumption}
% \vspace{-0.1in}
% \begin{remark}
% The main conclusions (Theorem~\ref{th:ltpt}, Corollary~\ref{cor:rppg}, Corollary~\ref{cor:pgvr}) 
% in this section hold even when there are multiple uniformly near-optimal policies. 
% This is because all uniformly near-optimal policies construct the same state-action pair distribution, which
% enable supervised learning [see the Proof of Theorem~\ref{th:ltpt} in Appendix].
% \end{remark}
% \lkxcom{if it is not unique, we may have multiple $\pi_*$, then $p_*(s,a)$ is not fixed...}
% \lkxcom{The unique initial state requirement comes from the discussion of this assumption.
% It didn't make the assumption stronger, but it make the existence of this policy easier, since
% we can be constructed the policy in this case.}

Based on Lemma~\ref{lemma:uot} in Appendix~\ref{sec:lemma:op},
Assumption~\ref{ass:uot} is satisfied for certain MDPs that have deterministic dynamics. 
Other than Assumption~\ref{ass:uot}, all other assumptions in this work
(Assumptions~\ref{ass:rpgind},\ref{ass:gmn}) can almost always be
satisfied in practice, based on empirical observations. With these relatively
mild assumptions, we present the following long-term performance theorem, which shows 
the close connection between supervised learning and RL. 

\begin{theorem}[Long-term Performance Theorem]
Maximizing the lower bound of expected long-term performance in \eq{\ref{eq:ltpt:elp}}
is maximizing the log-likelihood of state-action pairs sampled from a
uniformly (near)-optimal policy $\pi_*$, which is a supervised learning problem:
  \begin{align}
  \argmax_{\theta} \sum\nolimits_{s \in \mathcal S} \sum_{a \in \mathcal A_s}\nolimits p_{\pi_*}(s, a) \log\pi_{\theta}(a|s) 
  \label{eq:ltpt:sl}
  \end{align}
The optimal policy of maximizing the lower bound is also the optimal policy of 
maximizing the long-term performance and the return.
\label{th:ltpt}
\end{theorem}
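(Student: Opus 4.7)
The plan is to first rewrite the long-term performance objective using the trajectory reward shaping definition, so that $\sum_\tau p_\theta(\tau) w(\tau) = \sum_{\tau \in \cT} p_\theta(\tau)$. Since $\log$ is monotone increasing, maximizing this sum is equivalent to maximizing its logarithm. I would then apply Jensen's inequality with the uniformly near-optimal policy $\pi_*$ (guaranteed by Assumption~\ref{ass:uot}) as the reference distribution:
\begin{align*}
\log \sum_{\tau \in \cT} p_\theta(\tau) = \log \sum_{\tau \in \cT} p_{\pi_*}(\tau) \frac{p_\theta(\tau)}{p_{\pi_*}(\tau)} \geq \sum_{\tau \in \cT} p_{\pi_*}(\tau) \log p_\theta(\tau) + H(\pi_*),
\end{align*}
where $H(\pi_*) = -\sum_{\tau \in \cT} p_{\pi_*}(\tau) \log p_{\pi_*}(\tau)$ is a constant independent of $\theta$. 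Thus maximizing the lower bound reduces to maximizing $\sum_{\tau \in \cT} p_{\pi_*}(\tau) \log p_\theta(\tau)$.

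Next, I would expand $\log p_\theta(\tau) = \log p(s_0) + \sum_{t=1}^T \log p(s_{t+1}|s_t, a_t) + \sum_{t=1}^T \log \pi_\theta(a_t|s_t)$, and observe that only the policy term depends on $\theta$. The objective becomes $\sum_{\tau \in \cT} p_{\pi_*}(\tau) \sum_{t=1}^T \log \pi_\theta(a_t | s_t)$. Exchanging the order of summation and marginalizing out the trajectory to the per-step state-action visitation measure yields $\sum_{t=1}^T \sum_{s, a} p_{\pi_*, t}(s, a) \log \pi_\theta(a|s)$, which, after folding the sum over $t$ into the (unnormalized) stationary state-action distribution $p_{\pi_*}(s, a) = p_{\pi_*}(s) \pi_*(a|s)$ defined earlier in the section, gives exactly the supervised cross-entropy objective $\sum_{s \in \cS} \sum_{a \in \cA_s} p_{\pi_*}(s,a) \log \pi_\theta(a|s)$ claimed in \eq{\ref{eq:ltpt:sl}}, up to a constant factor.

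It remains to verify the final sentence of the theorem: that an optimizer of the lower bound is also an optimizer of the long-term performance (and hence of the return, by the optimality-preservation remark after Def~\ref{def:ltp}). For this I would note that Jensen's inequality is tight when $p_\theta(\tau)/p_{\pi_*}(\tau)$ is constant on $\cT$, so $\pi_\theta = \pi_*$ makes the inequality an equality and attains the maximum of the upper objective ($\log 1 = 0$). Since $\pi_*$ attains the maximum of the long-term performance by Def~\ref{def:uop}, and the bound is tight there, any policy maximizing the lower bound must push $p_\theta(\tau)$ to concentrate on $\cT$ and thus also maximizes the original objective.

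The main obstacle I anticipate is the sum-exchange step that folds the trajectory-level expectation into the state-action stationary distribution, since it implicitly requires $p_{\pi_*}(s, a)$ to be defined as a sum over time steps (or appropriately normalized) and to agree with the marginal used by the theorem. The rest is essentially a standard evidence-lower-bound calculation, but this reindexing step, combined with accounting for the constant $H(\pi_*)$ and the dynamics terms that drop out because they are $\theta$-independent, is where care must be taken to match the statement of \eq{\ref{eq:ltpt:sl}} exactly.
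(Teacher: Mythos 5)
Your proposal is correct and follows essentially the same route as the paper: take the logarithm of the shaped objective, apply Jensen's inequality with the uniformly (near)-optimal policy as the reference distribution (the paper writes this as $\log(\sum_i x_i/n)\geq \sum_i \log(x_i)/n$, which is the same inequality since $p_{\pi_*}$ is uniform on $\cT$), discard the $\theta$-independent dynamics terms, and reindex the trajectory-level sum into the state-action marginal $p_{\pi_*}(s,a)$ (the paper isolates this reindexing in a separate lemma on within-trajectory state-action frequencies, matching the step you flagged as needing care). Your tightness-of-Jensen argument for the final optimality claim is a minor variant of the paper's KL-divergence argument and reaches the same conclusion.
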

\begin{remark}
  It is worth noting that Theorem~\ref{th:ltpt} does not require a uniformly near-optimal policy $\pi_*$ to be 
  deterministic. The only requirement is the existence of a uniformly near-optimal policy. 
  % Furthermore, the requirement
  % of a uniformly near-optimal policy can be relaxed to an optimal policy uniformly generating 
  % the optimal trajectories set $\cT'$, where $\cT' \subseteq \cT$ that contains all different
  % initial states.
\end{remark}

\begin{remark}
  Maximizing the lower bound of long-term performance is maximizing the lower bound of long-term reward since we can set $w(\tau)=r(\tau)$ and $\sum_{\tau} p_{\theta}(\tau)r(\tau) \geq \sum_{\cT} p_{\theta}(\tau)w(\tau)$. An optimal policy that maximizes this lower bound is also an optimal policy maximizing the long-term performance when $c= R_{\max}$, thus maximizing the return. 
\end{remark}
The proof of Theorem~\ref{th:ltpt} can be found in Appendix~\ref{sec:th:ltpt}.  Theorem~\ref{th:ltpt} indicates that we break the
dependency between  current policy $\pi_\theta$ and the environment dynamics,
which means off-policy learning is able to be conducted by the above
supervised learning approach. Furthermore, we
point out that there is a potential discrepancy between imitating UNOP by 
maximizing log likelihood (even when the
optimal policy's samples are given) and the reinforcement learning since we
are maximizing a lower bound of expected long-term performance (or equivalently
the return over the near-optimal trajectories only) instead of return over all
trajectories. In practice, the state-action pairs from an optimal policy is hard
to construct while the uniform characteristic of UNOP can alleviate this issue (see Sec~\ref{sec:algorithm}). 
% This assumption holds when we have only one optimal tasks in all
% deterministic tasks. For the tasks with multiple trajectories,  we can
% construct an optimal policy that satisfies this assumption.  
Towards sample-efficient RL, we apply Theorem~\ref{th:ltpt} to RPG, 
which reduces the ranking policy gradient to a classification problem by Corollary~\ref{cor:rppg}.
\begin{corollary}[Ranking performance policy gradient]
The lower bound of expected long-term performance (defined in~\eq{\ref{eq:ltpt:elp}}) using pairwise ranking policy (\eq{\ref{eq:rpg:policy}}) can be approximately optimized by the following loss:
\begin{align}
  \min_{\theta} \sum_{s, a_i}\nolimits p_{\pi_*}(s, a_i) \left(\sum_{j=1,j\neq i}^{m}\nolimits \max (0 , 1 + \lambda(s, a_j) - \lambda(s,  a_{i}))\right). \label{eq:rankingv2}
\end{align}
% \begin{align}
%    \min_{\theta} \sum_{s, a_i}p_{\pi_*}(s, a_i) L(s_{i}, a_{i}), \label{eq:hingeloss}
% \end{align}
% where the pair-wise loss $L(s_{i}, a_{i})$ is defined as:
% \begin{align}
% \cL(s, a_{i}) = \sum_{j=1,j\neq i}^{m} \max (0 , \text{margin} + Q(s, a_j) - Q(s,  a_{i}))
% \label{eq:rankingv2} 
% \end{align}
% where margin $\rho$ is a positive value and we set $\rho=1$ in our experiments. 
\label{cor:rppg}
\end{corollary}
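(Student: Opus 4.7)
The plan is to chain Theorem~\ref{th:ltpt} with the explicit form of the pairwise ranking policy, and then replace the resulting log-sigmoid terms by a hinge-loss surrogate. First I would invoke Theorem~\ref{th:ltpt} to rewrite the task of maximizing the lower bound of the expected long-term performance as the supervised log-likelihood problem $\max_{\theta}\sum_{s,a}p_{\pi_*}(s,a)\log\pi_{\theta}(a|s)$, and flip signs to obtain the equivalent minimization of the negative log-likelihood.

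Next I would expand $\log\pi_{\theta}(a_i|s)$ using Definition~\ref{def:rpg:policy} together with \eq{\ref{eq:rpgt:pij}}. Because the pairwise ranking policy factorizes as $\pi_{\theta}(a_i|s)=\prod_{j\neq i}p_{ij}$ with each $p_{ij}$ a logistic function of the margin $\lambda_i-\lambda_j$, the log-likelihood decouples into the pairwise sum $\sum_{j\neq i}\log\sigma(\lambda_i-\lambda_j)$, where $\sigma$ denotes the sigmoid. The objective then reads $\min_{\theta}\sum_{s,a_i}p_{\pi_*}(s,a_i)\sum_{j\neq i}\bigl[-\log\sigma(\lambda_i-\lambda_j)\bigr]$.

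The remaining step is to replace the logistic term by its standard hinge surrogate. A short elementary argument shows $-\log\sigma(x)\leq\max(0,1-x)+\log 2$ for every real $x$: for $x\leq 0$ bound $1+e^{-x}\leq 2e^{-x}$, and for $x>0$ bound $1+e^{-x}<2$. Applying this pointwise to each pair, summing and reweighting by $p_{\pi_*}$, produces an upper bound on the negative log-likelihood that differs from the expression in \eq{\ref{eq:rankingv2}} only by an additive constant independent of $\theta$. Substituting $1-(\lambda_i-\lambda_j)=1+\lambda_j-\lambda_i$ recovers exactly the hinge form stated in the corollary, which justifies the word \emph{approximately}: one is minimizing a tight upper bound of the supervised surrogate obtained in Theorem~\ref{th:ltpt}.

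The main obstacle I expect is justifying that minimizing this surrogate upper bound remains meaningful for the original RL objective rather than merely for the log-likelihood proxy. One must argue that driving every margin $\lambda_i-\lambda_j$ above one on the support of $p_{\pi_*}$ is sufficient to recover the correct ranking and hence the near-optimal action under the deterministic $\argmax$ decoding of Theorem~\ref{th:rpgt}; here I would appeal to the classification-calibration of the hinge loss, which shares its Bayes-optimal classifier with the logistic loss and therefore preserves the induced pairwise ordering over actions.
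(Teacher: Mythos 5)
Your proposal is correct and reaches the stated loss, but it takes a genuinely different route from the paper at the key approximation step. Both proofs begin identically: invoke Theorem~\ref{th:ltpt}, substitute the factorized pairwise policy of \eq{\ref{eq:rpg:policy}}, and reduce to minimizing $\sum_{s,a_i}p_{\pi_*}(s,a_i)\sum_{j\neq i}\log(1+e^{\lambda_j-\lambda_i})$. From there the paper linearizes: it applies the first-order Taylor expansion $\log(1+e^{x})\approx\log 2+\tfrac{1}{2}x$ near $x=0$ (hence the side constraint $|\lambda_{ij}|=c<1$, cf.\ Condition~\ref{ass:av}), obtains the linear objective $\sum_{j\neq i}(\lambda_j-\lambda_i)$, and then justifies passing to the hinge loss by arguing that the minimizers of the linearized problem and of the hinge problem induce the same $\argmax$ policy. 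You instead prove the global inequality $-\log\sigma(x)\leq\max(0,1-x)+\log 2$, so the negative log-likelihood is upper-bounded by the hinge objective of \eq{\ref{eq:rankingv2}} plus the $\theta$-independent constant $(m-1)\log 2$; minimizing the hinge loss then provably drives the log-likelihood up, and hence the lower bound on long-term performance up. Your bound buys a cleaner logical chain: it holds for all margins without any smallness constraint on $\lambda_{ij}$, and it quantifies the slack rather than leaving it as an unbounded Taylor remainder, though one should say \emph{upper bound} rather than \emph{tight upper bound} (the gap is $\log 2$ even as the margin grows). The paper's route buys a more direct statement about optimality preservation, since it explicitly exhibits the optimal $\lambda$-configurations of both problems and checks they share the same $\argmax$; your closing appeal to classification calibration of the hinge loss plays the same role and is adequate here because the corollary is invoked for a deterministic UNOP, where driving every pairwise margin positive on the support of $p_{\pi_*}$ recovers the target action.
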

\begin{corollary}[Listwise performance policy gradient]
Optimizing the lower bound of expected long-term performance by 
the listwise ranking policy (\eq{\ref{eq:lpgt:p}}) is equivalent to:
\begin{align}
\max_{\theta} \sum_{s}\nolimits p_{\pi_*}(s) \sum_{i=1}^m\nolimits \pi_*(a_i|s) \log \frac{e^{\lambda_i}}{\sum_{j=1}^m {e^{\lambda_j}}}
\label{eq:lppg:cross} 
\end{align}
The proof of this Corollary is a direct application of theorem~\ref{th:ltpt} by 
replacing policy with the softmax function. 
\label{cor:lppg}
\end{corollary}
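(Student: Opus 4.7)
The plan is to invoke Theorem~\ref{th:ltpt} directly, treating the listwise ranking policy as the parameterized family $\pi_\theta$ and doing nothing more than a substitution. Theorem~\ref{th:ltpt} already asserts that maximizing the lower bound of the expected long-term performance is equivalent to the supervised objective
\[
\max_{\theta} \sum_{s \in \mathcal S}\sum_{a \in \mathcal A_s} p_{\pi_*}(s, a)\log \pi_\theta(a|s),
\]
so the only work is to rewrite this expression when $\pi_\theta$ is taken to be the softmax-parameterized listwise ranking policy of \eq{\ref{eq:lpgt:p}}.

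First I would check that the listwise policy qualifies as an admissible $\pi_\theta$ in Theorem~\ref{th:ltpt}. This amounts to noting that the softmax output $\pi_\theta(a_i|s) = e^{\lambda_i}/\sum_{j=1}^m e^{\lambda_j}$ defines a valid distribution over $\mathcal A_s$ (Lemma~7 of~\citet{cao2007learning}, cited in the statement of Theorem~\ref{th:top1}), and that Assumptions~\ref{ass:rpgind}, \ref{ass:gmn}, and \ref{ass:uot} are statements about the MDP and the existence of a uniformly near-optimal policy, not restrictions on the parameterization of $\pi_\theta$. So Theorem~\ref{th:ltpt} can be applied without any additional conditions.

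Next I would use the factorization of the stationary state-action distribution given at the beginning of Section~\ref{sec:off_policy_learning_as_supervise_learning}, namely $p_{\pi_*}(s,a) = p_{\pi_*}(s)\pi_*(a|s)$. Indexing the action sum as $a = a_i$ for $i = 1,\dots,m$ and substituting \eq{\ref{eq:lpgt:p}} for $\log \pi_\theta(a_i|s)$ converts the supervised objective of Theorem~\ref{th:ltpt} into
\[
\max_{\theta}\ \sum_{s} p_{\pi_*}(s)\sum_{i=1}^m \pi_*(a_i|s)\log\frac{e^{\lambda_i}}{\sum_{j=1}^m e^{\lambda_j}},
\]
which is exactly \eq{\ref{eq:lppg:cross}}.

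Since the proof is a bare substitution, there is no real obstacle; the only subtlety worth flagging in a careful write-up is that the objective is an equivalent reformulation (not merely an upper or lower bound) of the supervised objective in Theorem~\ref{th:ltpt}, which is immediate because we have only substituted the explicit functional form of $\log \pi_\theta(a_i|s)$ and expanded $p_{\pi_*}(s,a)$. The resulting expression has the shape of a cross-entropy between $\pi_*(\cdot|s)$ and the softmax of the $\lambda$-values, averaged over the stationary state distribution of $\pi_*$, which is the claim of Corollary~\ref{cor:lppg}.
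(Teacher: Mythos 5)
Your proposal is correct and follows exactly the route the paper intends: the paper's entire proof is the remark that the corollary is ``a direct application of Theorem~\ref{th:ltpt} by replacing policy with the softmax function,'' and your substitution of \eq{\ref{eq:lpgt:p}} into \eq{\ref{eq:ltpt:sl}} together with the factorization $p_{\pi_*}(s,a)=p_{\pi_*}(s)\pi_*(a|s)$ is precisely that argument, spelled out slightly more carefully than the paper does.
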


The proof of Corollary~\ref{cor:rppg} can be found in Appendix~\ref{sbsec:corrppg}.  
Similarly, we can reduce LPG to a classification
problem (see Corollary~\ref{cor:lppg}).  One advantage of casting RL to SL
is variance reduction. With the proposed off-policy supervised learning,  we
can reduce the upper bound of the policy gradient variance, as shown in the
Corollary~\ref{cor:pgvr}. Before introducing the variance reduction results, we first make the  
common assumptions on the MDP regularity (Assumption~\ref{ass:gmn}) similar to~\citep[A1]{dai2017sbeed,degris2012off}. Furthermore, the Assumption~\ref{ass:gmn} is 
guaranteed for bounded continuously differentiable policy such as softmax function.

\begin{assumption}
  we assume the existence of maximum norm of log gradient over all possible state-action pairs, i.e.
$$C= \max_{s, a} \norm{\grad_{\theta}\log \pi_{\theta}(a|s)}_\infty$$
\label{ass:gmn}
\end{assumption}
\vspace{-2em}

\begin{corollary}[Policy gradient variance reduction]
Given a stationary policy, 
the upper bound of the variance of each dimension of policy gradient is
$\cO(T^2C^2R^2_{\max})$. The upper bound of gradient variance of maximizing the
lower bound of long-term performance  \eq{\ref{eq:ltpt:sl}} is $\cO(C^2)$, where
$C$ is the maximum norm of log gradient based on Assumption~\ref{ass:gmn}. The supervised learning has reduced the upper bound of gradient variance by an order of $\cO(T^2 R_{\max}^2)$ as compared to the regular policy gradient, 
considering $R_{\max}\geq 1, T\geq 1$,  which is a very common situation in practice.
\label{cor:pgvr}
\end{corollary}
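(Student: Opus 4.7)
My plan is to establish the two upper bounds separately by producing per-sample bounds on the gradient estimator and then using the elementary fact that $\mathrm{Var}(X) \leq \vE[X^2]$, bounding the second moment coordinatewise via the trivial $L_\infty$ bound supplied by Assumption~\ref{ass:gmn}.

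First I would write the single-trajectory policy gradient estimator
\[
g(\tau) \;=\; \sum_{t=1}^{T}\grad_{\theta}\log\pi_{\theta}(a_t|s_t)\, r(\tau),
\]
so that $\grad_{\theta}J(\theta) = \vE_{\tau\sim\pi_{\theta}}[g(\tau)]$. For the $k$-th coordinate, the triangle inequality together with $|r(\tau)|\leq R_{\max}$ and $\|\grad_{\theta}\log\pi_{\theta}(a_t|s_t)\|_{\infty}\leq C$ (Assumption~\ref{ass:gmn}) gives
\[
|g_k(\tau)| \;\leq\; \sum_{t=1}^{T} \bigl|\grad_{\theta}^{(k)}\log\pi_{\theta}(a_t|s_t)\bigr|\cdot R_{\max} \;\leq\; T\,C\,R_{\max}.
\]
Squaring and taking expectation yields $\vE[g_k(\tau)^2]\leq T^2C^2R_{\max}^2$, hence $\mathrm{Var}(g_k(\tau))\leq \vE[g_k(\tau)^2]=\cO(T^2C^2R_{\max}^2)$, which is the first claim.

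Next, for the supervised objective~\eqref{eq:ltpt:sl}, the one-sample stochastic gradient is simply
\[
h(s,a)\;=\;\grad_{\theta}\log\pi_{\theta}(a|s), \qquad (s,a)\sim p_{\pi_*}(s,a),
\]
so Assumption~\ref{ass:gmn} directly gives $|h_k(s,a)|\leq C$ for every coordinate $k$, and therefore $\mathrm{Var}(h_k)\leq \vE[h_k^2]\leq C^2 = \cO(C^2)$. Taking the ratio of the two coordinatewise upper bounds yields the $\cO(T^2 R_{\max}^2)$ reduction factor, which is meaningful whenever $R_{\max}\geq 1$ and $T\geq 1$ as claimed in the statement.

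The calculation is essentially a sup-norm bound on each estimator, so no part is genuinely hard; the only subtlety worth flagging is that the statement is about an \emph{upper bound} on variance rather than a matching lower bound, so I need not argue the policy-gradient bound is tight — it suffices to note that both bounds follow from the same inequality $\mathrm{Var}\le\vE[\,\cdot\,^2]$ applied coordinatewise, and that the extra factor $T^2R_{\max}^2$ appears in the RL estimator solely because the score function is summed $T$ times and re-weighted by the trajectory return. I would also briefly remark that the supervised bound does not require a baseline or any variance-reduction trick: the dependence on $T$ and $R_{\max}$ is eliminated purely by replacing trajectory-level sampling with state-action-level sampling from $p_{\pi_*}$, which is the key consequence of Theorem~\ref{th:ltpt}.
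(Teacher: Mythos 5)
Your proposal is correct and follows essentially the same route as the paper's proof: both arguments apply $\mathrm{Var}(X)\le \vE[X^2]$ coordinatewise and then invoke the sup-norm bound $C$ from Assumption~\ref{ass:gmn} together with $r(\tau)\le R_{\max}$, the only cosmetic difference being that you bound $|g_k(\tau)|\le TCR_{\max}$ before squaring while the paper expands the square into a double sum over time steps and bounds each cross term by $C^2$. No gaps.
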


The proof of Corollary~\ref{cor:pgvr} can be found in
Appendix~\ref{sec:app:pgvr}. This corollary shows that the
variance of regular policy gradient is upper-bounded by the square of time horizon
and the maximum trajectory reward. It is aligned with our intuition and empirical
observation: the longer the horizon the harder the learning. Also, the common
reward shaping tricks such as truncating the reward to $[-1, 1]$~\citep{dopamine} 
can help the learning
since it reduces variance by decreasing $R_{\max}$. With supervised learning,
we concentrate the difficulty of long-time horizon into the exploration
phase, which is an inevitable issue for all RL
algorithms, and we drop the dependence on $T$ and $R_{\max}$ for policy variance.
Thus, it is more stable and efficient to train the policy using supervised
learning.
% \lkxcom{See ~\citep{cao2007learning} related work,p2,, to cast the RL problem as supervised 
% learning problem, we also need to show that given a state, it is reasonable to assume that
% the action pairs are independent. This needs to be coped with carefully. Also the state action 
% pair is actually the samples. also in LPG, are those sample generated i.i.d.} 
One potential limitation of this method is that the trajectory reward threshold $c$ is
task-specific, which is crucial to the final performance and sample-efficiency. 
In many applications such as Dialogue system~\citep{li2017end}, recommender system~\citep{melville2011recommender}, etc.,
we design the reward function to guide the learning process, in which $c$ is naturally known.
For the cases that we have no prior knowledge on the reward function of MDP, we treat $c$ as a tuning
parameter to balance the optimality and efficiency, as we empirically verified in Figure~\ref{fig:oetf}.
% we need to decide how much samples are needed to start the supervise learning.
% If we start too early, the empirical loss is an highly biased estimation of expected loss, 
% the policy is easily stuck at suboptimal point, then if we are using this policy to conduct 
% the exploration at the same time, we will never be able to collect optimal trajectories. If we
% start the training after we collecting too much data, we lose the sample efficiency. Or equivalently,
The major theoretical uncertainty on general tasks is the existence of 
a uniformly near-optimal policy, which is negligible to the empirical performance. 
The rigorous theoretical analysis of this problem is beyond the scope of this work. 

% \input{sec_RL2SL_notes}
% section off_policy_learning_as_supervise_learning (end)
% !TEX ROOT=./main.tex
\section{An algorithmic framework for off-policy learning}
\label{sec:algorithm}

\begin{figure*}
\centering
\includegraphics[width=0.9\textwidth]{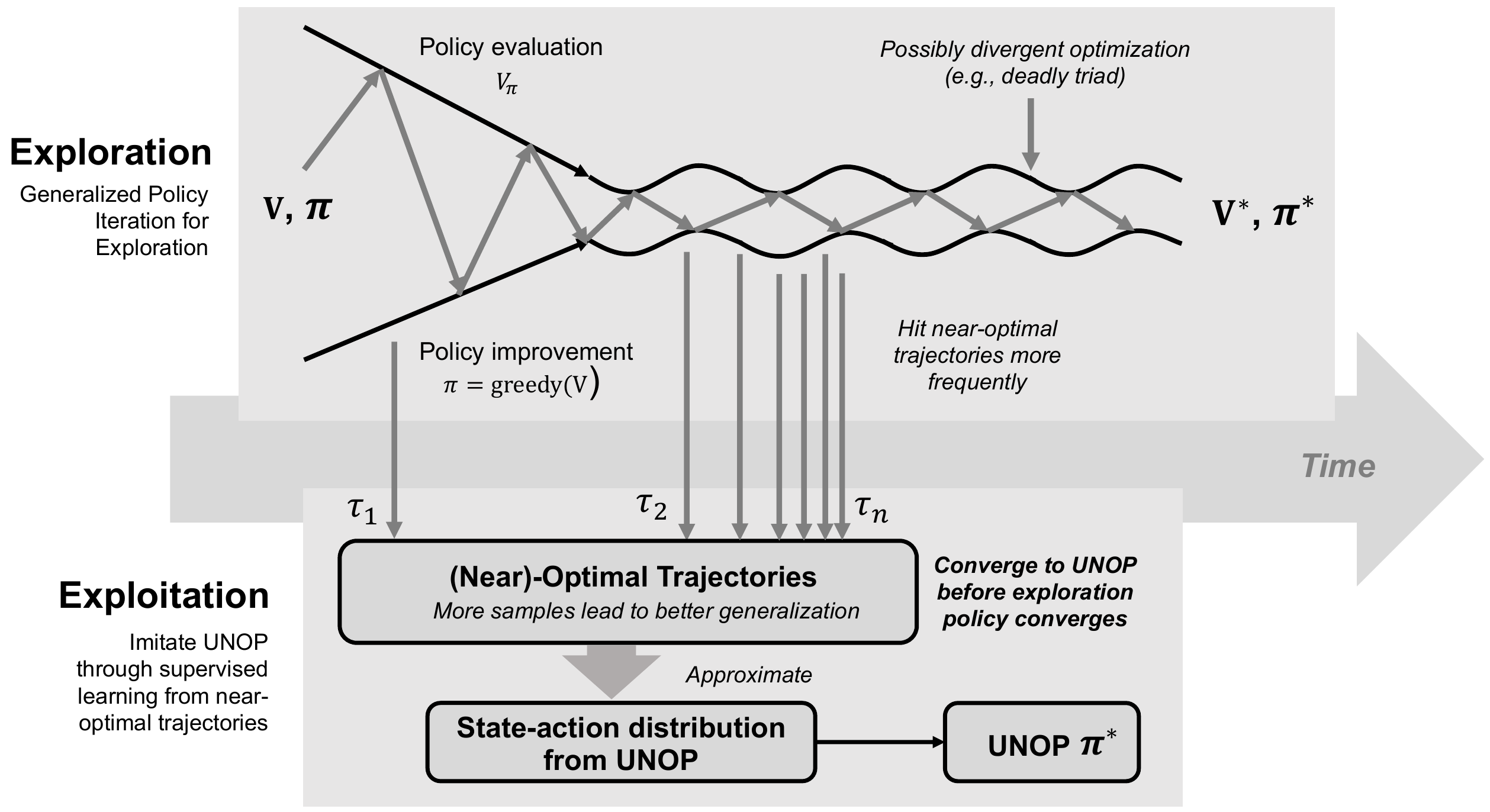}
\vspace{-1em} 
	\caption{The off-policy learning as supervised learning framework for general policy gradient methods.}
\vspace{-2em}
\label{fig:gpioff}
\end{figure*}

Based on the discussions in Section~\ref{sec:off_policy_learning_as_supervise_learning},
we exploit the advantage of reducing RL into supervised learning via a proposed
two-stages off-policy learning framework. As we illustrated in Figure~\ref{fig:gpioff},
the proposed framework contains the following two stages: 

\textbf{Generalized Policy Iteration for Exploration.}
The goal of the exploration stage is to collect different near-optimal
trajectories as frequently as possible. Under the off-policy framework, the
exploration agent and the learning agent can be separated. Therefore, any
existing RL algorithm can be used during the exploration. The
principle of this framework is using the most advanced RL agents
as an exploration strategy in order to collect more near-optimal trajectories
and leave the policy learning to the supervision stage.

\textbf{Supervision.} In this stage, we imitate the uniformly near-optimal policy, UNOP (Def~\ref{def:uop}). Although we have no access to the UNOP,
we can approximate the state-action distribution from UNOP by collecting the near-optimal trajectories only. The near-optimal samples are constructed online 
and we are not given any expert demonstration or expert policy beforehand. 
This step provides a sample-efficient approach to conduct exploitation, which 
enjoys the superiority of stability (Figure~\ref{fig:pong}), variance reduction (Corollary \ref{cor:pgvr}), and optimality preserving (Theorem~\ref{th:ltpt}).

The two-stage algorithmic framework can be directly incorporated in RPG and LPG to 
improve sample efficiency. The implementation of RPG is given in Algorithm~\ref{alg:rankingrl}, 
and LPG follows the same procedure except for the difference in the loss function. 
The main requirement of Alg.~\ref{alg:rankingrl} is on the exploration efficiency and
the MDP structure. During the exploration stage, a sufficient amount of the different near-optimal 
trajectories need to be collected for constructing a representative supervised learning training dataset.
Theoretically, this requirement always holds [see Appendix
Section~\ref{sec:lemma:op}, Lemma~\ref{lemma:hot}], while the number of episodes explored
could be prohibitively large, which makes this algorithm sample-inefficient. 
This could be a practical concern of the proposed algorithm. However, 
according to our extensive empirical observations, 
we notice that long before the value function based state-of-the-art 
converges to near-optimal performance, enough amount of near-optimal trajectories are already explored.

Therefore, we point out that instead of estimating optimal action value functions and then
choosing action greedily, 
using value function to facilitate the exploration and imitating UNOP is a
more sample-efficient approach.  As illustrated in Figure~\ref{fig:gpioff},
value based methods with off-policy learning, bootstrapping, and function
approximation could lead to a divergent optimization~\citep[Chap.
11]{sutton2018reinforcement}. In contrast to resolving the instability,  
we circumvent this issue via constructing a
stationary target using the samples from (near)-optimal trajectories, and
perform imitation learning.   This two-stage approach can avoid the extensive
exploration of the suboptimal state-action space and reduce the substantial
number of samples needed for estimating optimal action values. In the MDP
where we have a high probability of hitting the near-optimal trajectories (such as
\textsc{Pong}), the supervision stage can further facilitate the exploration. 
It should be emphasized that our work
focuses on improving the sample-efficiency through more effective exploitation,
rather than developing novel exploration method. 
% Please refer to the Appendix Section~\ref{subsec:app:ee}
% for more discussion on exploration efficiency.

\begin{algorithm}[h!]\small
\begin{algorithmic}[1]
\REQUIRE The near-optimal trajectory reward threshold $c$, the number of maximal training episodes $N_{max}$.
Maximum number of time steps in each episode $T$, and batch size $b$.  %The exploration threshold $t_e$
\WHILE { episode $< N_{\max}$}
\REPEAT
\STATE Retrieve state $s_t$ and sample action $a_t$ by the specified exploration agent 
       (can be random, $\epsilon$-greedy, or any RL algorithms).
\STATE Collect the experience $e_t = (s_t, a_t, r_t, s_{t+1})$ and store to the replay buffer.
\STATE $t= t +1$
\IF {t \% update step == 0}
\STATE Sample a batch of experience $\{e_j\}_{j=1}^b$ from the near-optimal replay buffer.
\STATE Update $\pi_{\theta}$ based on the hinge loss~\eq{\ref{eq:rankingv2}} for RPG.
\STATE Update the exploration agent using samples from the regular replay buffer (In simple MDPs such as 
\textsc{Pong} where near-optimal trajectories are encountered frequently, near-optimal replay buffer can be used to update the exploration agent).
\ENDIF
\UNTIL{terminal $s_t$ or $t- t_{\text{start}} >=T$} % t_max explained in 4.3 2nd
\IF {return $\sum_{t=1}^Tr_t \geq c$}
\STATE Take the near-optimal trajectory $e_t, t=1,...,T$ in the latest episode 
from the regular replay buffer, and insert the trajectory into the near-optimal replay buffer.
\ENDIF
\IF {$t$ \% evaluation step == 0}
\STATE Evaluate the RPG agent by greedily choosing the action. If the best performance is reached, then stop training.
\ENDIF
\ENDWHILE
\end{algorithmic}
\caption{Off-Policy Learning for Ranking Policy Gradient (RPG)}% $S$ = cA3C($\theta, \Theta_v$
\label{alg:rankingrl}
\end{algorithm}

% !TEX ROOT=./main.tex

\section{Sample Complexity and Generalization Performance}
\label{sec:samcomplex}
% summary 
In this section, we present a theoretical analysis on the sample complexity of
RPG with off-policy learning framework in Section~\ref{sec:algorithm}. 
The analysis leverages the results from the Probably Approximately Correct (PAC) framework, 
and provides an alternative approach to quantify sample complexity of RL 
from the perspective of the connection between RL and SL (see Theorem~\ref{th:ltpt}),
which is significantly different from the existing approaches that use value function estimations~\citep{kakade2003sample,strehl2006pac,kearns2000approximate,strehl2009reinforcement,krishnamurthy2016pac,jiang2017contextual,jiang2018open,zanette2019tighter}. We show that the sample complexity of RPG (Theorem~\ref{th:rlgp})
depends on the properties of MDP such as horizon, action space, dynamics, and the 
generalization performance of supervised learning. It is worth mentioning that the sample
complexity of RPG has no linear dependence on the state-space, which makes it suitable for large-scale MDPs. 
Moreover, we also provide a formal quantitative definition (Def~\ref{def:ee}) on 
the exploration efficiency of RL. 

% Road map
% Recall that our ultimate goal is to study the generalization performance of RL, more concretely,
% what is the generalization performance of the RPG agents after interacting with
% the environment a certain amount of episodes. 
% many samples we need to collect in order 
% to achieve good performance with high probability.
Corresponding to the two-stage framework in Section~\ref{sec:algorithm}, 
the sample complexity of RPG also splits into two problems:
\begin{itemize}
	\item \textbf{Learning efficiency:} How many state-action pairs from the uniformly optimal policy do we need to collect, in order to achieve good generalization performance in RL?
	% \item \textbf{Generalization in RL:} How do we connect the generalization performance in supervised 
	% learning into the lower bound of return in RL?
	\item \textbf{Exploration efficiency:} For a certain type of MDPs, what is the 
	probability of collecting $n$ training samples (state-action pairs from the 
	uniformly near-optimal policy) in the first $k$ episodes in the worst case? 
	This question leads to a quantitative evaluation metric of different exploration 
	methods. 
\end{itemize}
The first stage is resolved by Theorem~\ref{th:rlgp}, which connects
the lower bound of the generalization
performance of RL to the supervised learning generalization performance. 
Then we discuss the exploration efficiency of the worst case performance
for a binary tree MDP in Lemma~\ref{lemma:osr}. Jointly, we show how 
to link the two stages to give a general theorem that studies how many 
samples we need to collect in order to achieve certain performance in RL.

%\subsection{Notations and Problem Setting} 
% JIAYU NOTES: We should not have two notations sections and two problem settings in 
% the same paper. 
In this section, we restrict our discussion on the MDPs with a fixed action 
space and assume the existence of deterministic optimal policy. The policy
$\pi = \hat{h} = \argmin_{h \in \cH} \hat{\epsilon}(h)$ corresponds to the
empirical risk minimizer (ERM) in the learning theory literature, which 
is the policy we obtained through learning on the training samples. 
$\cH$ denotes the hypothesis class from where we are selecting the 
policy. Given a hypothesis (policy) $h$, the empirical risk is given by 
$\hat{\epsilon}(h) = \sum_{i=1}^n \frac{1}{n} \textbf{1}\{ h(s_i) \neq a_i\}$. 
Without loss
of generosity, we can normalize the reward function to set the upper bound of trajectory reward 
equals to one ($i.e., R_{\max} = 1$), similar
to the assumption in~\citep{jiang2018open}. 
It is worth noting that the training samples are generated \emph{i.i.d.} from an unknown
distribution, which is perhaps the most important assumption in the statistical 
learning theory. \emph{i.i.d.} is satisfied in this case since the state action pairs (training samples)
are collected by filtering the samples during the learning stage, and we can 
manually manipulate the samples to follow the distribution of UOP (Def~\ref{def:uop}) 
by only storing the unique near-optimal trajectories. 

% Based on Corollary~\ref{cor:rppg}, we convert the reinforcement learning
% problem into a classification problem. We will re-state the problem in
% the context of statistical learning theory and analysis the generalization 
% performance of this problem. 
\subsection{Supervision stage: Learning efficiency}
To simplify the presentation, we restrict our discussion on the finite
hypothesis class (i.e. $|\cH| < \infty$) since this dependence is
not germane to our discussion. However, we note that the theoretical framework in this section
is not limited to the finite hypothesis class. 
For example, we can simply use the VC dimension~\citep{vapnik2006estimation} or the Rademacher
complexity~\citep{bartlett2002rademacher} to generalize our discussion to the infinite hypothesis class, such as
neural networks. For completeness, we first revisit the sample complexity result
from the PAC learning in the context of RL.  
\begin{lemma}[Supervised Learning Sample Complexity~\citep{mohri2018foundations}]
% theorem 2.2 in foundations of ML. also see Andrew Ng's Note 4.
	Let $|\cH| < \infty$, and let $\delta, \gamma$ be fixed, the inequality
	$\epsilon(\hat{h}) \leq  (\min_{h\in\cH}\epsilon(h))  + 2\gamma = \eta$ holds with probability at least
    $1 - \delta$, when the training set size $n$ satisfies:
    \begin{align}
    	n \geq \frac{1}{2\gamma^2} \log\frac{2|\cH|}{\delta},
    \end{align}
    \label{lemma:samplecomplexiy}
    where the generalization error (expected risk) of a hypothesis $\hat{h}$ is defined as:
\begin{align*}
\epsilon(\hat{h}) = \sum_{s, a}\nolimits p_{\pi_*}(s,a) \textbf{1} \left\{ \hat{h}(s) \neq a\right\}.
\end{align*}
\end{lemma}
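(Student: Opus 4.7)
The plan is to follow the classical PAC sample complexity argument via Hoeffding's inequality combined with a union bound, since the lemma is essentially the textbook ERM bound applied to the indicator loss $\mathbf{1}\{h(s)\neq a\}$ under the distribution $p_{\pi_*}(s,a)$. The $n$ training samples are \emph{i.i.d.}\ draws from $p_{\pi_*}(s,a)$ (this is explicitly assumed in the surrounding text via the construction of the near-optimal replay buffer), and for any fixed $h$ the random variables $X_i = \mathbf{1}\{h(s_i)\neq a_i\}$ are i.i.d.\ Bernoulli with mean $\epsilon(h)$ and range $[0,1]$, so Hoeffding's inequality applies directly.

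First I would fix an arbitrary $h\in\cH$ and apply Hoeffding to the $X_i$'s, yielding
\begin{align*}
\Pr\!\left[\,|\hat\epsilon(h)-\epsilon(h)|>\gamma\,\right]\;\le\;2\exp(-2n\gamma^{2}).
\end{align*}
Next I would take a union bound over all $h\in\cH$ (finite by assumption) to obtain the uniform convergence statement
\begin{align*}
\Pr\!\left[\,\exists\, h\in\cH:\ |\hat\epsilon(h)-\epsilon(h)|>\gamma\,\right]\;\le\;2|\cH|\exp(-2n\gamma^{2}).
\end{align*}

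Conditioning on the complementary event, I would run the standard ERM sandwich argument: let $h^{*}=\arg\min_{h\in\cH}\epsilon(h)$ and $\hat h=\arg\min_{h\in\cH}\hat\epsilon(h)$. Then
\begin{align*}
\epsilon(\hat h)\;\le\;\hat\epsilon(\hat h)+\gamma\;\le\;\hat\epsilon(h^{*})+\gamma\;\le\;\epsilon(h^{*})+2\gamma,
\end{align*}
where the first and third inequalities use uniform convergence and the middle one uses the definition of $\hat h$ as empirical risk minimizer. This yields the desired bound $\epsilon(\hat h)\le\min_{h\in\cH}\epsilon(h)+2\gamma=\eta$.

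Finally, I would set the failure probability to $\delta$ by requiring $2|\cH|\exp(-2n\gamma^{2})\le\delta$ and solving for $n$, which immediately gives $n\ge \tfrac{1}{2\gamma^{2}}\log\tfrac{2|\cH|}{\delta}$. There is no real obstacle in this proof beyond verifying the prerequisites: the boundedness of the 0/1 indicator loss (trivial) and the i.i.d.\ nature of the training set, which is ensured by the supervision stage of the framework that only retains unique near-optimal trajectories so as to sample from $p_{\pi_*}$. The lemma itself is stated as a direct quote from~\citep{mohri2018foundations}, so the expected role of the proof is to record these steps for self-containedness rather than to introduce new ideas.
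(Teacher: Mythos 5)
Your proof is correct and is exactly the standard Hoeffding--union-bound--ERM-sandwich argument that the paper implicitly relies on: the paper does not prove this lemma itself but cites it directly from \citet{mohri2018foundations}, and your derivation (including the identification of the $n \geq \frac{1}{2\gamma^2}\log\frac{2|\cH|}{\delta}$ threshold from $2|\cH|\exp(-2n\gamma^2)\leq\delta$) reproduces the textbook proof faithfully. Your additional remark about verifying the \emph{i.i.d.}\ prerequisite via the replay-buffer construction matches the justification the paper gives in Section~\ref{sec:samcomplex}, so nothing is missing.
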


\begin{condition}[Action values]
	We restrict the action values of RPG in certain range, i.e., $\lambda_i \in [0, c_q]$, where $c_q$ is a positive constant. 
	 \label{ass:av} 
\end{condition}
This condition can be easily satisfied, for example, 
we can use a sigmoid to cast the action values into $[0, 1]$. We can 
impose this constraint since in RPG we only focus on the relative relationship of 
action values. Given the mild condition and established on the prior work
in statistical learning theory, we introduce the following results that
connect the supervised learning and reinforcement learning.

\begin{theorem}[Generalization Performance]
Given a MDP where the UOP (Def~\ref{def:uop}) is deterministic, let $|\cH|$ denote the size of hypothesis space, and $\delta, n$ be fixed, 
% $(\min_{h\in\cH}\epsilon(h))  + 2\gamma = \eta$. 
the following inequality holds with probability at least $1 - \delta$:
\begin{align*}
& \sum_{\tau}\nolimits p_{\theta}(\tau)r(\tau) \geq  D(1 + e)^{\eta(1-m)T},
\end{align*}
where $D= |\cT|\left(\Pi_{\tau\in \cT} p_d(\tau)\right)^{\frac{1}{|\cT|}}$, $p_d(\tau) = p(s_1)\Pi_{t=1}^{T}p(s_{t+1}|s_{t}, a_{t})$ denotes the environment dynamics. $\eta$ is the upper bound of supervised learning generalization performance, defined as $\eta = (\min_{h\in\cH}\epsilon(h))  + 2\sqrt{\frac{1}{2n}\log \frac{2|\cH|}{\delta}} = 2\sqrt{\frac{1}{2n}\log \frac{2|\cH|}{\delta}} $.
% The number of state action pairs (training sample size $n$) from the optimal uniformly policy needs to satisfy:
%     \begin{equation}
%     	n \geq \frac{1}{2\gamma^2} \log\frac{2|\cH|}{\delta}
%     \end{equation}
% where $p_d(\tau) = p(s_1)\Pi_{t=1}^{T}p(s_{t+1}|s_{t}, a_{t})$ denotes the environment dynamics and 
% it can not be improved by our learning algorithm. 
\label{th:rlgp}
\end{theorem}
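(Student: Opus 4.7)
The plan is to reduce the return bound to a log-likelihood bound and then combine the supervised-learning PAC bound (Lemma~\ref{lemma:samplecomplexiy}) with the pairwise ranking parameterization. The chain of reductions goes: return $\to$ sum over near-optimal trajectories $\to$ geometric mean of trajectory probabilities via AM--GM $\to$ expected log-likelihood of state-action pairs under the uniformly optimal policy $\pi_*$ $\to$ bound from the PAC 0--1 error and the sigmoid pairwise factors.

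First I would exploit the non-negativity of the reward and the normalization $R_{\max}=1$: with $c=R_{\max}$, every $\tau\in\cT$ satisfies $r(\tau)=1$, so $\sum_\tau p_\theta(\tau)r(\tau)\geq \sum_{\tau\in\cT}p_\theta(\tau)$. Next I apply AM--GM to the $|\cT|$ positive numbers $\{p_\theta(\tau)\}_{\tau\in\cT}$ to obtain $\sum_{\tau\in\cT}p_\theta(\tau)\geq |\cT|\bigl(\prod_{\tau\in\cT}p_\theta(\tau)\bigr)^{1/|\cT|}$. Factoring $p_\theta(\tau)=p_d(\tau)\prod_{t=1}^T\pi_\theta(a_t\mid s_t)$ separates the environment part, which is exactly the constant $D=|\cT|\bigl(\prod_{\tau\in\cT}p_d(\tau)\bigr)^{1/|\cT|}$, from the policy part $\bigl(\prod_{\tau\in\cT}\prod_t\pi_\theta(a_t\mid s_t)\bigr)^{1/|\cT|}$. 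Because $\pi_*$ is uniform on $\cT$ (i.e.\ $p_{\pi_*}(\tau)=1/|\cT|$), the logarithm of this policy factor equals $T\sum_{s,a}p_{\pi_*}(s,a)\log\pi_\theta(a\mid s)$, exactly the quantity showing up in Theorem~\ref{th:ltpt}. The task is therefore reduced to lower-bounding this expected log-likelihood.

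Second I invoke the pairwise structure $\log\pi_\theta(a_i\mid s)=\sum_{j\neq i}\log p_{ij}$ with $p_{ij}=\sigma(\lambda_i-\lambda_j)$ and Condition~\ref{ass:av} instantiated at $c_q=1$. Under this range, each $p_{ij}\in[1/(1+e),\,e/(1+e)]$, so the worst-case per-pair contribution is $\log p_{ij}\geq -\log(1+e)$. Whenever the ERM hypothesis $\hat h$ correctly classifies $(s,a)$, $\lambda_a$ is the argmax and every $p_{aj}\geq 1/2$, so the log contribution of such pairs can be absorbed relative to the worst-case constant $\log(1+e)$; whenever $\hat h$ misclassifies, each of the $m-1$ pairs contributes at most $-\log(1+e)$. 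Lemma~\ref{lemma:samplecomplexiy} with $\gamma=\sqrt{(1/2n)\log(2|\cH|/\delta)}$ guarantees that, with probability at least $1-\delta$, the $p_{\pi_*}$-mass of misclassified pairs is at most $\eta$. Combining these two facts gives $\sum_{s,a}p_{\pi_*}(s,a)\log\pi_\theta(a\mid s)\geq -\eta(m-1)\log(1+e)$; exponentiating and multiplying through by $T$ yields $\bigl(\prod_{\tau\in\cT}\prod_t\pi_\theta(a_t\mid s_t)\bigr)^{1/|\cT|}\geq(1+e)^{\eta(1-m)T}$, and multiplying by $D$ finishes the proof.

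The main obstacle is the last translation step: the PAC bound only controls the 0--1 error of $\hat h$, whereas the AM--GM step leaves me with an expected log-likelihood. Because $\log\pi_\theta(a\mid s)\leq 0$ under any pairwise ranking parameterization, I cannot just discard the contribution of correctly classified pairs; the argument has to carefully attribute the worst-case per-pair penalty $-\log(1+e)$ only to the $\eta$-fraction of misclassified states and absorb the tighter $p_{aj}\geq 1/2$ bound available on correctly classified states into the same constant. Matching the base $(1+e)$ in the conclusion is precisely what forces the choice $c_q=1$ in Condition~\ref{ass:av}, and making this bookkeeping airtight is the delicate point of the proof.
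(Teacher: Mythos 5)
Your overall route is the same as the paper's: Jensen/AM--GM on the near-optimal trajectory probabilities, factoring the dynamics out into $D$, reducing the policy factor to $T\sum_{s,a}p_{\pi_*}(s,a)\log\pi_{\theta}(a|s)$, and then bounding that expected log-likelihood via the pairwise factors $p_{ij}$ together with Lemma~\ref{lemma:samplecomplexiy}. Up to and including the reduction to the expected log-likelihood, your argument matches the paper's proof of Theorem~\ref{th:rlgp} step for step.

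The gap is exactly where you flag it, and the fix you sketch does not work. On a correctly classified state you only get $p_{aj}\geq 1/2$, hence $\log\pi_{\theta}(a|s)=\sum_{j\neq a}\log p_{aj}\geq -(m-1)\log 2$, which is strictly negative and cannot be ``absorbed'' into the penalty charged to the misclassified states: carrying it through gives $\sum_{s,a}p_{\pi_*}(s,a)\log\pi_{\theta}(a|s)\geq -(1-\eta)(m-1)\log 2-\eta(m-1)\log(1+e)$, and after exponentiating you obtain only the weaker bound $D\,2^{(1-\eta)(1-m)T}(1+e)^{\eta(1-m)T}$, which sits strictly below the claimed $D(1+e)^{\eta(1-m)T}$ whenever $\eta<1$ and $m>1$. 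The extra factor $2^{(1-\eta)(1-m)T}$ decays exponentially in $T$ and $m$ just like the main term, so no choice of $c_q$ in Condition~\ref{ass:av} makes it vanish. The paper closes this hole by a much stronger assertion: in its proof it simply declares $\pi_{\theta}(a|s)=1$ for every correctly classified pair, so those terms contribute exactly zero to $L$ and only the $\eta$-mass of misclassified pairs is penalized. (That assertion is itself an idealization --- a pairwise ranking policy with bounded $\lambda$-values satisfies $\pi(a_i|s)\leq(e/(1+e))^{m-1}<1$ --- but it is the step the stated bound actually rests on.) As written, your argument proves only the weaker inequality; to recover the theorem's statement you would need either to adopt the paper's zero-loss-on-correct-pairs idealization explicitly, or to restate the conclusion with the extra $2^{(1-\eta)(1-m)T}$ factor.
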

 
% \begin{corollary}[The range of lower bound]
% 	The lower bound $L_p \in (0, 2^{T(1-m)}\bar{p}_d|\cT|]$, where
% 	$\bar{p}_d = \frac{1}{|\cT|}\sum_{\tau\in \cT} p_d(\tau)$ is the averaged
% 	dynamics of all near-optimal trajectories. \lkxcom{change this to discuss D}
% 	\label{cor:tlb}
% \end{corollary}

\begin{corollary}[Sample Complexity]
Given a MDP where the UOP (Def~\ref{def:uop}) is deterministic, let $|\cH|$ denotes the size of 
hypothesis space, and let $\delta$ be fixed.
% $(\min_{h\in\cH}\epsilon(h))  + 2\gamma = \eta$. 
Then for the following inequality to hold with probability at least $1 - \delta$:
\begin{align*}
& \sum\nolimits_{\tau} p_{\theta}(\tau)r(\tau)  \geq 1 - \epsilon,
\end{align*}
it suffices that the number of state action pairs (training sample size $n$) 
from the uniformly optimal policy satisfies:
    \begin{equation}
    	n \geq \frac{2(m-1)^2T^2}{(\log_{1+e}\frac{D}{1- \epsilon})^2} \log\frac{2|\cH|}{\delta}
    	= \mathcal O\left(\frac{m^2T^2}{\left(\log\frac{D}{1- \epsilon}\right)^2} \log\frac{|\cH|}{\delta}\right). \nonumber
    \end{equation}
% \lkxcom{$D \neq 1$, $\epsilon \geq 1 - D$}
\label{coro:rlgp}
\end{corollary}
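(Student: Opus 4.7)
The plan is to treat Corollary~\ref{coro:rlgp} as a direct algebraic inversion of Theorem~\ref{th:rlgp}: starting from the probabilistic lower bound already provided, I would solve for the training sample size $n$ needed to force that lower bound above $1-\epsilon$. Concretely, on the high-probability event of Theorem~\ref{th:rlgp} we have $\sum_{\tau}p_\theta(\tau)r(\tau) \geq D(1+e)^{\eta(1-m)T}$, so it suffices to choose $n$ such that $D(1+e)^{\eta(1-m)T} \geq 1-\epsilon$. That is the single target inequality I would manipulate throughout.

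The key steps, in order, are as follows. First, I would note that $m\geq 1$ and $T\geq 1$ make the exponent $(1-m)T$ nonpositive, so the map $\eta \mapsto D(1+e)^{\eta(1-m)T}$ is nonincreasing, and the target inequality becomes $\eta\,(m-1)T \leq \log_{1+e}(D/(1-\epsilon))$ after taking a base-$(1+e)$ logarithm on both sides. Second, I would substitute the explicit value $\eta = 2\sqrt{\frac{1}{2n}\log\frac{2|\cH|}{\delta}}$ coming from Theorem~\ref{th:rlgp} (where I used that the realizability implicit in ``the UOP is deterministic and lies in $\cH$'' forces $\min_{h\in\cH}\epsilon(h)=0$). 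Third, I would square both sides and rearrange for $n$, which produces
\begin{equation*}
n \;\geq\; \frac{2(m-1)^2 T^2}{\left(\log_{1+e}\frac{D}{1-\epsilon}\right)^2}\,\log\frac{2|\cH|}{\delta},
\end{equation*}
exactly matching the claimed bound, and the $\mathcal{O}$-form follows by absorbing constants.

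Finally, I would verify the probabilistic quantifier: the high-probability event under which Theorem~\ref{th:rlgp} holds has measure at least $1-\delta$, and on that same event the derived inequality $\sum_\tau p_\theta(\tau)r(\tau)\geq 1-\epsilon$ holds, so no union bound or additional probabilistic bookkeeping is needed. The main (mild) obstacle I foresee is not the algebra but a consistency check on the parameter regime, namely that the bound is only meaningful when $D\geq 1-\epsilon$, equivalently $\epsilon \geq 1-D$, since otherwise the argument of $\log_{1+e}(D/(1-\epsilon))$ is at most $1$ and the target inequality is unattainable even in the limit $n\to\infty$; I would flag this explicitly as the allowed range of $\epsilon$ and note that it is a structural property of the MDP (captured by $D$) rather than a flaw in the reduction. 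Once that caveat is stated, the rest is mechanical and completes the proof.
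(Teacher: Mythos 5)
Your proposal is correct and follows essentially the same route as the paper: the paper likewise inverts the bound $\sum_\tau p_\theta(\tau)r(\tau)\geq D(1+e)^{\eta(1-m)T}$ from Theorem~\ref{th:rlgp} by setting $D(1+e)^{\eta(1-m)T}=1-\epsilon$, solving for $\eta$, invoking the realizability assumption (Assumption~\ref{ass:realiz}) so that $\min_{h\in\cH}\epsilon(h)=0$, and then applying Lemma~\ref{lemma:samplecomplexiy} with $\gamma=\eta/2$ to obtain the stated $n$. Your added remark that the bound is vacuous unless $D\geq 1-\epsilon$ is a legitimate caveat the paper does not state explicitly, but it does not change the argument.
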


The proofs of Theorem~\ref{th:rlgp} and Corollary~\ref{coro:rlgp} 
are provided in Appendix~\ref{sec:app:subthrlgp}. Theorem~\ref{th:rlgp}
establishes the connection between the generalization performance of RL and the sample 
complexity of supervised learning. The lower bound of generalization performance
decreases exponentially with respect to the horizon $T$ and action space
dimension $m$. This is aligned with our empirical observation that it is more
difficult to learn the MDPs with a longer horizon and/or a larger action space. 
Furthermore, the generalization performance has a linear dependence on $D$, the
transition probability of optimal trajectories. Therefore, $T$, $m$, and $D$
jointly determines the difficulty of learning of the given MDP. As pointed out by
Corollary~\ref{coro:rlgp}, the smaller the $D$ is,  the higher the sample
complexity. Note that $T$, $m$, and $D$ all characterize intrinsic properties of MDPs, which cannot be improved by our learning algorithms. 
One advantage of RPG is that its sample complexity has no dependence on the
state space, which enables the RPG to resolve large-scale complicated MDPs,
as demonstrated in our experiments.  In the supervision stage, our goal is the same as
in the traditional supervised learning: to achieve better generalization performance
$\eta$. 

% \begin{remark}
% The Theorem~\ref{th:rlgp} states the following indications: 
% 	\begin{itemize}
% 		\item The lower bound decreases exponentially
% 		with respect to the $T$ since $(1-m)(\frac{3}{2}\eta(\theta) + \log2) < 0$, which means the
% 		longer the horizon, the harder for achieving a good performance.  
% 		\item The lower bound increases approximately linearly with respect to the number of optimal
% 		trajectories. 
% 		\item $|\cT|\left(\Pi_{\tau\in \cT} p_d(\tau)\right)^{\frac{1}{|\cT|}}$ is not associated 
% 		with the learning algorithm and it is fixed once the MDP is given. This term quantifies 
% 		the difficulty of mastering the MDP and it can not be improved by the learning algorithm. 
% 		\item $\exp((1-m)(\frac{3}{2}\eta + \log2)T)$ is related to our learning algorithm. 
% 		For a given MDP, the action space dimension $m$ is fixed, the larger the action space, 
% 		the harder to solve the MDP. 
% 		Therefore, we need to decrease $\eta$, which is the generalization performance in 
% 		the supervised learning setting. If we have the perfect learning algorithm having zero
% 		generalization error ($\eta = 0$), we reach the maximal possible lower bound of the return.
% 	\end{itemize}
% \end{remark}

\subsection{Exploration stage: Exploration efficiency}

The exploration efficiency is highly related to the MDP properties and the
exploration strategy. To provide interpretation on how the MDP properties
(state space dimension, action space dimension, horizon) affect the sample
complexity through exploration efficiency, we characterize a simplified MDP as in~\citep{sun2017deeply}
, in which we explicitly compute the exploration efficiency of a stationary policy
(random exploration), as shown in Figure~\ref{fig:mdps}. 

\begin{definition}[Exploration Efficiency]
	We define the exploration efficiency of a certain exploration algorithm ($A$) 
	within a MDP ($\cM$) as the probability of 
	sampling $i$ distinct optimal trajectories in the first $k$ episodes.
	We denote the exploration efficiency as $p_{A, \cM}(n_{traj} \geq i|k)$.
    When $\cM$, $k$, $i$ and optimality threshold $c$ are fixed, 
    the higher the $p_{A, \cM}(n_{traj} \geq i|k)$, the 
    better the exploration efficiency. We use 
    $n_{traj}$ to denote the number of near-optimal trajectories in this subsection. 
    If the exploration algorithm derives a series of learning policies, then we have
    $p_{A, \cM}(n_{traj} \geq i|k) = p_{\{\pi_i\}_{i=0}^t, \cM}(n_{traj} \geq i|k) $, where $t$
    is the number of steps the algorithm $A$ updated the policy. 
    If we would like to study the exploration efficiency of a stationary policy, then we have 
    $p_{A, \cM}(n_{traj} \geq i|k)= p_{\pi, \cM}(n_{traj} \geq i|k)$. 
    \label{def:ee}
\end{definition}

\begin{definition}[Expected Exploration Efficiency]
The expected exploration efficiency of a certain exploration algorithm ($A$) 
within a MDP ($\cM$) is defined as:
$$E_{A, k, \cM} = \sum\nolimits_{i=0}^k p_{A, \cM}(n_{traj} = i|k) i.$$ 
\label{def:eee}	
\end{definition}

The definitions provide a quantitative metric to evaluate 
the quality of exploration. Intuitively, the quality of exploration 
should be determined by how frequently it will hit different 
good trajectories. We use Def~\ref{def:ee} for theoretical analysis and
Def~\ref{def:eee} for practical evaluation.

% \begin{figure*}
% \begin{tabular}{cc}
% 	\includegraphics[width=0.45\textwidth]{} & 
% 	\includegraphics[width=0.48\textwidth]{}\\
% 	(a) Type I MDP ($\cM_1$)  & (b) Type I MDP ($\cM_2$)
% \end{tabular}
% \caption{The two types of abstractions of MDPs. (a). An illustration of 
% type I MDP with one initial state and horizon is two time step. 
% (b). An illustration of type II MDP with state space equals to four. }
% \label{fig:mdps}
% \end{figure*}
\begin{figure*}
\centering
	\includegraphics[width=0.5\textwidth]{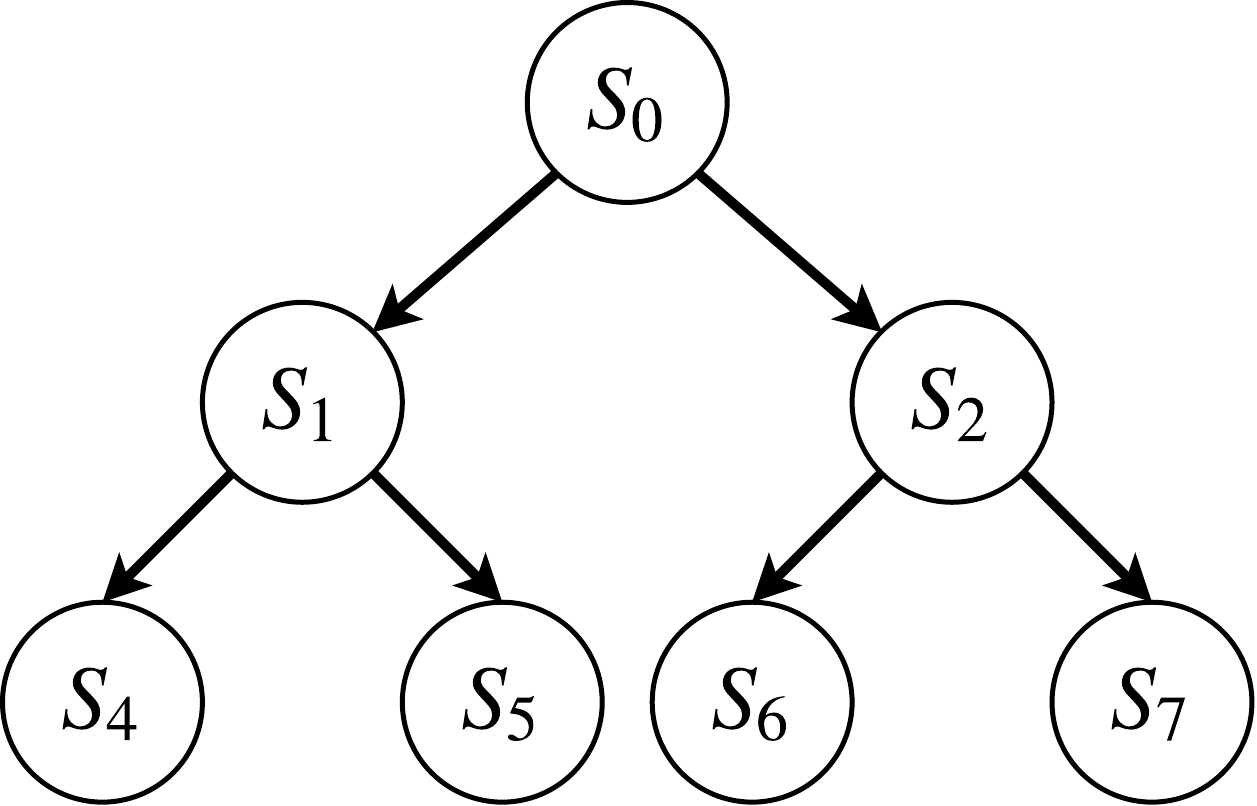} 
\caption{
The binary tree structure MDP ($\cM_1$) with one initial state, similar as discussed
in~\citep{sun2017deeply}.
In this subsection, we focus on the MDPs that have no duplicated states. The initial state distribution of the MDP is uniform and the environment dynamics is deterministic. 
For $\cM_1$ the worst case exploration is random exploration and each trajectory
will be visited at same probability under random exploration. Note that
in this type of MDP, the Assumption~\ref{ass:uot} is satisfied.}
\label{fig:mdps}
\end{figure*}

\begin{lemma}[The Exploration Efficiency of Random Policy]
The Exploration Efficiency of random exploration policy 
in a binary tree MDP ($\cM_1$) is given as: % TWO MDPS?
\begin{align*}
	p_{\pi_r, \cM}(n_{traj} \geq i|k)  = 1- \sum_{i'=0}^{i-1}\nolimits C_{|\cT|}^{i'} \frac{\sum_{j=0}^{i'}(-1)^j C_{i'}^j( N - |\cT| + i' -j)^k }{N^k},
\end{align*}
where $N$ denotes the total number of different trajectories in the MDP. 
In binary tree MDP $\cM_1$, $N = |\cS_0||\cA|^T$, where the $|\cS_0|$ denotes 
the number of distinct initial states.
% In type II MDP $\cM_2$, $N = (|\cS||\cA|)^T$. 
$|\cT|$ denotes the number of optimal trajectories. $\pi_r$ denotes the 
random exploration policy, which means the probability of hitting each trajectory
in $\cM_1$ is equal. 
\label{lemma:osr}
\end{lemma}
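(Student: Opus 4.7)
The plan is to reduce the question to a standard balls-into-boxes calculation and then apply inclusion--exclusion. First I would exploit the assumed structure of $\cM_1$ (no duplicated states, deterministic dynamics, uniform initial distribution) to argue that each of the $N=|\cS_0||\cA|^T$ distinct trajectories is reached by the random policy $\pi_r$ with exactly the same probability $1/N$ per episode, and that the $k$ episodes are i.i.d.\ draws from this uniform distribution over trajectories. This is the place where the ``binary tree, deterministic, no duplicated state'' hypothesis is actually used; everything afterwards is combinatorics.

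Let $X$ denote the number of \emph{distinct} near-optimal trajectories (elements of $\cT$) visited in the first $k$ episodes. Taking complements,
\begin{equation*}
p_{\pi_r,\cM}(n_{traj}\ge i\mid k) \;=\; 1-\sum_{i'=0}^{i-1}\Pr[X=i'],
\end{equation*}
so it suffices to compute $\Pr[X=i']$ for each $0\le i'\le i-1$. I would first pick which specific subset $S\subseteq\cT$ of size $i'$ is actually the set of visited optimal trajectories; by symmetry across $\cT$, each choice contributes equally, giving a factor of $\binom{|\cT|}{i'}$. For a fixed such $S$, the event $\{$visited optimal trajectories $=S\}$ is exactly $\{\text{all }k\text{ draws lie in }\cT^c\cup S\}\cap\{\text{every element of }S\text{ is drawn at least once}\}$.

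Conditioning (or rather, intersecting) these two events, I would apply inclusion--exclusion over which elements of $S$ are missed. Let $A_a$ be the event ``$a$ is never drawn and all draws lie in $\cT^c\cup S$,'' for $a\in S$. Then the intersection $\bigcap_{a\in T}A_a$ over $T\subseteq S$ with $|T|=j$ is simply the event that every one of the $k$ draws lands in the set $\cT^c\cup S\setminus T$ of size $N-|\cT|+i'-j$, which happens with probability $\bigl((N-|\cT|+i'-j)/N\bigr)^k$ by independence and uniformity. Inclusion--exclusion then yields
\begin{equation*}
\Pr[X=i'] \;=\; \binom{|\cT|}{i'}\sum_{j=0}^{i'}(-1)^j\binom{i'}{j}\frac{(N-|\cT|+i'-j)^k}{N^k},
\end{equation*}
and substituting into the complement gives the stated formula.

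The only nontrivial step is keeping the inclusion--exclusion bookkeeping straight: one has to be careful that the ``all draws in $\cT^c\cup S$'' constraint is absorbed into the sizes $N-|\cT|+i'-j$ rather than appearing as a separate factor, and that the outer binomial coefficient $\binom{|\cT|}{i'}$ counts unordered choices of $S$ so one does not double count. Once that is set up correctly, each term is just a geometric-style probability that a uniform draw from $N$ outcomes misses a specified set, raised to the $k$-th power, matching $C_{|\cT|}^{i'}$ and $C_{i'}^{j}$ in the statement. No further machinery beyond binomial inclusion--exclusion is required.
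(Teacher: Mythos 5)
Your plan is correct and follows essentially the same route as the paper's proof: take the complement, write $\Pr[n_{traj}\ge i\mid k]$ as $1-\sum_{i'=0}^{i-1}\Pr[X=i']$, use symmetry to pull out the factor $C_{|\cT|}^{i'}$, and apply inclusion--exclusion over the missed elements of the chosen subset to get the $(N-|\cT|+i'-j)^k/N^k$ terms. In fact your write-up makes explicit the inclusion--exclusion bookkeeping that the paper only asserts, so nothing is missing.
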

The proof of Lemma~\ref{lemma:osr} is available in Appendix~\ref{sec:app:sublemmaosr}.

\subsection{Joint Analysis Combining Exploration and Supervision}
In this section, we jointly consider the learning efficiency and exploration efficiency 
to study the generalization performance. 
Concretely, we would like to study if we interact with 
the environment a certain number of episodes, 
what is the worst generalization performance we can expect with certain probability,
if RPG is applied. 
\begin{corollary}[RL Generalization Performance]
Given a MDP where the UOP (Def~\ref{def:uop}) is deterministic, let $|\cH|$ be the size of the hypothesis space, and let $\delta, n, k$ be fixed, 
the following inequality holds with probability at least
$1 - \delta'$: 
% The generalization performance is at least $O((1+e)^{(1- m)\eta T})$. i.e.,
\begin{align*}
& \sum_{\tau}\nolimits p_{\theta}(\tau)r(\tau) \geq D(1 + e)^{\eta(1-m)T},
\end{align*}
where $k$ is the number of episodes we have explored in the MDP,
$n$ is the number of distinct optimal state-action pairs we needed from the UOP (i.e., size of training data.). $n'$ denotes the number of distinct optimal state-action pairs
collected by the random exploration. $\eta = 2\sqrt{\frac{1}{2n}\log \frac{2|\cH|p_{\pi_r, \cM}(n' \geq n |k)}{p_{\pi_r, \cM}(n' \geq n |k) - 1 + \delta'}}$. 
\label{coro:esrlgp}
\end{corollary}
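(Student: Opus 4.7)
The plan is to compose the two independent probabilistic bounds, namely the supervised learning generalization bound from Theorem~\ref{th:rlgp} (which already converts SL risk to RL return) and the exploration efficiency bound from Lemma~\ref{lemma:osr} (which lower-bounds the probability of collecting enough distinct near-optimal state-action pairs), and then reparametrize so that the total failure probability is exactly $\delta'$. This will give the displayed inequality with an $\eta$ whose only new factor, compared with Corollary~\ref{coro:rlgp}, is a term depending on the exploration probability $p_{\pi_r,\cM}(n'\geq n\mid k)$.

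Concretely, first I would define two events: $B$, the event that after $k$ exploration episodes the agent has collected at least $n$ distinct state-action pairs from the UOP, and $A$, the event that the supervised learner trained on those samples satisfies $\sum_{\tau}p_{\theta}(\tau)r(\tau)\geq D(1+e)^{\eta(1-m)T}$. By Lemma~\ref{lemma:osr} we have $\Pr(B)=p_{\pi_r,\cM}(n'\geq n\mid k)$. Conditionally on $B$, because the stored trajectories can be subsampled so that the training set consists of $n$ i.i.d.\ draws from $p_{\pi_*}$ (the UOP is uniform over $\cT$, so distinct optimal trajectories give i.i.d.\ UOP samples as discussed in Section~\ref{sec:samcomplex}), Theorem~\ref{th:rlgp} applies verbatim with sample size $n$ and any chosen confidence parameter $\delta$, giving $\Pr(A\mid B)\geq 1-\delta$ with
\begin{equation*}
\eta \;=\; 2\sqrt{\frac{1}{2n}\log\frac{2|\cH|}{\delta}}.
\end{equation*}

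Next I would combine the two bounds. Since $\Pr(A)\geq \Pr(A\cap B)=\Pr(A\mid B)\Pr(B)\geq (1-\delta)\,p_{\pi_r,\cM}(n'\geq n\mid k)$, the target confidence $\Pr(A)\geq 1-\delta'$ is guaranteed whenever
\begin{equation*}
(1-\delta)\,p_{\pi_r,\cM}(n'\geq n\mid k) \;\geq\; 1-\delta',
\end{equation*}
i.e.\ $\delta \leq \frac{p_{\pi_r,\cM}(n'\geq n\mid k)-1+\delta'}{p_{\pi_r,\cM}(n'\geq n\mid k)}$. Taking equality and substituting into the formula for $\eta$ yields
\begin{equation*}
\eta \;=\; 2\sqrt{\frac{1}{2n}\log\frac{2|\cH|\,p_{\pi_r,\cM}(n'\geq n\mid k)}{p_{\pi_r,\cM}(n'\geq n\mid k)-1+\delta'}},
\end{equation*}
which is exactly the statement of the corollary.

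The main obstacle I anticipate is the i.i.d.\ justification in step two: Theorem~\ref{th:rlgp} is a PAC bound that assumes the training sample is an i.i.d.\ draw from the UOP's state-action distribution, whereas during exploration trajectories arrive sequentially and with correlations induced by the exploration policy. I would handle this by appealing to the construction in Section~\ref{sec:algorithm} (only unique near-optimal trajectories are retained in the near-optimal replay buffer, and the UOP is uniform over $\cT$ by Definition~\ref{def:uop}), so that conditioning on $B$ and subsampling $n$ items from the collected distinct near-optimal trajectories produces an i.i.d.\ batch from $p_{\pi_*}$. Once this subtlety is cleanly stated, the rest is an elementary probability composition and an algebraic substitution.
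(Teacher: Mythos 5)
Your proposal is correct and follows essentially the same route as the paper's own proof: define event $B$ (enough UOP samples collected in $k$ episodes, bounded via Lemma~\ref{lemma:osr}) and event $A$ (the return bound from Theorem~\ref{th:rlgp} conditional on $B$), multiply the two probabilities, set $(1-\delta)p_{\pi_r,\cM}(n'\geq n\mid k)=1-\delta'$, and substitute the resulting $\delta$ into $\eta$. If anything, you are slightly more careful than the paper, which writes $P(A)=P(A|B)P(B)$ where the correct step is $P(A)\geq P(A\cap B)=P(A|B)P(B)$, and you make explicit the i.i.d.\ caveat that the paper leaves implicit.
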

The proof of Corollary~\ref{coro:esrlgp} is provided in
Appendix~\ref{sec:app:eerlgp}. 
Corollary~\ref{coro:esrlgp} states that the probability 
of sampling optimal trajectories is the main bottleneck of exploration and 
generalization, instead of state space dimension. 
% Specifically, if the MDP
% has large probability to reach the optimal trajectories, it can be solved
% easily even it has large state space dimension, as \textsc{Pong}. 
In general, the optimal exploration strategy depends
on the properties of MDPs. In this work, we focus on improving learning 
efficiency, i.e., learning optimal ranking instead of estimating value functions.
The discussion of optimal exploration is beyond the scope of this work.

\section{Experimental Results}
% \subsection{Does it really work? Evaluation on Atari 2600 games}
\label{sec:exp}
% \subsection{Sample Efficient Reinforcement Learning}
\begin{figure*}[t!]
\centering
% \begin{tabular}{cc}
% \hspace{-1em}\includegraphics[width=0.48\textwidth]{} & 
% \includegraphics[width=0.48\textwidth]{}\\
% (a) Performance comparison  &  (b) Exploration ablations
% \end{tabular}
\begin{tabular}{ccc}
\hspace{-2em}	\includegraphics[width=0.33\textwidth]{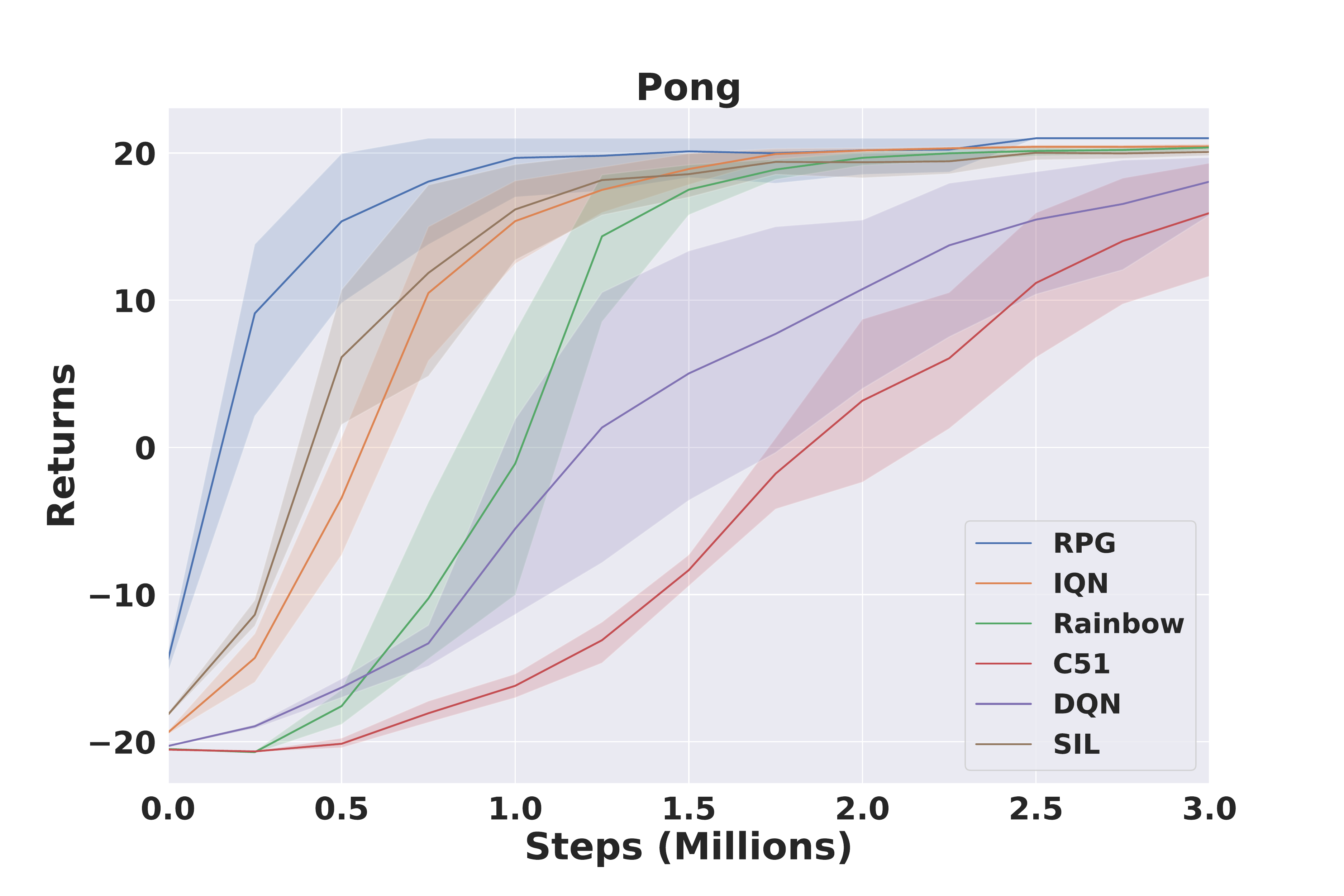} &
	\includegraphics[width=0.33\textwidth]{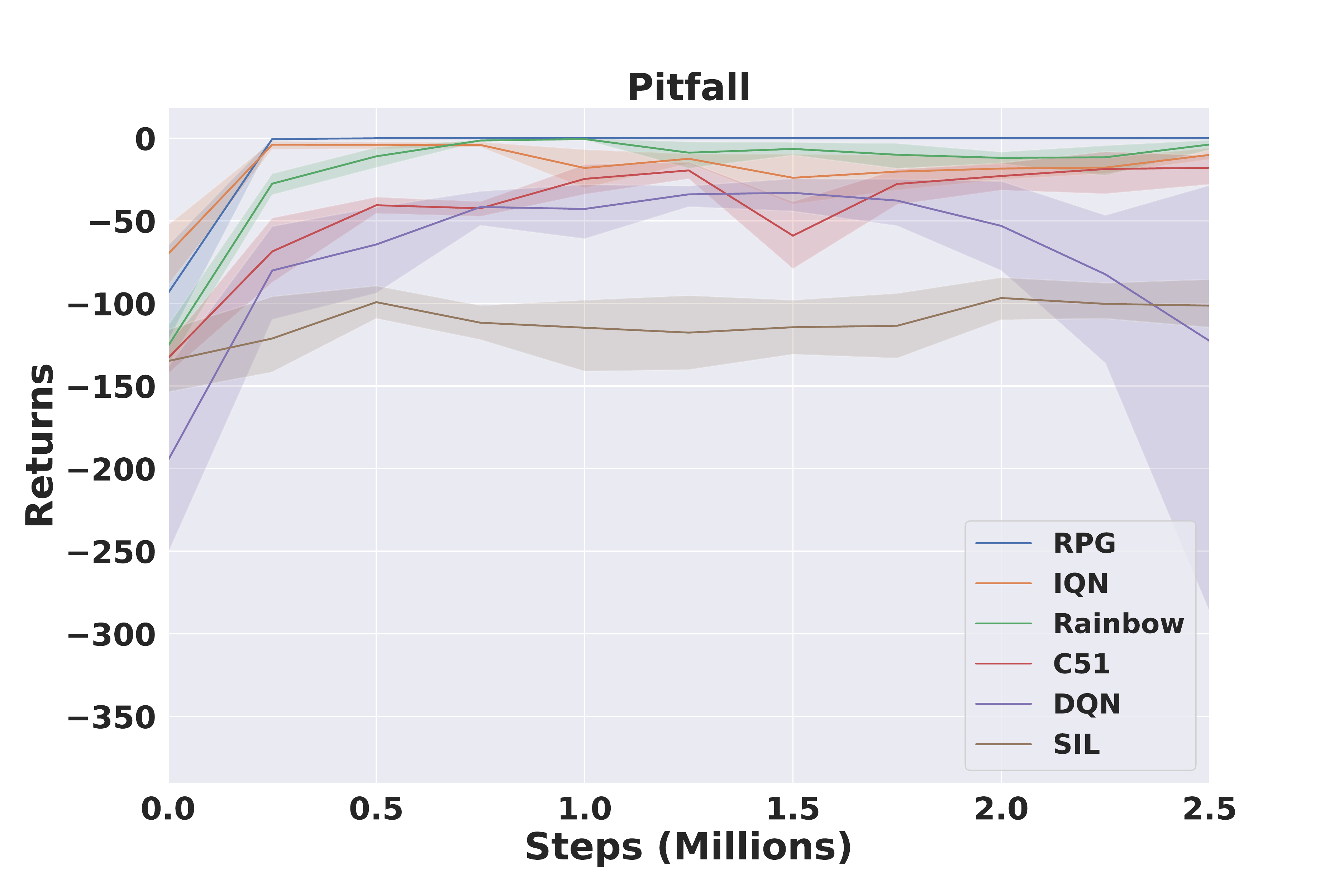} & 
	\includegraphics[width=0.33\textwidth]{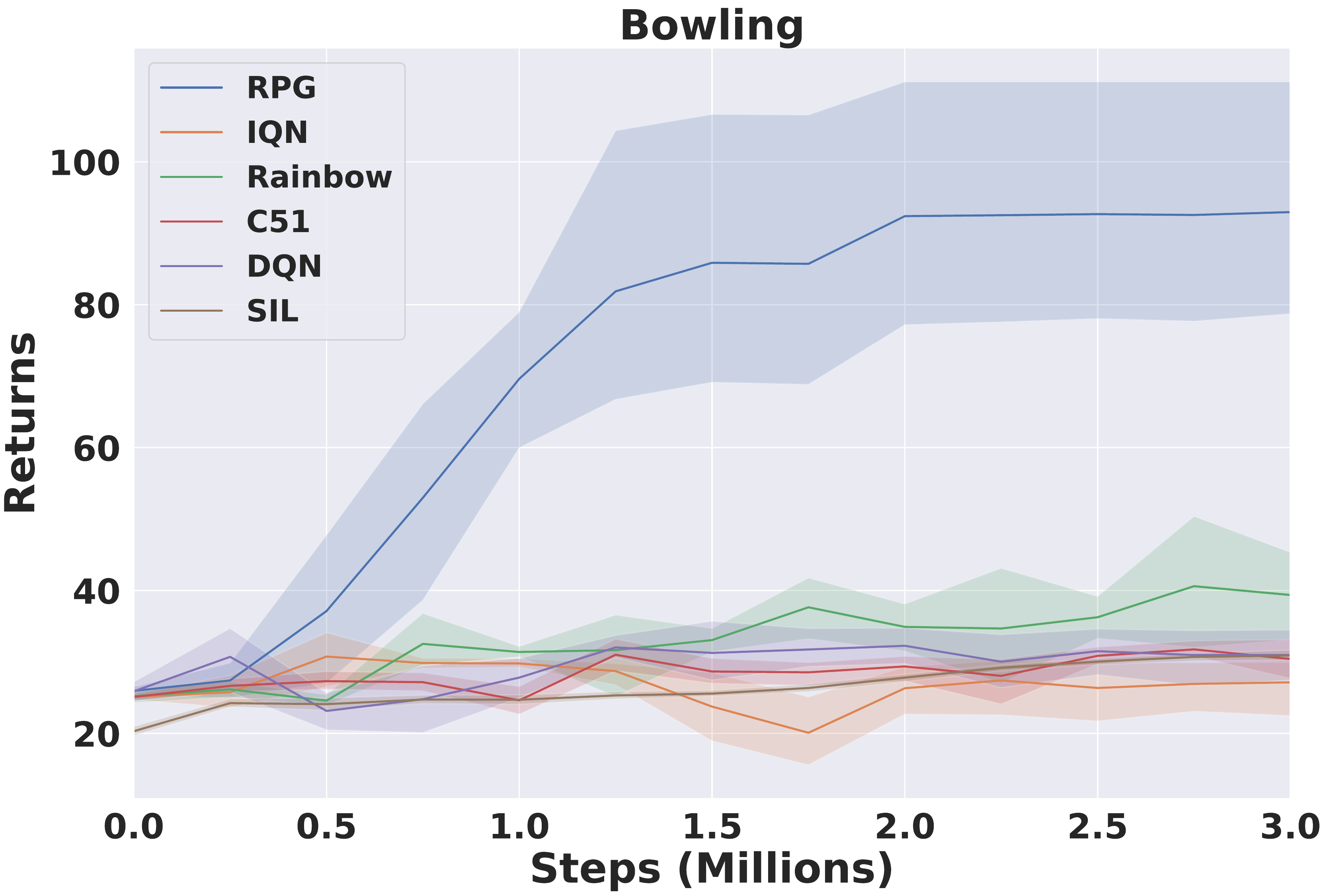} \\
\hspace{-2em}\includegraphics[width=0.33\textwidth]{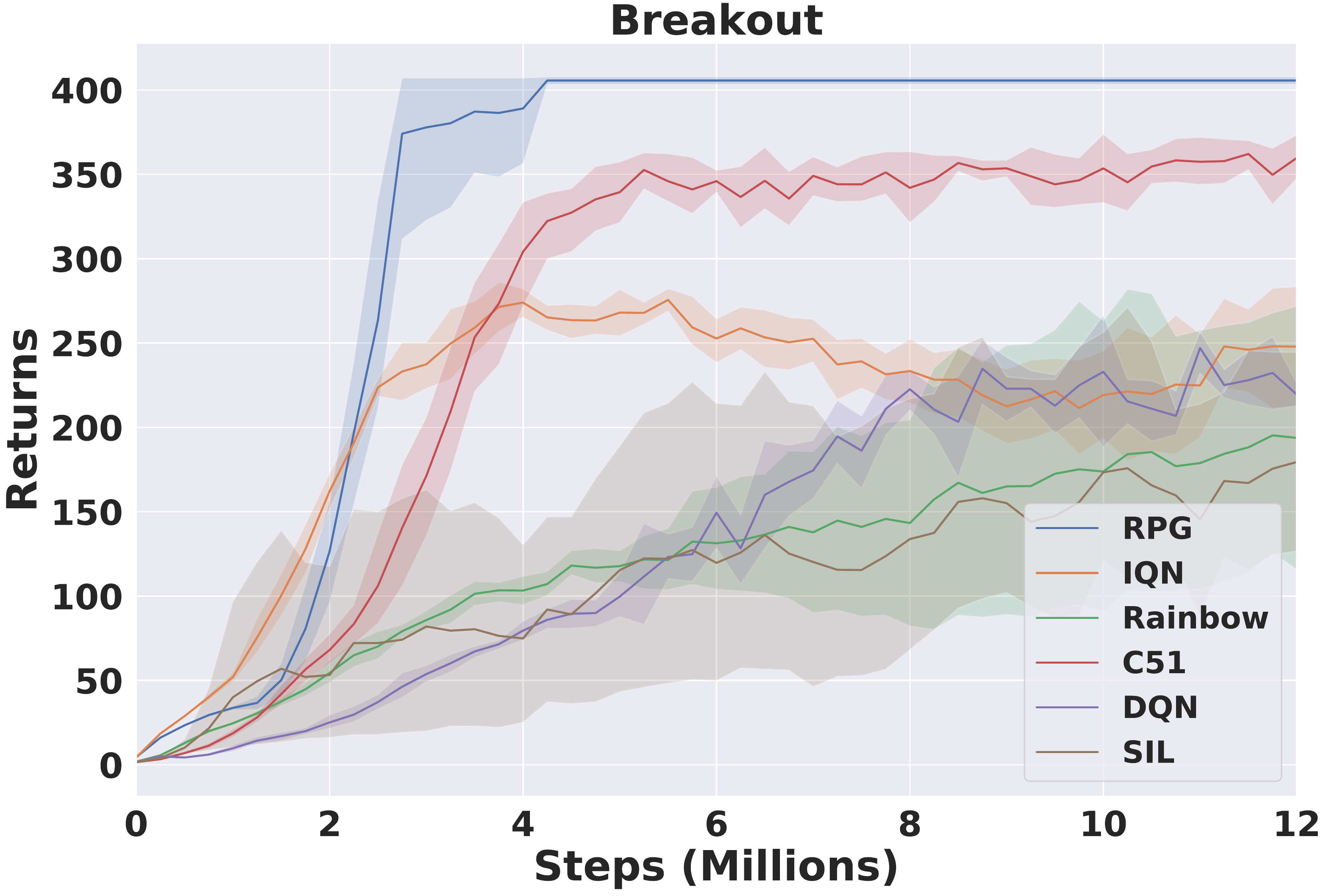} & 
	\includegraphics[width=0.33\textwidth]{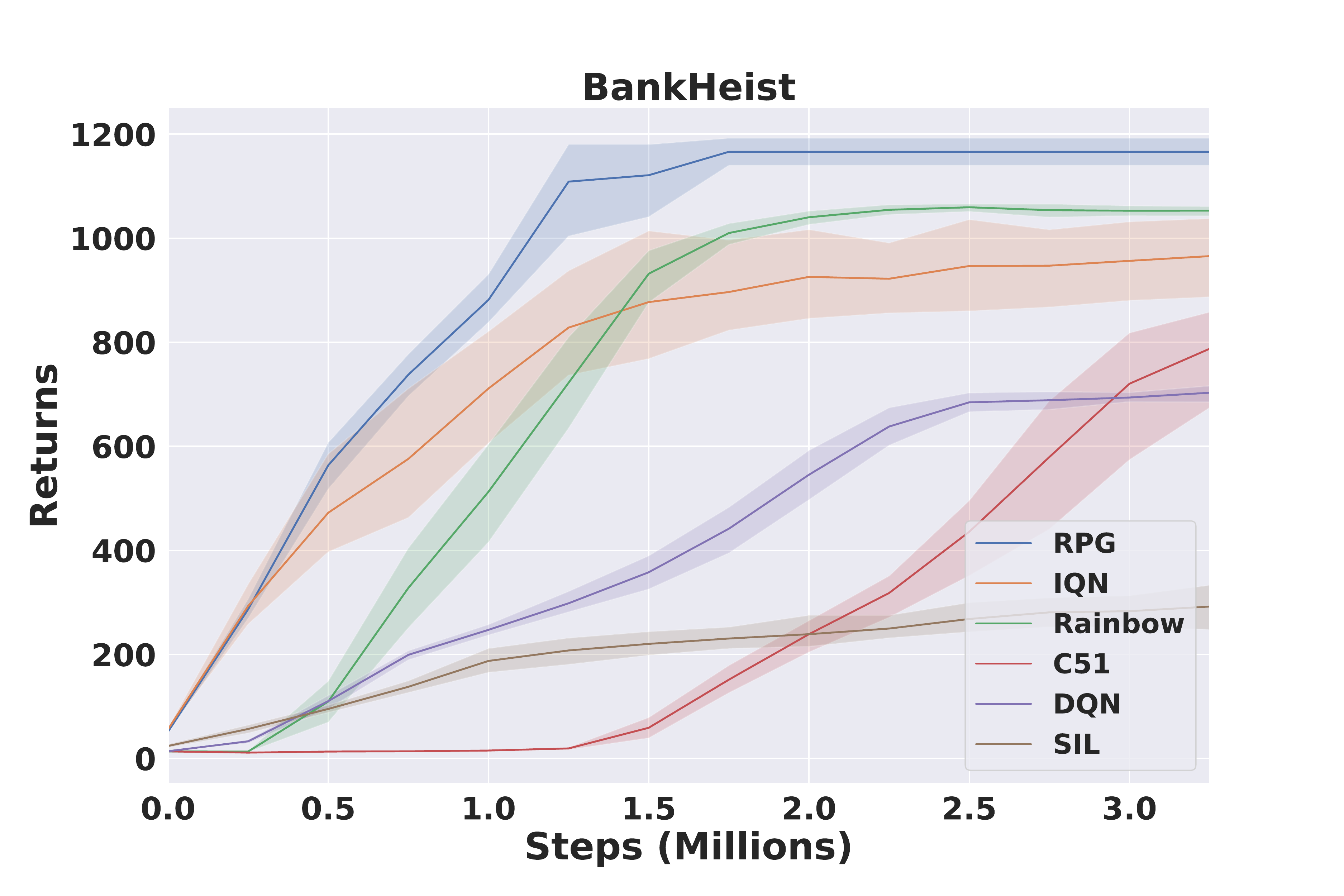} &
	\includegraphics[width=0.33\textwidth]{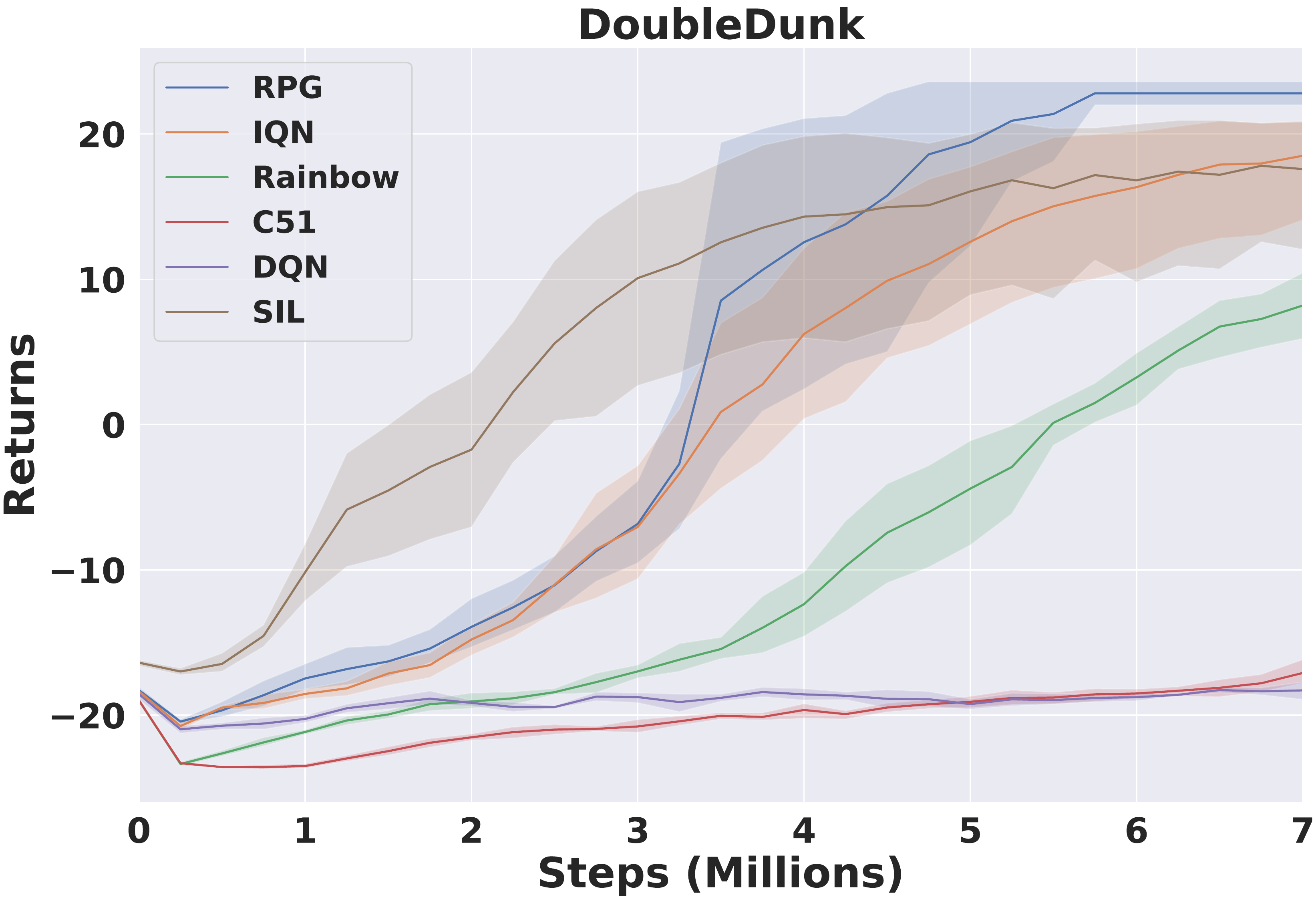} \\ 
	\hspace{-2em}\includegraphics[width=0.33\textwidth]{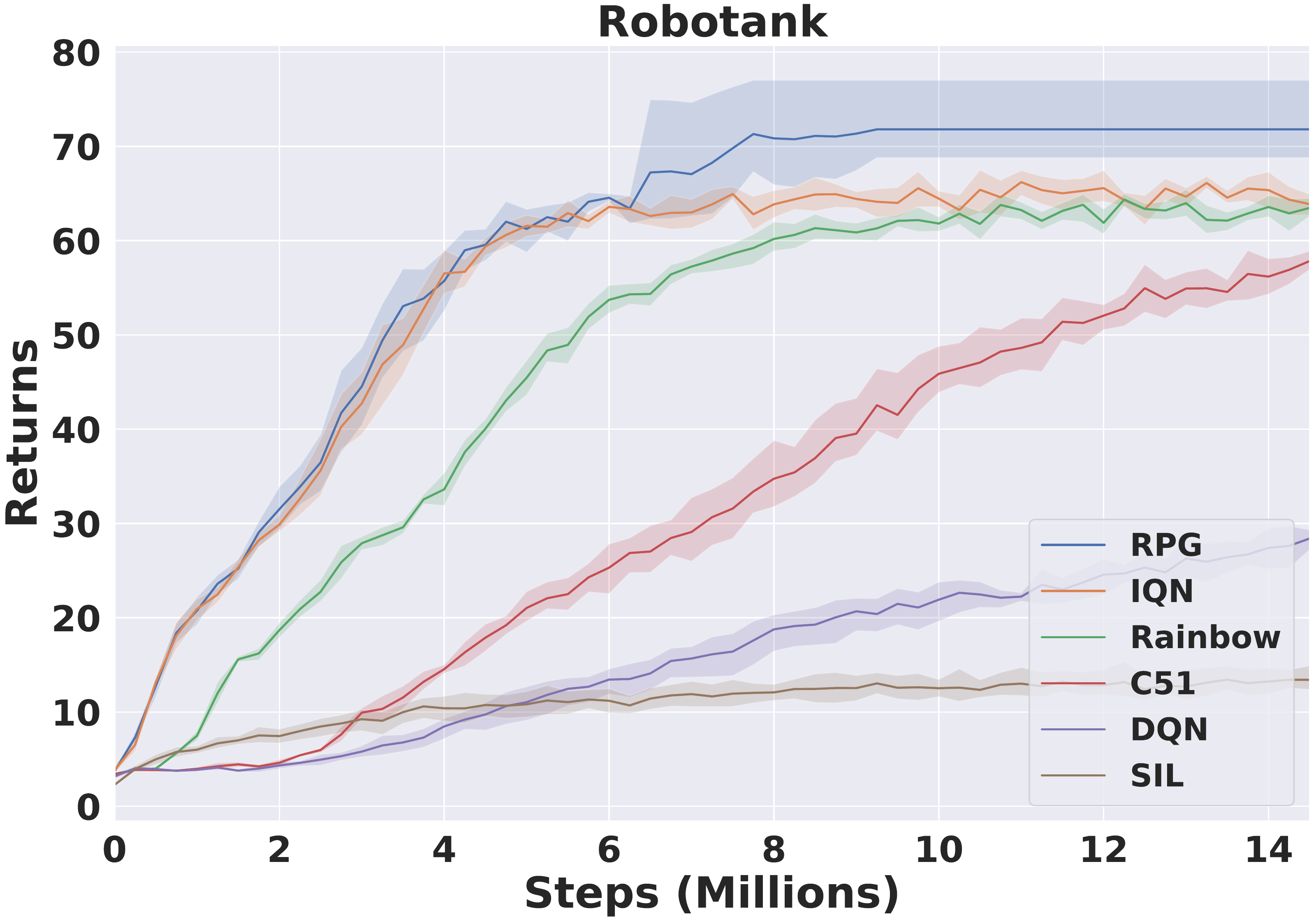} &
	\includegraphics[width=0.33\textwidth]{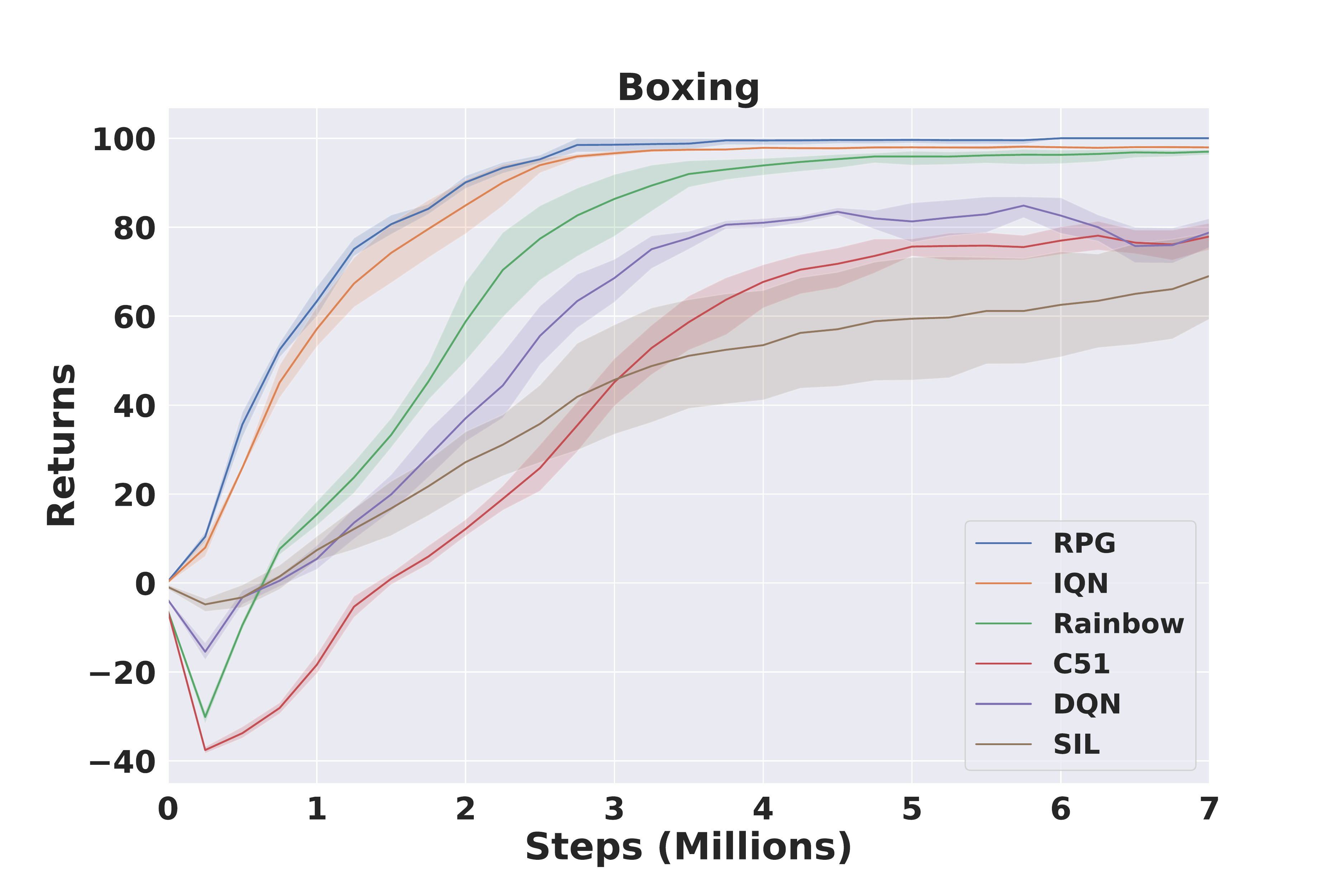} &
	\includegraphics[width=0.33\textwidth]{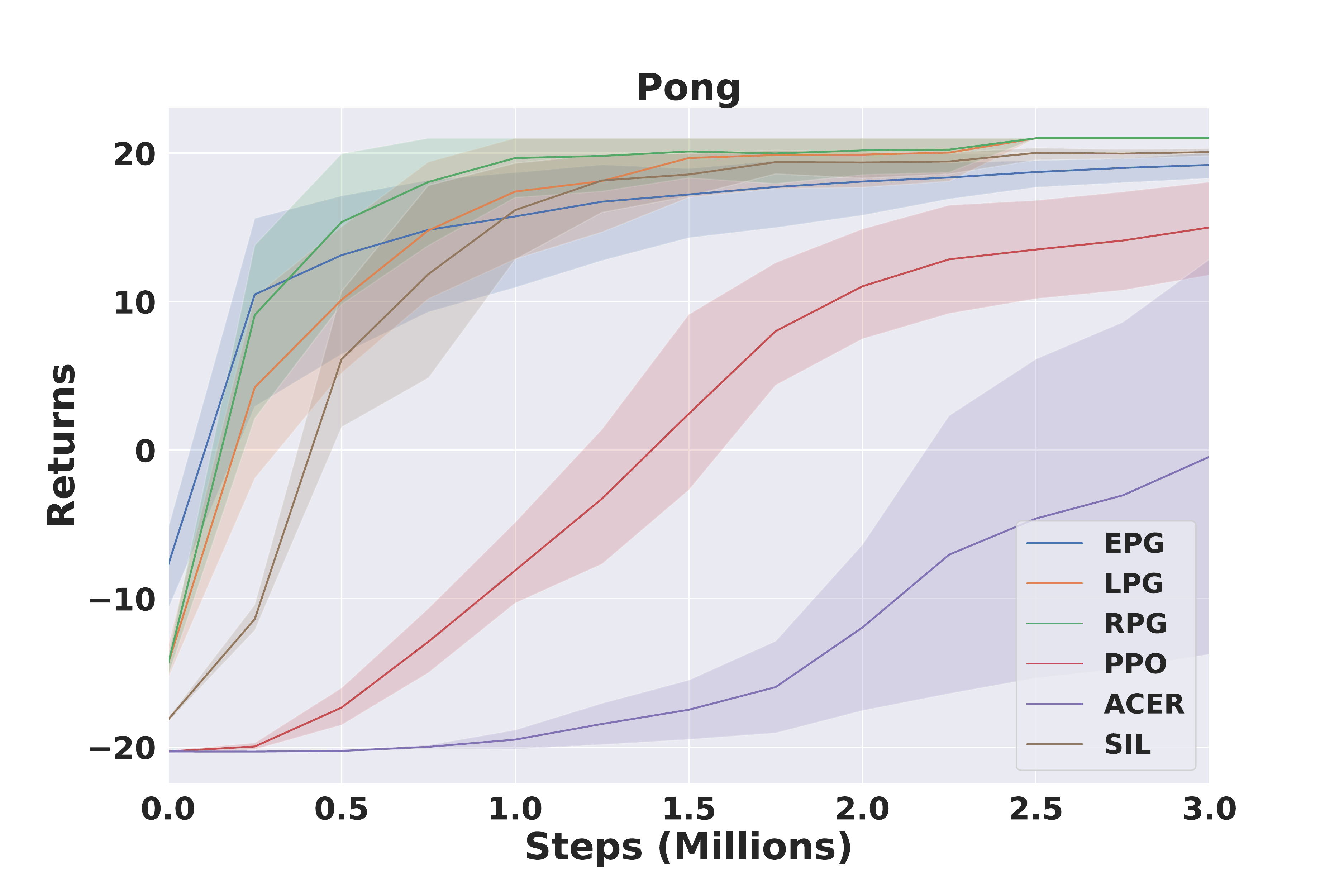}\\
\end{tabular}
% from notebook: http://localhost:8050/notebooks/pommerman/dopamine/dopamine/colab/paper-atari-results.ipynb flag=fiverepreatPong
\caption{The training curves of the proposed RPG and state-of-the-art. 
All results are averaged over random seeds from 1 to 5. The $x$-axis represents the number of steps interacting with the environment (we update the model every four steps)
and the $y$-axis represents the averaged training episodic return. 
The error bars are plotted with a confidence interval of 95\%.
}
\label{fig:pong}
\end{figure*}

To evaluate the sample-efficiency of Ranking Policy Gradient (RPG), we focus
on Atari 2600 games in OpenAI
gym~\citep{bellemare2013arcade,brockman2016openai}, without randomly repeating
the previous action. We compare our method with the state-of-the-art baselines
including DQN~\citep{mnih2015human}, C51~\citep{bellemare2017distributional},
IQN~\citep{dabney2018implicit}, \textsc{Rainbow}~\citep{hessel2017rainbow}, and self-imitation learning (SIL)~\citep{oh2018self}.
For reproducibility, we use the implementation provided in Dopamine
framework\footnote{https://github.com/google/dopamine}~\citep{dopamine} for all baselines and proposed methods, except for SIL using
the official implementation.~\footnote{https://github.com/junhyukoh/self-imitation-learning}. Follow the
standard practice~\citep{oh2018self,hessel2017rainbow,dabney2018implicit,bellemare2017distributional},
we report the training performance of all baselines as the increase of
interactions with the environment, or proportionally the number of training
iterations. We run the algorithms with five random seeds and report the
average rewards with $95$\% confidence intervals. The implementation details
of the proposed RPG and its variants are given as follows\footnote{Code is available at \href{https://github.com/illidanlab/rpg}{https://github.com/illidanlab/rpg}.}:

\textbf{EPG}: EPG is the stochastic listwise policy gradient (see~\eq{\ref{eq:lpgt:sample}}) 
incorporated with the proposed off-policy learning. More concretely, we apply 
trajectory reward shaping (TRS, Def~\ref{def:PI}) to all trajectories
encountered during exploration and train vanilla policy gradient using the off-policy samples.
This is equivalent to minimizing the cross-entropy loss (see Appendix~\eq{\ref{eq:lppg:cross}}) over the near-optimal trajectories.

\textbf{LPG}: LPG is the deterministic listwise policy gradient with the proposed off-policy 
learning. The only difference between EPG and LPG is that
LPG chooses action deterministically (see Appendix~\eq{\ref{eq:lpgt:argmax}}) during evaluation.

\textbf{RPG}: RPG explores the environment using a separate EPG agent in \textsc{Pong} and 
IQN in other games. Then RPG conducts supervised
learning by minimizing the hinge loss~\eq{\ref{eq:rankingv2}}. It is worth noting that
the exploration agent (EPG or IQN) can be replaced by any existing exploration method.
In our RPG implementation, we collect all trajectories with the trajectory reward no less
than the threshold $c$ without eliminating the duplicated trajectories 
and we empirically found it is a reasonable simplification. 

\noindent\textbf{Sample-efficiency.}
As the results shown in Figure~\ref{fig:pong}, our approach, RPG, significantly outperforms
the state-of-the-art baselines in terms of sample-efficiency at all tasks. Furthermore, 
RPG not only achieved the most sample-efficient results, but also reached the highest
final performance at \textsc{Robotank}, \textsc{DoubleDunk}, \textsc{Pitfall}, and \textsc{Pong},
comparing to any model-free state-of-the-art. 
In reinforcement learning, the stability of 
algorithm should be emphasized as an important issue. As we can see from the results,
the performance of baselines varies from task to task. There is no single baseline consistently
outperforms others. In contrast, due to the reduction from RL to supervised learning, RPG 
is consistently stable and effective across different environments. In addition to the 
stability and efficiency, RPG enjoys simplicity at the same time. In the environment \textsc{Pong},
it is surprising that RPG without any complicated exploration method largely surpassed
the sophisticated value-function based approaches. More details of hyperparameters  
are provided in the Appendix Section~\ref{sec:hyperparameters}.

\begin{figure}[t!]
\centering
\begin{tabular}{ccc}
\hspace{-2em}	\includegraphics[width=0.33\textwidth]{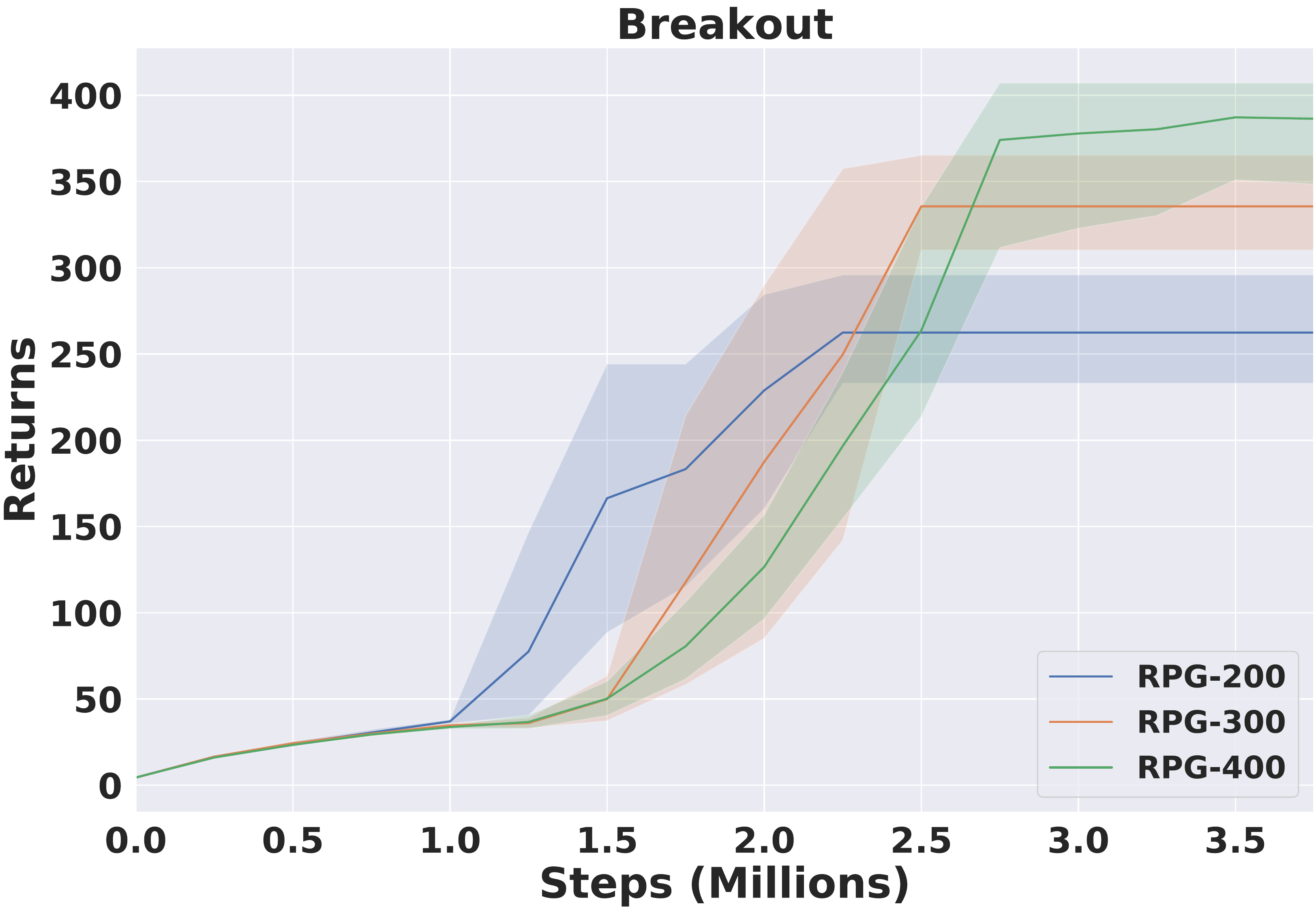} & 
	\includegraphics[width=0.35\textwidth]{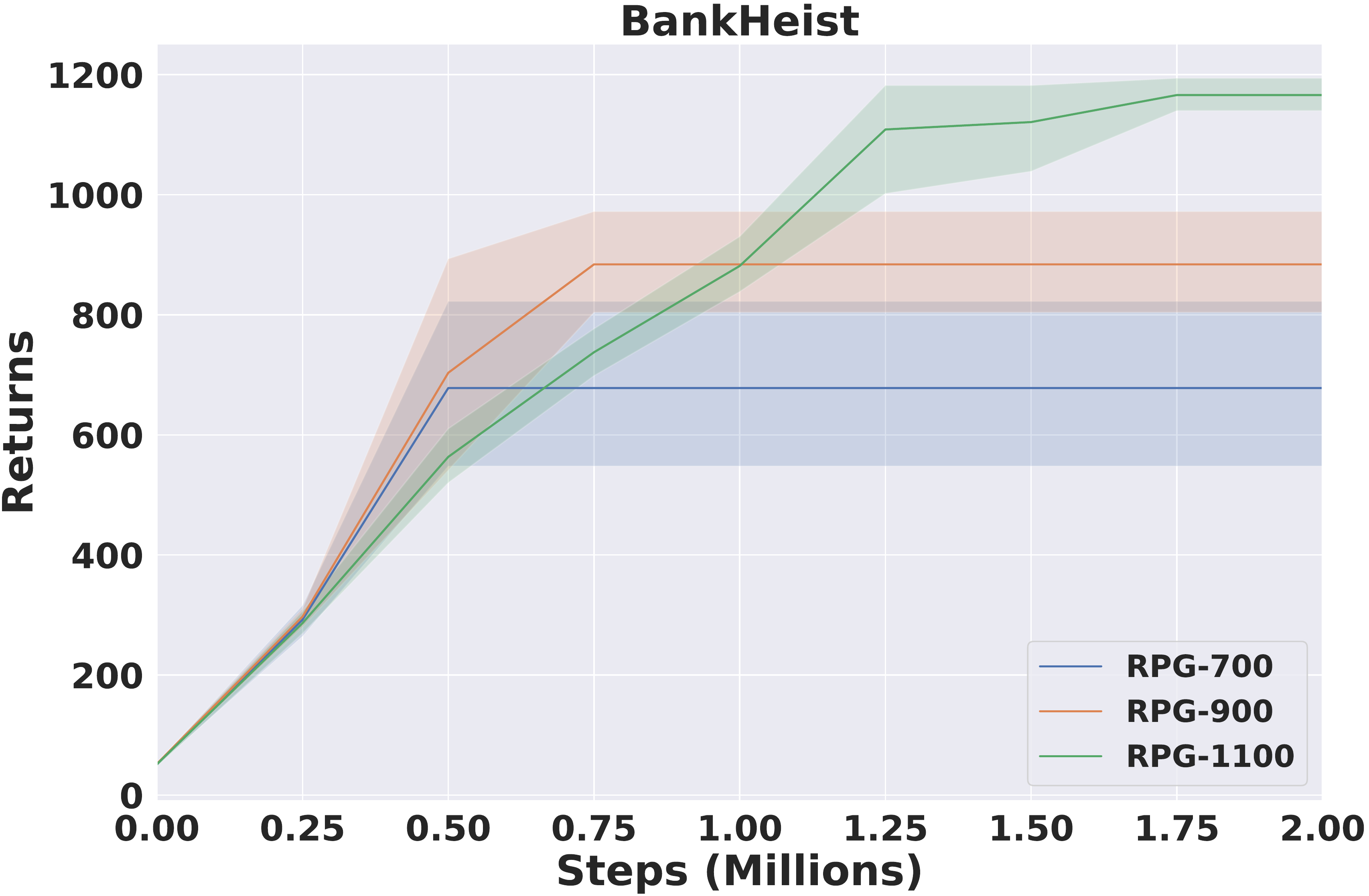} & 
	\includegraphics[width=0.33\textwidth]{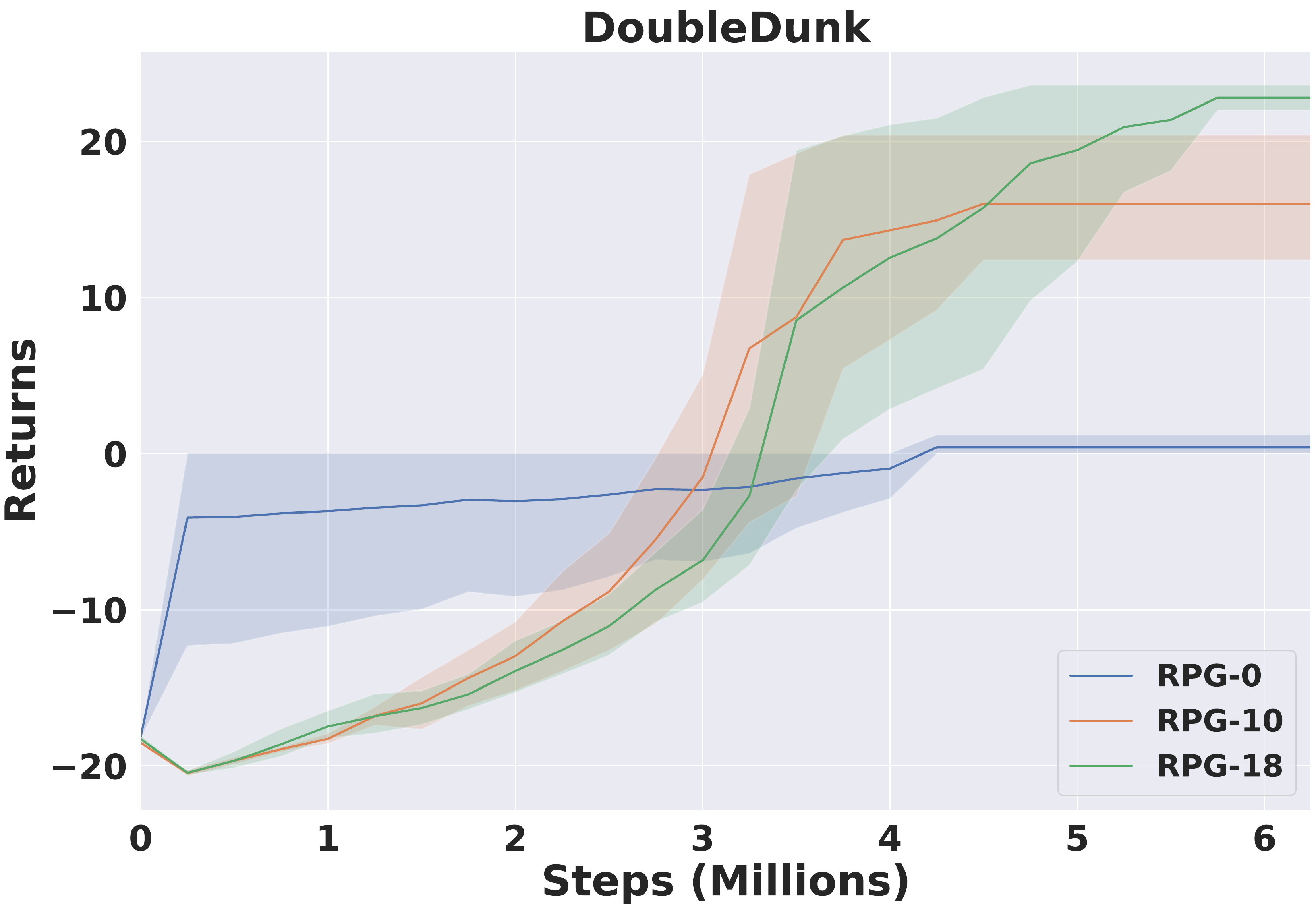} 
\end{tabular}
\vspace{-1em}
	% \includegraphics[width=0.48\textwidth]{fig/paper_results_Pong_ablation_exp.pdf} \\
	% (a) \textsc{BreakOut} &  (b)\textsc{Pong}
	\caption{The trade-off between sample efficiency and optimality on \textsc{DoubleDunk,BreakOut, BankHeist}. As the trajectory reward threshold ($c$)
	increase, more samples are needed for the learning to converge, while
	it leads to better final performance. We denote the value of $c$ by the numbers at the end of legends. }
	\label{fig:oetf}	
	\vspace{-2em}
\end{figure}
\subsection{Ablation Study}
\noindent\textbf{The effectiveness of pairwise ranking policy and off-policy learning as supervised learning.}
To get a better understanding of the underlying reasons that RPG is more
sample-efficient than DQN variants, we performed ablation studies in the
\textsc{Pong} environment by varying the combination of policy functions with
the proposed off-policy learning. The results of EPG, LPG, and RPG are shown
in the bottom right, Figure~\ref{fig:pong}. Recall that EPG and LPG use
listwise policy gradient (vanilla policy gradient using softmax as policy
function) to conduct exploration, the off-policy learning minimizes the cross-entropy
loss~\eq{\ref{eq:lppg:cross}}. In contrast, RPG shares the same exploration method as EPG and LPG 
while uses pairwise ranking policy~\eq{\ref{eq:rpg:policy}} in off-policy learning that
minimizes hinge loss~\eq{\ref{eq:rankingv2}}. We can see that RPG is more sample-efficient 
than EPG/LPG in learning deterministic optimal policy. We also compared the advanced on-policy method Proximal Policy Optimization (PPO)~\citep{schulman2017proximal} with EPG, LPG, and RPG. The proposed off-policy learning largely surpassed the best on-policy method.
Therefore, we conclude that off-policy as supervised learning
contributes to the sample-efficiency substantially, while the pairwise ranking policy can further 
accelerate the learning. In addition, we compare RPG to representative off-policy policy gradient approach:
ACER~\citep{wang2016sample}.
As the results shown, the proposed off-policy learning framework is more sample-efficient than the state-of-the-art off-policy policy gradient approaches.

\noindent\textbf{On the Trade-off between Sample-Efficiency and Optimality.}
Results in Figure~\ref{fig:oetf} show that there is a trade-off between sample efficiency
and optimality, which is controlled by the trajectory reward threshold $c$. 
Recall that $c$ determines how close is the learned UNOP to optimal policies. 
A higher value of $c$
leads to a less frequency of near-optimal trajectories being collected and 
and thus a lower sample efficiency, and however the algorithm is expected to converge to a strategy of better performance. 
We note that $c$ is the only parameter we tuned across all experiments.

\begin{figure}
\centering
\includegraphics[width=0.5\textwidth]{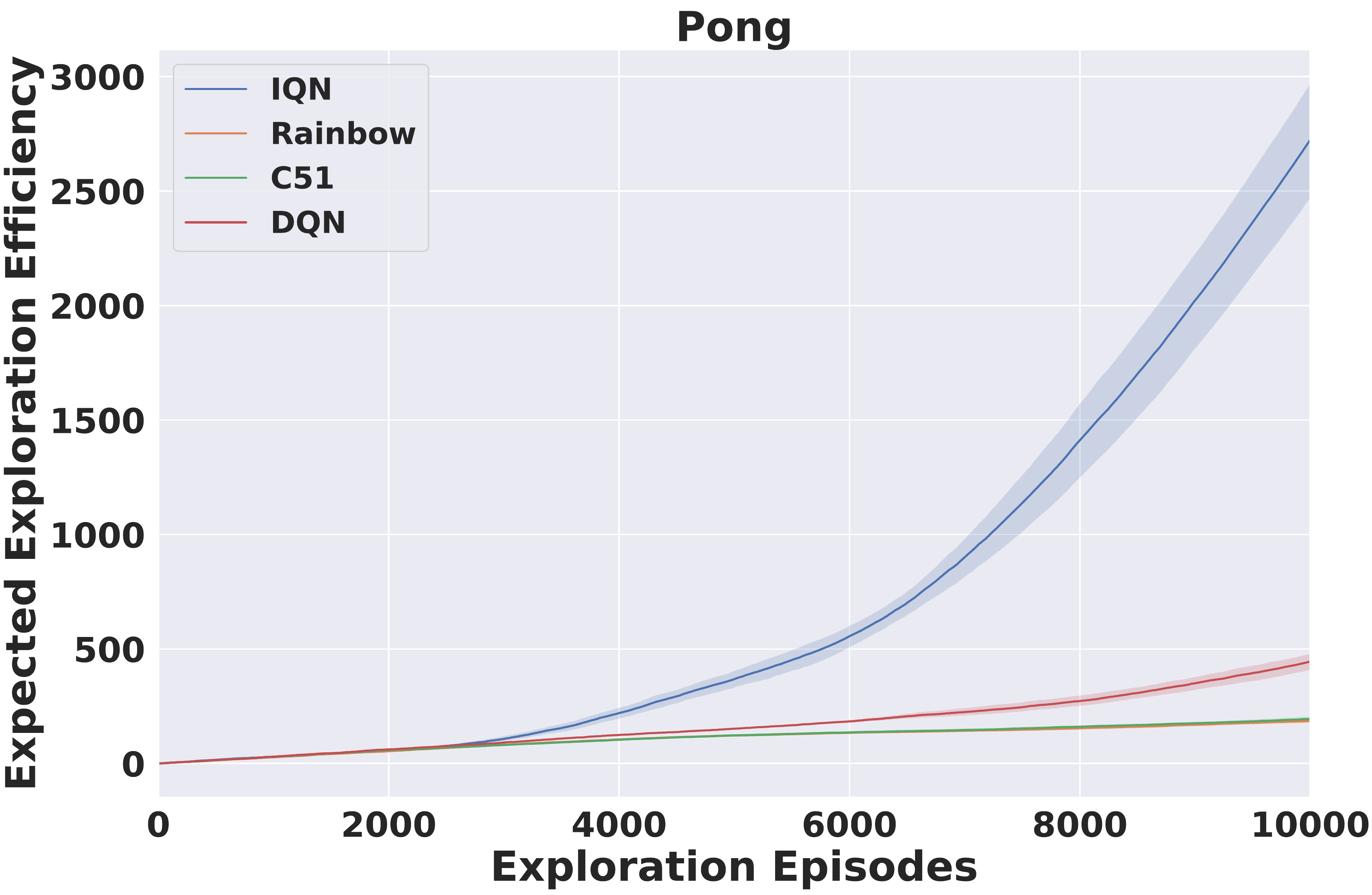}
	\caption{The expected exploration efficiency of state-of-the-art, the results are averaged over random seeds from 1 to 10. }
\label{fig:eee}	
\vspace{-1em}
\end{figure}
\noindent\textbf{Exploration Efficiency.}
We empirically evaluate the Expected Exploration Efficiency 
(Def~\ref{def:ee}) of the state-of-the-art on \textsc{Pong}. 
It is worth noting that the RL generalization performance is determined by both of 
learning efficiency and exploration efficiency. Therefore, higher exploration 
efficiency does not necessarily lead to more sample efficient algorithm due
to the learning inefficiency, 
as demonstrated by \textsc{RainBow} and \textsc{DQN} (see Figure~\ref{fig:eee}). 
Also, the Implicit Quantile 
achieves the best performance among baselines, since its 
exploration efficiency largely surpasses other baselines.

\vspace{-1em}
\section{Conclusions} % (fold)
\label{sec:discussions_and_conclusion}\balance

In this work, we introduced ranking policy gradient methods that,  for the
first time, approach the RL problem from a ranking perspective. Furthermore, 
towards the sample-efficient RL, we propose an off-policy learning framework,
which trains RL agents in a supervised learning manner and thus 
largely facilitates the learning efficiency. The off-policy learning framework uses generalized policy iteration for exploration and 
exploits the stableness of supervised learning for deriving policy, which accomplishes
the unbiasedness, variance reduction, off-policy learning, and sample efficiency at the same time.
Besides, we provide an alternative approach to analyze the sample complexity of
RL, and show that the sample complexity of RPG has no dependency on the state
space dimension. 
Last but not least, empirical results show that RPG achieves superior performance as compared to the state-of-the-art.

\small
\bibliography{ref.bib}
\onecolumn
\appendix
% !TEX ROOT=./main.tex

% \section{Appendix} % (fold)
% \label{sec:appendix}
% \input{sec_appendix_notations} 
% \input{sec_alg1}

% \input{sec_appendix_exp}
\begin{appendices}
% !TEX ROOT=./main.tex

\section{Discussion of Existing Efforts on Connecting Reinforcement
Learning to Supervised Learning.}
\label{sec:priworkRL2SL}

There are two main distinctions between supervised learning and reinforcement
learning. In supervised learning, the data distribution $\cD$ is static and training
samples are assumed to be sampled \emph{i.i.d.} from $\cD$. On the contrary, 
the data distribution is dynamic in reinforcement learning and the sampling
procedure is not independent. 
% dynamic: there is no identical/static distribution.
First, since the data distribution in RL is determined by both 
environment dynamics and the learning policy, and 
the policy keeps being updated during the learning process.
This updated policy results in dynamic data distribution in reinforcement learning. 
% not-independent. samples are correlated. p(s_1) \neq p(s_1|s_0)
Second, policy learning depends on previously collected samples, which
in turn determines the sampling probability of incoming data. Therefore, 
the training samples we collected are not independently distributed. 
These intrinsic difficulties of reinforcement learning directly cause 
the sample-inefficient and unstable performance of current algorithms. 

On the other hand, most state-of-the-art reinforcement learning algorithms 
can be shown to have a supervised learning equivalent. To see this, 
recall that most reinforcement learning algorithms eventually acquire the policy
 either explicitly or implicitly, which is a mapping from a 
state to an action or a probability distribution over the action space. 
The use of such a mapping implies that ultimately
there exists a supervised learning equivalent to the original reinforcement learning problem, if optimal policies exist. 
The paradox is that it is almost impossible to construct this 
supervised learning equivalent on the fly, without knowing any optimal policy. 

Although the question of how to construct and apply proper supervision 
is still an open problem in the community, there are many existing efforts 
providing insightful approaches to reduce reinforcement learning into its 
supervised learning counterpart over the past several decades. 
Roughly, we can classify the existing efforts into the following 
categories:
\begin{itemize}
  \item \textit{Expectation-Maximization (EM)}: \cite{dayan1997using,peters2007reinforcement,kober2009policy,abdolmaleki2018maximum}, etc.
  \item \textit{Entropy-Regularized RL (ERL)}: \cite{o2016combining,oh2018self,haarnoja2018soft}, etc.
  % \item \textit{Preference based RL (PbRL)}~\citep{furnkranz2012preference,wirth2016model}, etc.
  \item \textit{Interactive Imitation Learning (IIL)}: \cite{daume2009search,syed2010reduction,ross2010efficient,ross2011reduction,sun2017deeply}, etc.
\end{itemize}

The early approaches in the EM track applied Jensen's inequality 
and approximation techniques to transform the reinforcement learning 
objective. Algorithms are then derived from the transformed objective, 
which resemble
the Expectation-Maximization procedure and provide policy improvement guarantee~\citep{dayan1997using}.
These approaches typically focus on a simplified
RL setting, such as assuming that the reward function is not
associated with the state \citep{dayan1997using}, approximating the goal to
maximize the expected immediate reward and the state distribution is assumed
to be fixed~\citep{peters2008reinforcement}. Later on in \cite{kober2009policy}, the authors extended the EM
framework from targeting immediate reward into episodic return. Recently, \cite{abdolmaleki2018maximum} used the EM-framework on a relative
entropy objective, which adds a parameter prior as regularization. 
It has been found that the estimation step using
\textit{Retrace}~\citep{munos2016safe} can be unstable even with a linear
function approximation~\citep{touati2017convergent}.  In general, the estimation
step in EM-based algorithms involves on-policy evaluation, which is
one challenge shared among policy gradient methods. On the 
other hand, off-policy learning usually leads to a much better 
sample efficiency, and is one main motivation that we want to reformulate 
RL into a supervised learning task. 

To achieve off-policy learning, PGQ~\citep{o2016combining} connected the
entropy-regularized policy gradient with Q-learning under the constraint of small
regularization. In the similar framework, Soft Actor-Critic
~\citep{haarnoja2018soft} was proposed to enable sample-efficient and 
faster convergence under the framework of entropy-regularized RL. 
It is able to converge to the optimal policy that optimizes the long-term reward
along with policy entropy. It is an efficient way to model the suboptimal behavior
and empirically it is able to learn a reasonable policy. Although recently the discrepancy
between the entropy-regularized objective and original long-term reward has been
discussed in~\citep{o2018variational,eysenbach2019if}, they focus on learning 
stochastic policy while the proposed framework is feasible for both learning deterministic
optimal policy (Corollary~\ref{cor:rppg}) and stochastic optimal policy (Corollary~\ref{cor:lppg}).
In~\citep{oh2018self}, 
this work shares similarity to our work in terms of 
the method we collecting the samples. They collect good samples based on the 
past experience and then conduct the imitation learning w.r.t those good samples. 
However, we differentiate at how do we look at the problem theoretically. This self-imitation
learning procedure was eventually connected to lower-bound-soft-Q-learning, which belongs to 
entropy-regularized reinforcement learning. We comment that there is a trade-off 
between sample-efficiency and modeling suboptimal behaviors. 
The more strict requirement we have on the samples
collected we have less chance to hit the samples while we are more close to imitating the optimal behavior.
% as shown in Figure~\ref{fig:sltf}. 
% The concrete theoretical study on this will be a follow up work. 
% Preference based RL has arisen as an alternative approach to cast RL as supervised learning, 
% where the reward signal is unknown while the trajectory preference is
% assumed to be able to solicit from the experts~\citep{wirth2016model,furnkranz2012preference}.

\begin{table*}
\centering
\begin{tabular}{|c|c|c|c|c|c|} \hline
  \multicolumn{1}{|l|}{Methods} &  \multicolumn{1}{l|}{Objective} & \multicolumn{1}{l|}{Cont. Action} &  \multicolumn{1}{l|}{Optimality}& \multicolumn{1}{l|}{Off-Policy} & \multicolumn{1}{l|}{No Oracle} \\ \hline \hline 
   \multicolumn{1}{|p{1cm}|}{ EM} 
 & \multicolumn{1}{p{1cm}|}{ \checkmark } 
 & \multicolumn{1}{p{1cm}|}{ \checkmark }
 & \multicolumn{1}{p{1cm}|}{ \checkmark}
 & \multicolumn{1}{p{1cm}|}{ \xmark}
 & \multicolumn{1}{p{1cm}|}{ \checkmark} \\ \hline
   \multicolumn{1}{|p{1cm}|}{ ERL} 
 & \multicolumn{1}{p{1cm}|}{ \xmark } 
 & \multicolumn{1}{p{1cm}|}{ \checkmark}
 & \multicolumn{1}{p{1cm}|}{ \checkmark$^\dagger$}
 & \multicolumn{1}{p{1cm}|}{ \checkmark}
 & \multicolumn{1}{p{1cm}|}{ \checkmark} \\ \hline
   \multicolumn{1}{|p{1cm}|}{ IIL} 
  & \multicolumn{1}{p{1cm}|}{ \checkmark } 
 & \multicolumn{1}{p{1cm}|}{ \checkmark}
 & \multicolumn{1}{p{1cm}|}{ \checkmark}
 & \multicolumn{1}{p{1cm}|}{ \checkmark}
 & \multicolumn{1}{p{1cm}|}{ \xmark} \\ \hline
%
 %   \multicolumn{1}{|p{1cm}|}{ PbRL} 
 % & \multicolumn{1}{p{1cm}|}{ \checkmark } 
 % & \multicolumn{1}{p{1cm}|}{ \checkmark}
 % & \multicolumn{1}{p{1cm}|}{ \checkmark}
 % & \multicolumn{1}{p{1cm}|}{ \checkmark}
 % & \multicolumn{1}{p{1cm}|}{ \xmark} \\ \hline
    \multicolumn{1}{|p{1cm}|}{ RPG} 
 & \multicolumn{1}{p{1cm}|}{ \checkmark } 
 & \multicolumn{1}{p{1cm}|}{ \xmark}
 & \multicolumn{1}{p{1cm}|}{ \checkmark}
 & \multicolumn{1}{p{1cm}|}{ \checkmark}
 & \multicolumn{1}{p{1cm}|}{ \checkmark} \\ \hline
\end{tabular}
\caption{A comparison of studies reducing RL to SL. 
The \textit{Objective} column denotes whether
the goal is to maximize long-term reward.  
The \textit{Cont. Action} column denotes whether the 
method is applicable to both continuous and discrete action spaces. 
The \textit{Optimality}
denotes whether the algorithms can model the optimal policy. 
\checkmark$^\dagger$ denotes the optimality achieved by ERL 
is w.r.t. the entropy regularize objective instead of 
the original objective on return.
The \textit{Off-Policy} column denotes
if the algorithms enable off-policy learning. 
The \textit{No Oracle} column 
denotes if the algorithms need to access to a certain type of oracle (expert policy or expert 
demonstrations).  }
\label{tb:rl2sl}
\end{table*}  

From the track of interactive imitation learning, early 
efforts such as~\citep{ross2010efficient,ross2011reduction} pointed out that
% daume2009search Search-based structured prediction  
the main discrepancy between imitation learning and reinforcement learning is
the violation of \emph{i.i.d.} assumption. 
\textsc{SMILe}~\citep{ross2010efficient} and
\textsc{DAgger}~\citep{ross2011reduction} are proposed to overcome the distribution mismatch.  
Theorem 2.1 in \cite{ross2010efficient} quantified the performance degradation
from the expert considering that the learned policy fails to imitate the expert with a certain probability.
The theorem seems to resemble the long-term performance theorem (Thm.~\ref{th:ltpt}) in this paper. However, it
studied the scenario that the learning policy is trained through a state
distribution induced by the expert, instead of state-action distribution as considered in Theorem~\ref{th:ltpt}. 
As such, Theorem 2.1 in \cite{ross2010efficient} may be more applicable to the
situation where an interactive procedure is needed, such as querying the
expert during the training process. On the contrary, the proposed work focuses on directly
applying supervised learning without having access to the expert to
label the data. The optimal state-action pairs are collected during
exploration and conducting supervised learning on the replay buffer will
provide a performance guarantee in terms of long-term expected reward.
Concurrently, a resemble of Theorem 2.1 in~\citep{ross2010efficient} is Theorem
1 in~\citep{syed2010reduction}, where the authors reduced the apprenticeship 
learning to classification, under the assumption that the apprentice policy is
deterministic and the misclassification rate is bounded at all time steps.
In this work, we show that it is possible to circumvent such a strong assumption
and reduce RL to its SL. Furthermore, our theoretical framework also 
leads to an alternative analysis of sample-complexity.
Later on 
\textsc{AggreVaTe}~\citep{ross2014reinforcement} was proposed to incorporate
the information of action costs to facilitate imitation learning, and its
differentiable version \textsc{AggreVaTeD}~\citep{sun2017deeply} was
developed in succession and achieved impressive empirical results. Recently, 
hinge loss was introduced to regular $Q$-learning as a pre-training step
for learning from demonstration~\citep{hester2018deep}, or as a surrogate loss
for imitating optimal trajectories~\citep{osa2018algorithmic}. In this work,
we show that hinge loss constructs a new type of policy gradient method and
can be used to learn optimal policy directly. 

In conclusion, our method approaches the problem of reducing RL to SL from a
unique perspective that is different from all prior work. With our reformulation
from RL to SL, the samples collected in the replay buffer satisfy 
the \emph{i.i.d.}
assumption, since the state-action pairs are now sampled from the data
distribution of UNOP. A multi-aspect comparison between the proposed method 
and relevant prior studies is summarized in Table~\ref{tb:rl2sl}.

% \begin{figure*}
% \centering
%   \includegraphics[width=0.8\textwidth]{}
%   \caption{The trade off between modeling optimal behavior and the exploration efficiency. SIL~\citep{oh2018self}, SAC~\citep{haarnoja2018soft}, MPO~\citep{abdolmaleki2018maximum} may not be modeling suboptimal.}
%   \label{fig:sltf}
% \end{figure*}

% !TEX ROOT=./main.tex

\section{Ranking Policy Gradient Theorem}
\label{sec:th:rpgt}

The Ranking Policy Gradient Theorem (Theorem~\ref{th:rpgt}) formulates the
optimization of long-term reward using a ranking objective. The proof below
illustrates the formulation process.  

\begin{proof}
The following proof is based on direct policy differentiation~\citep{peters2008reinforcement,williams1992simple}.
For a concise presentation, the subscript $t$ for action value $\lambda_i, \lambda_j$, and $p_{ij}$ is omitted.
\begin{align}
 \grad_{\theta} J({\theta}) =& \grad_{\theta} \sum_{\tau}\nolimits p_{\theta}(\tau) r(\tau) \label{eq:rlpgobj} \\
=& \sum_\tau\nolimits p_{\theta}(\tau) \grad_{\theta}\log p_{\theta}(\tau)r(\tau) \nonumber \\
=& \sum_\tau\nolimits p_{\theta}(\tau) \grad_{\theta}\log \left(p(s_0)\Pi_{t=1}^{T}\pi_{\theta}(a_t|s_t)p(s_{t+1}|s_t, a_t)\right)r(\tau)  \nonumber \\
=& \sum_\tau\nolimits p_{\theta}(\tau)  \sum_{t=1}^{T}\nolimits\grad_{\theta}\log\pi_{\theta}(a_t|s_t)r(\tau)  \nonumber \\
=& \vE_{\tau \sim \pi_\theta}\left[\sum_{t=1}^{T}\nolimits \grad_{\theta} \log\pi_{\theta}(a_t|s_t) r(\tau)\right ]\nonumber\\
=& \vE_{\tau \sim \pi_\theta}\left[\sum_{t=1}^{T}\nolimits\grad_{\theta} \log\left(\prod\nolimits_{j=1, j\neq i}^{m} p_{ij}\right) r(\tau)\right]\nonumber \\
=& \vE_{\tau \sim \pi_\theta}\left[\sum_{t=1}^{T}\nolimits\grad_{\theta} \sum_{j=1, j\neq i}^{m}\nolimits \log\left(\frac{e^{\lambda_{ij}}}{1+ e^{\lambda_{ij}}} \right)r(\tau)\right]\nonumber\\
=& \vE_{\tau \sim \pi_\theta}\left[\sum_{t=1}^{T}\nolimits\grad_{\theta} \sum_{j=1, j\neq i}^{m}\nolimits \log\left(\frac{1}{1+ e^{\lambda_{ji}}} \right)r(\tau)\right]\label{eq:rpg_ori} \\
% =& \vE_{\tau \sim \pi_\theta}[\sum_{t=0}^{T}\grad_{\theta} \left(\sum_{j=1, j\neq i}^{m} - \log(1+ e^{\lambda_{ji}})\right) r(\tau)]\label{eq:rpg_ori}\\
\approx& \vE_{\tau \sim \pi_\theta}\left[\sum_{t=1}^{T}\nolimits\grad_{\theta} \left(\sum_{j=1, j\neq i}^{m}\nolimits (\lambda_i-\lambda_j)/2\right)r(\tau)\right],
\label{eq:rpg_first}
% \label{eq:pgtheorm}
\end{align}
where the trajectory is a series of state-action pairs from $t=1,...,T$, $i.e.\tau = s_1,a_1,s_2,a_2,...,s_T$.
From \eq{\ref{eq:rpg_ori}} to \eq{\ref{eq:rpg_first}}, we use the 
first-order Taylor expansion of $\log(1+e^x)|_{x=0} = \log2 + \frac{1}{2}x + O(x^2)$ to further simplify the ranking policy gradient. 
% It is worth noting that pairwise ranking policy does not construct a
% complete probability distribution in general ($m \geq 3$) while the practical effect is negligible. 
% Please refer to Section~\ref{subsec:rpgvalid} for detailed discussion.
% \hfill$\blacksquare$
\label{proof:rpg}
\end{proof}
\vspace{-1em}

\subsection{Probability Distribution in Ranking Policy Gradient}
\label{subsec:rpgvalid}

In this section, we discuss the output property of the pairwise ranking policy. 
We show in Corollary
\ref{corollary:rpg:pd} that the pairwise ranking policy gives a valid
probability distribution when the dimension of  the action space $m=2$. For
cases when $m>2$ and the range of $Q$-value satisfies Condition
\ref{cond:qvalue}, we show in Corollary \ref{corollary:valid:prob} how to
construct a valid probability distribution. 

\begin{corollary}
The pairwise ranking policy as shown in~\eq{\ref{eq:rpg:policy}}
% \begin{align}
% \pi(a=a_i|s) = \Pi_{j=1, j\neq i}^{m}\ p_{ij}, \quad \forall i=1,\dots,m
% \label{eq:rpg:pd}
% \end{align}
constructs a probability distribution over the set of actions 
when the action space $m$ is equal to $2$, given any action values 
$\lambda_i, i=1,2$. For the cases with $m>2$, this conclusion does not hold in general.
\label{corollary:rpg:pd}
\end{corollary}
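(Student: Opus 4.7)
\textbf{Proof plan for Corollary~\ref{corollary:rpg:pd}.}

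The plan is to handle the two claims separately: first a direct verification that the pairwise ranking policy yields a probability distribution when $m=2$, and then a concrete counterexample showing the claim fails when $m>2$.

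For the $m=2$ case, I would simply unfold the definition. There are only two actions, so $\pi(a_1|s) = p_{12}$ and $\pi(a_2|s) = p_{21}$. Setting $x = \lambda_1 - \lambda_2$, I would use that $p_{12} = e^{x}/(1+e^{x})$ and $p_{21} = e^{-x}/(1+e^{-x}) = 1/(1+e^{x})$, so that
\begin{equation*}
\pi(a_1|s) + \pi(a_2|s) = \frac{e^{x}}{1+e^{x}} + \frac{1}{1+e^{x}} = 1.
\end{equation*}
Since each $p_{ij} \in (0,1)$ for any real $x$, each $\pi(a_i|s) \in (0,1)$ as well, so the values form a valid probability distribution over $\{a_1,a_2\}$ for any choice of $\lambda_1,\lambda_2$.

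For the $m>2$ case I would supply the simplest possible counterexample. Take $m=3$ and set $\lambda_1=\lambda_2=\lambda_3$. Then $p_{ij}=1/2$ for every ordered pair $i\neq j$, so the pairwise ranking policy assigns $\pi(a_i|s) = \prod_{j\neq i} p_{ij} = (1/2)^{2} = 1/4$ to each action, giving total mass $3/4 \neq 1$. More generally, for arbitrary $m \geq 3$ the same choice yields total mass $m\cdot 2^{-(m-1)}$, which is strictly less than $1$ whenever $m\geq 3$. This demonstrates that without additional normalization, the pairwise construction does not automatically satisfy $\sum_i \pi(a_i|s) = 1$ when $m>2$.

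There is no real obstacle here; the only minor subtlety is to emphasize that the $m=2$ identity is a special algebraic coincidence (the two sigmoid terms $\sigma(x)$ and $\sigma(-x)$ sum to one), and that this cancellation breaks as soon as a third action enters the product, because the pairwise probabilities no longer decouple across actions into a partition of unity. This observation naturally motivates the subsequent Corollary~\ref{corollary:valid:prob} that constructs a proper distribution for $m>2$ under Condition~\ref{cond:qvalue}.
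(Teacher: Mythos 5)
Your proposal is correct and follows essentially the same route as the paper, which simply asserts that the $m=2$ identity "is easy to verify" and that counterexamples exist for $m>2$; you have just written out the verification (the $\sigma(x)+\sigma(-x)=1$ cancellation) and supplied the explicit counterexample ($\lambda_1=\dots=\lambda_m$ giving total mass $m\cdot 2^{-(m-1)}<1$) that the paper leaves implicit.
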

It is easy to verify that 
$\pi(a_i|s)>0$, $\sum_{i=1}^2 \pi(a_i|s)=1$ holds and 
the same conclusion cannot be applied to $m>2$ by constructing counterexamples. However, 
we can introduce a dummy action $a'$ to form a probability distribution for RPG. 
During policy learning, the algorithm increases the probability of best actions
and the probability of dummy action decreases. Ideally, if RPG converges to an optimal 
deterministic policy, the probability of taking best action is equal to 1 and $\pi(a'|s) = 0$. 
Similarly, we can introduce a dummy trajectory $\tau'$ with the trajectory reward $r(\tau') = 0$
and $p_{\theta}(\tau')= 1 - \sum_\tau p_{\theta}(\tau)$. 
The trajectory probability forms a probability distribution
since $\sum_\tau p_{\theta}(\tau) + p_{\theta}(\tau')= 1$ and $p_{\theta}(\tau) \geq 0 \ \forall \tau$
and $p_{\theta}(\tau')\geq 0$. 
The proof of a valid trajectory probability is similar to
the following proof on $\pi(a|s)$ to be a valid probability distribution with a dummy action. 
Its practical influence is negligible since our goal is to increase the probability
of (near)-optimal trajectories. To present in a clear way, we avoid mentioning 
dummy trajectory $\tau'$ in Proof~\ref{proof:rpg} while it can be seamlessly included. 

\begin{condition}[The range of action-value]
	We restrict the range of action-values in RPG so that it satisfies 
	$\lambda_m  \geq \ln (m^{\frac{1}{m-1}} - 1)$, where $\lambda_m = \min_{i,j} \lambda_{ji}$ and $m$ is the dimension of the action space. 
	\label{cond:qvalue}
\end{condition}
This condition can be easily satisfied since in RPG we only focus
on the relative relationship of $\lambda$ and we can constrain the range of 
action-values so that $\lambda_m$ satisfies the condition~\ref{cond:qvalue}. Furthermore,
since we can see that $m^{\frac{1}{m-1}} > 1$ is decreasing w.r.t to action dimension $m$.
The larger the action dimension, the less constraint we have on the action values.

\begin{corollary}
Given Condition~\ref{cond:qvalue}, we introduce a dummy action $a'$ and set
$\pi(a=a'|s) = 1 - \sum_{i} \pi(a=a_i|s)$, which constructs a valid probability 
distribution $\left(\pi(a|s)\right)$ over the action space $\cA \cup a'$.
% the pairwise ranking policy~\eq{\ref{eq:rpg:policy}} 
% construct probability sum smaller than 1. i.e., $\sum_{i} \pi(a=a_i|s) = \sum_i \Pi_{j=1, j\neq i}^{m}\ p_{ij} \leq 1$.
\label{corollary:valid:prob}
\end{corollary}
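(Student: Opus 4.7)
The plan is to verify that the proposed $\pi$ on $\cA \cup \{a'\}$ satisfies the three axioms of a probability distribution. Non-negativity of $\pi(a_i|s)$ for each real action $a_i$ is automatic, since every factor $p_{ij}=1/(1+e^{\lambda_{ji}})$ lies strictly in $(0,1)$. The total mass summing to one is built into the definition $\pi(a'|s) = 1 - \sum_i \pi(a_i|s)$. So the only nontrivial step is showing $\pi(a'|s) \ge 0$, equivalently
\[
  \sum_{i=1}^m \pi(a_i|s) \;=\; \sum_{i=1}^m \prod_{j \neq i} \frac{1}{1+e^{\lambda_{ji}}} \;\le\; 1.
\]

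For this I would apply a uniform worst-case estimate on each factor. By definition $\lambda_m = \min_{i,j} \lambda_{ji}$ lower-bounds every exponent appearing in a denominator, giving $p_{ij} \le (1+e^{\lambda_m})^{-1}$ for every ordered pair $(i,j)$. Taking the product over the $m-1$ such factors in $\pi(a_i|s)$ and then summing over $i$ yields
\[
  \sum_{i=1}^m \pi(a_i|s) \;\le\; m \, (1+e^{\lambda_m})^{-(m-1)}.
\]
Next I would invoke Condition~\ref{cond:qvalue}, which after exponentiation reads $1+e^{\lambda_m} \ge m^{1/(m-1)}$, so $(1+e^{\lambda_m})^{-(m-1)} \le 1/m$, and the right-hand side above is at most $1$. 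This closes the inequality and gives $\pi(a'|s) \ge 0$, completing the verification of the three axioms.

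The main (mild) obstacle is worrying whether the uniform worst-case bound on each factor loses too much to close. However, Condition~\ref{cond:qvalue} has been reverse-engineered so that precisely this naive bound is tight enough: the condition is exactly the inversion of $m(1+e^{\lambda_m})^{-(m-1)} \le 1$. If one were to weaken the hypothesis, one would instead have to exploit the antisymmetry $\lambda_{ji} = -\lambda_{ij}$ across pairs (for $m=2$ this antisymmetry makes the bound collapse to equality, reproducing Corollary~\ref{corollary:rpg:pd} with no dummy action needed), but under the stated condition no such refinement is required.
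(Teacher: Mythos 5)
Your proof is correct and takes essentially the same route as the paper's: both bound every factor $p_{ij}$ by $(1+e^{\lambda_m})^{-1}$ with $\lambda_m = \min_{i,j}\lambda_{ji}$, obtain $\sum_i \pi(a=a_i|s) \le m\left(1+e^{\lambda_m}\right)^{-(m-1)}$, and invoke Condition~\ref{cond:qvalue} to conclude this is at most $1$, so $\pi(a=a'|s)\ge 0$. Your closing observation that the condition is exactly the inversion of this bound (and that $m=2$ needs no dummy action) is a nice addition but does not change the argument.
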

\begin{proof}
Since we have $\pi(a=a_i|s) > 0\   \forall i = 1,\dots,m$ and $\sum_i \pi(a=a_i|s) + \pi(a=a'|s) = 1 $. 
To prove that this is a valid probability distribution, we only need to show that $\pi(a=a'|s)\geq 0,\  \forall m \geq 2$, i.e.  $\sum_{i} \pi(a=a_i|s) \leq 1, \  \forall m \geq 2$. 
Let $\lambda_m = \min_{i,j} \lambda_{ji}$, 
	\begin{align*}
				& \sum_{i}\nolimits\pi(a=a_i|s)\nonumber \\
				=&  \sum_i\nolimits \prod\nolimits_{j=1, j\neq i}^{m}\ p_{ij}\nonumber\\
				=& \sum_i\nolimits \prod\nolimits_{j=1, j\neq i}^{m} \frac{1}{1+ e^{\lambda_{ji}}} \nonumber \\
				\leq & \sum_i\nolimits \prod\nolimits_{j=1, j\neq i}^{m} \frac{1}{1+ e^{\lambda_m} }\\ 
				= & m \left(\frac{1}{1+ e^{\lambda_m} }\right)^{m-1} \leq   1  \qquad \text{ (Condition~\ref{cond:qvalue})}. %\label{eq:less1}
	\end{align*}
This thus concludes the proof. 	
% To establish the last inequality~\eq{\ref{eq:less1}}, we need to restricts the range of $\lambda_m$ so that
% it satisfies $\lambda_m  \geq \ln (m^{\frac{1}{m-1}} - 1)$. 
\end{proof}

% \input{sec_appendix_example}

% !TEX ROOT=./main.tex

\section{Condition of Preserving Optimality}
\label{sec:th:optimal}

The following condition describes what types of MDPs are directly
applicable to the trajectory reward shaping (TRS, Def~\ref{def:PI}): 

\begin{condition}[Initial States]
The (near)-optimal trajectories will cover all initial states of MDP. i.e. $\{s(\tau,1) |\ \forall \tau \in \cT \} = \{s(\tau,1)|\ \forall \tau\} $, where $\cT = \{\tau | w(\tau)=1\}=\{\tau | r(\tau)\geq c\}$.
\label{cond:is}
\end{condition}
The MDPs satisfying this 
condition cover a wide range of tasks such as Dialogue System~\citep{li2017end},
Go~\citep{silver2017mastering}, video games~\citep{bellemare2013arcade} 
and all MDPs with only one initial state. If we want to preserve the optimality by TRS,
the optimal trajectories of a MDP need to cover all initial states or equivalently,
all initial states must lead to at least one optimal trajectory. Similarly, the near-optimality
is preserved for all MDPs that its near-optimal trajectories cover all initial states. 

\begin{figure*}
\centering
\begin{tabular}{cc}
% \hspace{-1.5em}	
\includegraphics[width=0.45\textwidth]{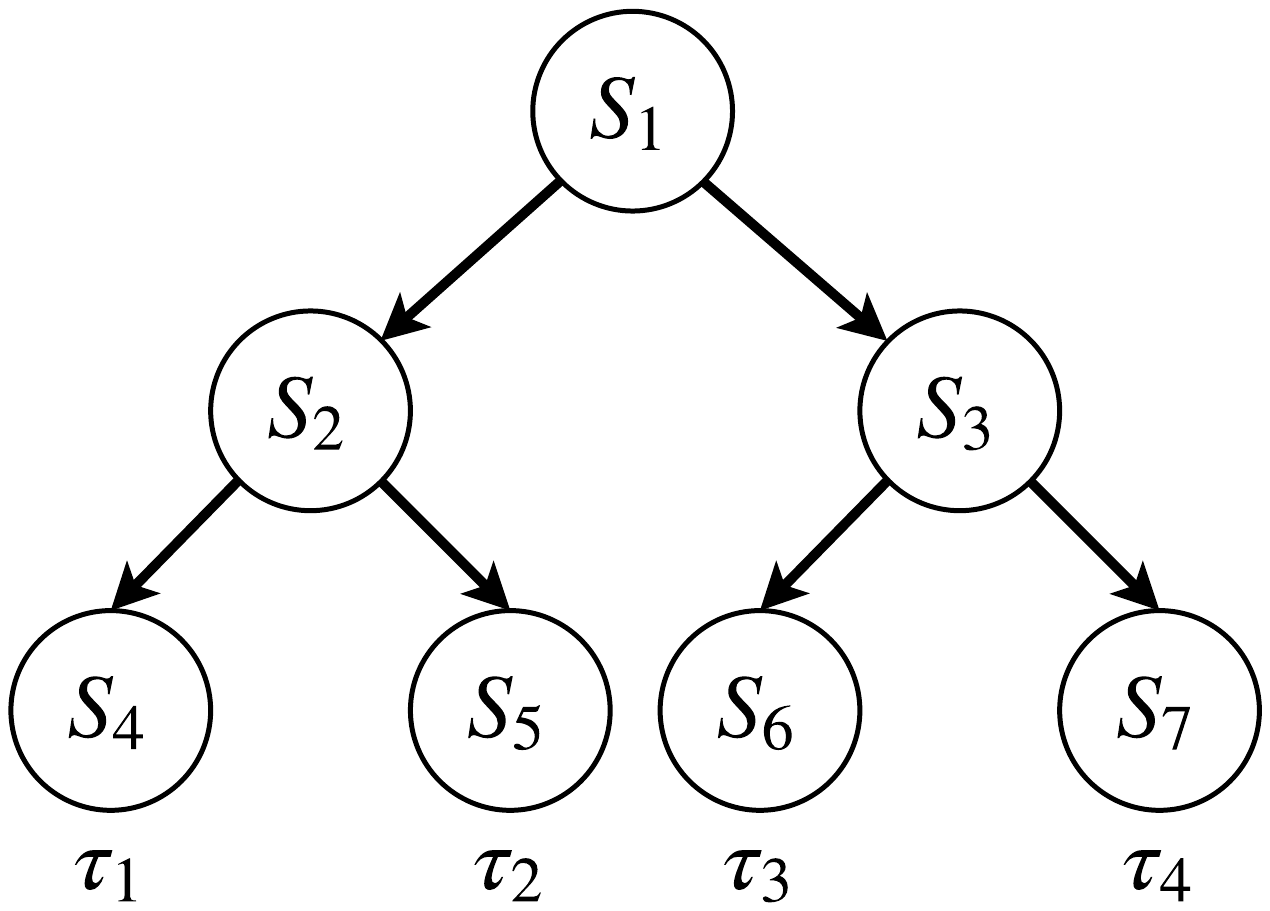} &
	\includegraphics[width=0.45\textwidth]{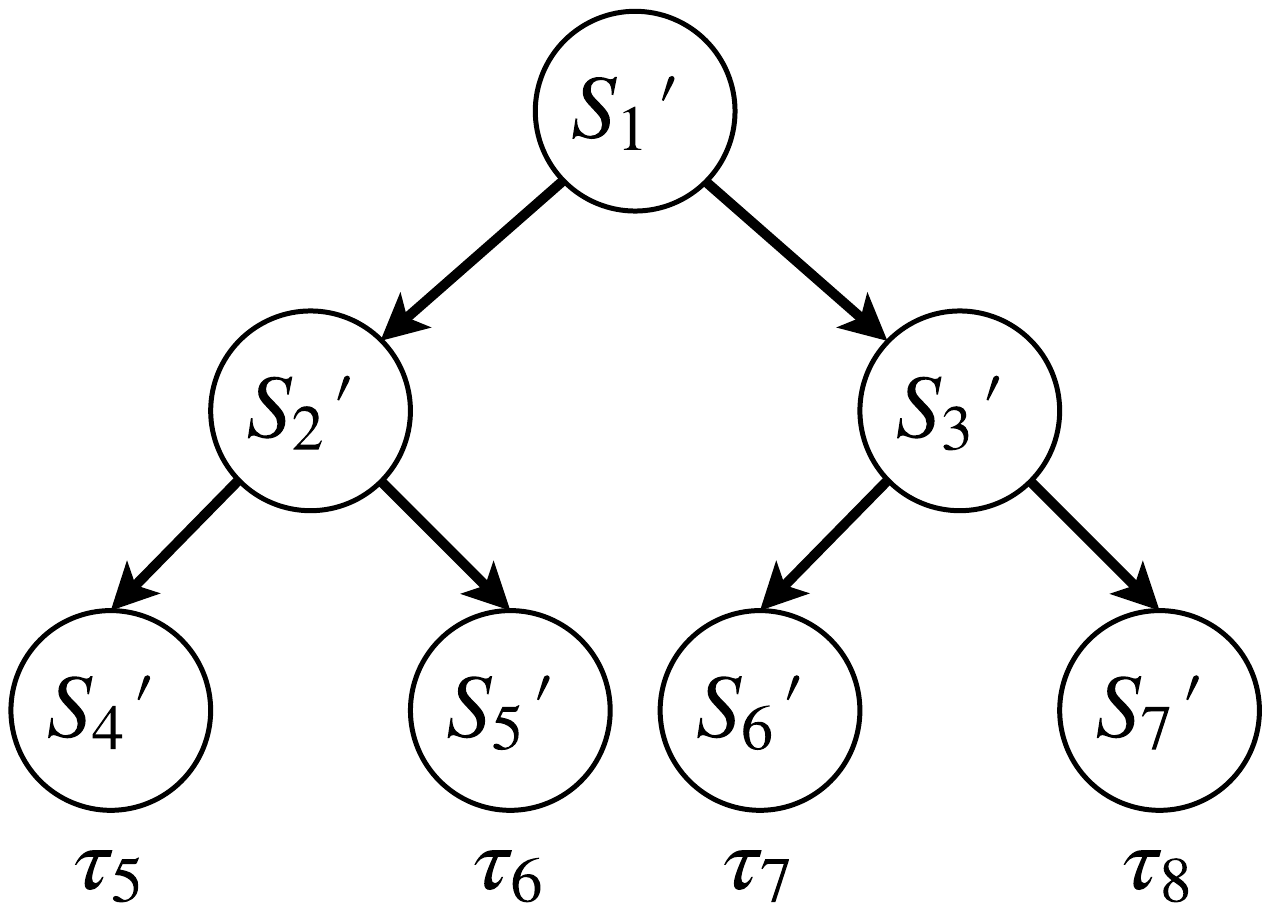} 
\end{tabular}
\caption{
The binary tree structure MDP with two initial states ($\cS_1 = \{s_1,s_1'\}$), similar as discussed
in~\citep{sun2017deeply}. Each path from a root to a leaf node denotes one possible 
trajectory in the MDP.}
\label{fig:mdps-2}
\end{figure*}
Theoretically, it is possible to transfer more general MDPs to satisfy Condition~\ref{cond:is} 
and preserve the optimality with potential-based reward shaping~\citep{ng1999policy}. 
More concretely, consider the deterministic binary tree MDP ($\cM_1$) with the set of initial states
$\cS_1 = \{s_1,s_1'\}$ as defined in Figure~\ref{fig:mdps-2}.
There are eight possible trajectories in $\cM_1$. Let $r(\tau_1) = 10 = R_{\max}, r(\tau_8)=3, \ r(\tau_i) = 2, \ \forall i = 2, \dots, 7$. Therefore, this MDP does not satisfy Condition~\ref{cond:is}. We can compensate the trajectory reward of the best trajectory starting from $s_1'$
to the $R_{\max}$ by shaping the reward with the potential-based function $\phi(s_7') = 7$ and  
$\phi(s) = 0, \forall s \neq s_7'$. This reward shaping requires more prior knowledge, which may not be
feasible in practice. A more realistic method is to design a dynamic trajectory reward shaping approach.
In the beginning, we set $c(s) = \min_{s \in \cS_1} r(\tau|s(\tau, 1) = s), \forall s \in \cS_1$. 
Take $\cM_1$ as an example, $c(s) = 3,\ \forall s \in \cS_1$. 
During the exploration stage, we track the current best trajectory of each initial state 
and update $c(s)$ with its trajectory reward. 

Nevertheless, if the Condition~\ref{cond:is} is not satisfied, 
we need more sophisticated prior knowledge other than a predefined
trajectory reward threshold $c$ to construct the replay buffer (training dataset of UNOP).
The practical implementation of trajectory reward shaping and rigorously theoretical study 
for general MDPs are beyond the scope of this work.

% !TEX ROOT=./main.tex

\section{Proof of Long-term Performance Theorem~\ref{th:ltpt}}
\label{sec:th:ltpt}

\begin{lemma}%[Trajectory log policy]
Given a specific trajectory $\tau$, the log-likelihood of state-action pairs over horizon $T$ is equal to 
the weighted sum over the entire state-action space, i.e.:
\begin{align}
\frac{1}{T}\sum_{t=1}^{T}\nolimits\log\pi_{\theta}(a_t|s_t) = \sum_{s, a}\nolimits p(s, a|\tau)\log\pi_{\theta}(a|s), \nonumber
\end{align}
where the sum in the right hand side is the summation over all possible state-action pairs. It is worth
noting that $p(s, a|\tau)$ is not related to any policy parameters. It is the probability of
a specific state-action pair $(s,a)$ in a specific trajectory $\tau$.
\label{lemma:tlp}
\end{lemma}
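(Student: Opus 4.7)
The plan is to interpret $p(s,a\mid\tau)$ as the empirical distribution of state--action pairs induced by the fixed trajectory $\tau$, and then show that the two sides of the claimed identity are just two different indexings of the same sum.

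First I would write
$$p(s,a\mid\tau) \;=\; \frac{1}{T}\sum_{t=1}^{T}\mathbf{1}\{s(\tau,t)=s,\; a(\tau,t)=a\},$$
which follows from the definition that, for a fixed trajectory of horizon $T$, each time step contributes mass $1/T$ to its associated state--action pair. This step uses no policy parameters, which is consistent with the remark in the lemma statement that $p(s,a\mid\tau)$ is independent of $\theta$.

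Next I would plug this expression into the right-hand side of the claimed identity, exchange the (finite) sums over $(s,a)$ and over $t$, and use the fact that the indicator $\mathbf{1}\{s_t=s,\,a_t=a\}$ collapses the inner sum over $(s,a)$ to a single term, yielding $\log \pi_{\theta}(a_t\mid s_t)$. This gives
$$\sum_{s,a} p(s,a\mid\tau)\log\pi_\theta(a\mid s) \;=\; \frac{1}{T}\sum_{t=1}^{T}\log\pi_\theta(a_t\mid s_t),$$
which is exactly the left-hand side. The swap of summations is justified because all sums are finite (finite state/action spaces and finite horizon).

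There is no substantial obstacle here; the result is essentially a change-of-variables identity between a time-indexed sum along a trajectory and a state-action-indexed sum weighted by the empirical occupancy of the trajectory. The only subtlety to be careful about is the case where a state--action pair $(s,a)$ is visited multiple times along $\tau$: the empirical distribution $p(s,a\mid\tau)$ must count multiplicities (hence the sum of indicators rather than a single indicator), so that the identity continues to hold without double-counting or under-counting. Once this bookkeeping is handled correctly, the lemma follows immediately.
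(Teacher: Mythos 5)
Your proposal is correct and matches the paper's own argument: both identify $p(s,a\mid\tau)$ with the empirical occupancy measure (occurrence count divided by $T$) and regroup the time-indexed sum into a state--action-indexed sum, with the paper counting multiplicities over unique pairs where you sum indicators over $t$ and swap finite sums. Your explicit remark about handling repeated visits is exactly the bookkeeping the paper performs via the occurrence counts $|(s_i,a_i)|$.
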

\begin{proof}
	 Given a trajectory $\tau=\{ (s(\tau, 1), a(\tau, 1)), \dots, (s(\tau, T), a(\tau, T))\}= \{(s_1, a_1), \dots, (s_T, a_T)\}$, denote the unique state-action pairs in this trajectory as $U(\tau) = \{(s_i, a_i)\}_{i=1}^n$, where $n$ is 
	 the number of unique state-action pairs in $\tau$ and $n \leq T$.
	 The number of occurrences of a state-action pair $(s_i, a_i)$ in the trajectory $\tau$ is denoted as $|(s_i, a_i)|$. Then we have the following: 
	 \begin{align}
	 	& \frac{1}{T}\sum_{t=1}^{T}\nolimits\log\pi_{\theta}(a_t|s_t) \nonumber \\
	  = & \sum_{i=1}^n\nolimits  \frac{|(s_i, a_i)|}{T}\log\pi_{\theta}(a_i|s_i) \nonumber \\
	  = & \sum_{i=1}^n\nolimits  p(s_i, a_i|\tau)\log\pi_{\theta}(a_i|s_i) \nonumber \\	
	  = & \sum_{(s, a) \in U(\tau)} \nolimits p(s, a|\tau)\log\pi_{\theta}(a|s) 	\label{lemma:tlp:2}\\
	  = & \sum_{(s, a) \in U(\tau)} \nolimits p(s, a|\tau)\log\pi_{\theta}(a|s)+ \sum_{(s, a) \notin U(\tau)} \nolimits p(s, a|\tau)\log\pi_{\theta}(a|s)\label{lemma:tlp:3}	\\
	  = & \sum_{(s, a)}\nolimits  p(s, a|\tau)\log\pi_{\theta}(a|s)	\nonumber
	 \end{align}
From \eq{\ref{lemma:tlp:2}} to \eq{\ref{lemma:tlp:3}} we used the fact: 
\begin{align*}
\sum\nolimits_{(s, a) \in U(\tau)}  p(s, a|\tau) = \sum\nolimits_{i=1}^n p(s_i, a_i|\tau) = \sum\nolimits_{i=1}^n\frac{|(s_i, a_i)|}{T} = 1,  
\end{align*}
and therefore we have $p(s, a|\tau) = 0, \ \forall (s, a) \notin U(\tau)$.
This thus completes the proof. %\hfill$\blacksquare$
\end{proof}
% We also assume there exists an optimal policy $\pi_*$ that generates the optimal trajectories equally~\citep{}. 
% \begin{assumption*}[Uniformly optimal policy]
% 	There exists an optimal policy $\pi_*$ such that the  probability distribution of optimal trajectories 
% 	given $\pi_*$ is uniform distribution. i.e. $p_{\pi_*}(\tau) = \frac{1}{|\cT|}$, where $|\cT|$ is the number 
% 	of optimal trajectories.
% \end{assumption*}

Now we are ready to prove the Theorem~\ref{th:ltpt}:
% \begin{theorem}[long-term performance theorem]
% Maximizing the log-likelihood of state-action pairs sampled from a uniform policy $\pi_*$,
% which is a supervised learning problem:
%   \begin{align}
%   \argmax_{\theta} \sum_{s,a}\nolimits p_{\pi_*}(s, a) \log\pi_{\theta}(a|s) 
%   % \label{eq:ltpt:sl}
%   \end{align}
% is equivalent to maximizing the lower bound of expected long-term performance as defined in \eq{\ref{eq:ltpt:elp}}.
% An optimal policy for maximizing this lower bound is also an optimal policy for maximizing
% the long-term performance, thus maximizing the return. 
% The optimal value of two surrogate objective functions Eqs.~{\ref{eq:ltpt:elp},\ref{eq:ltpt:sl}} and
% the maximum return meets at $R_{\max}$ when $\pi_\theta= \pi_*, c= R_{\max}$.
% % \label{th:ltpt}
% \end{theorem}
\begin{proof}
The following proof holds for an arbitrary subset of trajectories $\cT$ determined by the
threshold $c$ in Def~\ref{def:uop}. The $\pi_*$ is associated with $c$ and this subset of 
trajectories. We present the following lower bound of the expected long-term performance: 
% In this proof we use the optimal trajectories, i.e. $c= R_{\max}$ and $\pi_*$ is
% uniformly optimal policy.
\begin{align}
& \argmax_{\theta} \sum_{\tau} \nolimits p_{\theta}(\tau) w(\tau) \nonumber\\
& \because w(\tau) = 0,  \text{if} \ \tau \notin \cT \nonumber \\
=&\argmax_{\theta} \frac{1}{|\cT|}\sum_{\tau\in \cT}\nolimits  p_{\theta}(\tau)w(\tau) \nonumber \\
& \text{use Lemma~\ref{lemma:pp} }\because p_{\theta}(\tau)>0 \text{ and } w(\tau) > 0, \therefore \sum_{\tau\in \cT}\nolimits  p_{\theta}(\tau)w(\tau)>0 \nonumber\\
=&\argmax_{\theta} \log \left(\frac{1}{|\cT|}\sum_{\tau\in \cT} \nolimits p_{\theta}(\tau)w(\tau) \right) \nonumber \\
& \because \log\left(\sum_{i=1}^n\nolimits  x_i/n \right) \geq \sum_{i=1}^n\nolimits \log( x_i)/n, \forall i, x_i>0, \text{we have: } \nonumber\\
& \log \left(\frac{1}{|\cT|}\sum_{\tau\in \cT}\nolimits  p_{\theta}(\tau)w(\tau) \right) \geq  \sum_{\tau\in \cT} \nolimits \frac{1}{|\cT|}\log p_{\theta}(\tau)w(\tau), \nonumber
\end{align}
where the lower bound holds when $p_{\theta}(\tau)w(\tau) = \frac{1}{|\cT|}, \forall \tau \in \cT$. 
To this end, we maximize the lower bound of the expected long-term performance: 
\begin{align}
 & \argmax_{\theta} \sum_{\tau\in \cT}\nolimits  \frac{1}{|\cT|}\log p_{\theta}(\tau)w(\tau) \nonumber\\
% =& \sum_{\tau\in \cT} \argmax_{\theta} p_{\theta}(\tau)w(\tau) \\
% =& \sum_{\tau\in \cT} \argmax_{\theta} \log (p_{\theta}(\tau)w(\tau)) \hfill \text{ Use lemma~\ref{lemma:pp} and Def~\ref{def:PI}.}\\
% & \because p_{\theta}(\tau)>0, \forall \tau \in \cT,\\
% % = & \argmax_{\theta} \sum_{\tau \in \cT} \log p_{\theta}(\tau) \\
% & \text{also then based on the definition of performance indicator, }  w(\tau) >0 \forall \tau \in \cT,  \\
% = & \argmax_{\theta} \sum_{\tau \in \cT} p(\tau) \\
% = &  \argmax_{\theta} \sum_{\tau\in\cT}  \log(p_{\theta}(\tau)w(\tau))\\
=&\argmax_{\theta} \sum_{\tau\in\cT} \nolimits \log(p(s_1)\prod\nolimits_{t=1}^{T}(\pi_{\theta}(a_t|s_t)p(s_{t+1}|s_{t}, a_{t})) w(\tau) ) \nonumber \\
% =&\argmax_{\theta} \sum_{\tau} \log(p(s_1)(\Pi_{t=1}^{T}\pi_{\theta}(a_t|s_t)) \\ 
%  & \hspace{2.8cm}(\Pi_{t=1}^{T}p(s_{t+1}|s_{t}, a_{t})) )\\
% =&\argmax_{\theta} \sum_{\tau} \log(p(s_1)(\Pi_{t=1}^{T}\pi_{\theta}(a_t|s_t)) \\ 
%  & \hspace{2.8cm}(\Pi_{t=1}^{T}p(s_{t+1}|s_{t}, a_{t})) )\\
 =&\argmax_{\theta} \sum_{\tau\in\cT} \nolimits \log\left(p(s_1)\prod\nolimits_{t=1}^{T}\pi_{\theta}(a_t|s_t)\prod\nolimits_{t=1}^{T}p(s_{t+1}|s_{t}, a_{t}) w(\tau)\right) \nonumber\\
=& \argmax_{\theta} \sum_{\tau\in\cT} \nolimits \left( \log p(s_1) + \sum\nolimits_{t=1}^{T}\log p(s_{t+1}|s_{t}, a_{t})+ \sum_{t=1}^{T}\nolimits \log\pi_{\theta}(a_t|s_t) + \log w(\tau) \right) \label{eq:constantw}\\
& \text{The above shows that } w(\tau) \text{ can be set as an arbitrary positive constant } \nonumber\\
=& \argmax_{\theta} \frac{1}{|\cT|} \sum_{\tau\in\cT} \nolimits \sum_{t=1}^{T}\nolimits \log\prod\nolimits_{\theta}(a_t|s_t) \nonumber\\
=& \argmax_{\theta} \frac{1}{|\cT|T} \sum_{\tau\in\cT} \nolimits \sum_{t=1}^{T}\nolimits \log\prod\nolimits_{\theta}(a_t|s_t)  \label{eq:ltpt:sumtraj}\\
=& \argmax_{\theta} \frac{1}{|\cT|} \sum_{\tau\in\cT}\nolimits \frac{1}{T} \sum_{t=1}^{T}\nolimits \log\pi_{\theta}(a_t|s_t) \quad\text{ (the existence of UNOP in Assumption~\ref{ass:uot})  } \nonumber \\
=&\argmax_{\theta}  \sum_{\tau \in\cT}\nolimits  p_{\pi_*}(\tau) \frac{1}{T} \left(\sum_{t=1}^{T} \nolimits \log\pi_{\theta}(a_t|s_t) \right) \nonumber\\
& \text{where } \pi_* \text{ is a UNOP (Def~\ref{def:uop})} \Rightarrow p_{\pi_*}(\tau) = 0 \quad \forall \tau \notin \cT  \label{eq:ltpt:48}\\
% & \text{Based on lemma~\ref{lemma:op}, we have } p_{\pi_*}(\tau) = 0 \forall \tau \notin \cT\\
& \text{\eq{\ref{eq:ltpt:48}} can be established based on }  \sum_{\tau \in\cT}\nolimits  p_{\pi_*}(\tau) = \sum_{\tau \in\cT} \nolimits 1/|\cT| = 1\nonumber\\
=&\argmax_{\theta}  \sum_{\tau }\nolimits  p_{\pi_*}(\tau) \frac{1}{T} \left(\sum\nolimits_{t=1}^{T}\log\pi_{\theta}(a_t|s_t) \right) \qquad \text{(Lemma~\ref{lemma:tlp})}\nonumber\\
=&\argmax_{\theta}  \sum_{\tau } \nolimits  p_{\pi_*}(\tau)    \sum_{s, a}\nolimits  p(s, a|\tau)\log\pi_{\theta}(a|s) \nonumber \\
& \text{The 2nd sum is over all possible state-action pairs. $(s, a)$ represents a specific state-action pair.}\nonumber\\
% =&\argmax_{\theta}  \sum_{\tau}  p_{\pi_*}(\tau) \sum_{s,a} p(s, a|\tau) \log\pi_{\theta}(a|s) \\
=&\argmax_{\theta}  \sum_{\tau }\nolimits \sum_{s,a} \nolimits  p_{\pi_*}(\tau)  p(s, a|\tau) \log\pi_{\theta}(a|s) \nonumber \\
=&\argmax_{\theta}  \sum_{s,a}\nolimits  \sum_{\tau}\nolimits  p_{\pi_*}(\tau)  p(s, a|\tau) \log\pi_{\theta}(a|s) \nonumber \\
% =&\argmax_{\theta}  \sum_{s,a} \sum_{\tau} p_{\pi_*}(\tau) p(s, a|\tau) \log\pi_{\theta}(a|s) \\
% % & \text{based on the definition of performance indicator, } p_{\pi_*}(\tau) = 0, \text{if and only if } w(\tau) =0 \\
% & \text{the above is established on transition dynamic can be zero. if optimal action is taken, zero probability to wrong states.}\\
% & \text{which means, the probability of bad trajectory under optimal policy can be zero. otherwise we can find 
% another policy, \lkxcom{this is an assumption}}\\
=&\argmax_{\theta}  \sum_{s,a} \nolimits p_{\pi_*}(s, a) \log\pi_{\theta}(a|s).\label{eq:ltpt:slapp}
\end{align}
In this proof we use $s_t=s(\tau, t)$ and $a_t=a(\tau, t)$ as abbreviations, which denote the
$t$-th state and action in the trajectory $\tau$, respectively.
$|\cT|$ denotes the number of trajectories in $\cT$. We also use the definition
of $w(\tau)$ to only focus on near-optimal trajectories. We set $w(\tau) = 1$ for simplicity but it
will not affect the conclusion if set to other constants. 

\textbf{Optimality:}
Furthermore, the optimal solution
for the objective function~\eq{\ref{eq:ltpt:slapp}} is a uniformly (near)-optimal policy $\pi_*$. 
\begin{align}
	 & \argmax_{\theta}\sum_{s,a}\nolimits  p_{\pi_*}(s, a) \log\pi_{\theta}(a|s) \nonumber \\
	=& \argmax_{\theta}\sum_{s} \nolimits p_{\pi_*}(s) \sum_a \nolimits \pi_*(a|s) \log\pi_{\theta}(a|s) \nonumber\\
	=& \argmax_{\theta}\sum_{s} \nolimits p_{\pi_*}(s) \sum_a \nolimits \pi_*(a|s) \log\pi_{\theta}(a|s) - \sum_{s} \nolimits p_{\pi_*}(s) \sum_a \nolimits  \log \pi_*(a|s) \nonumber\\
	= & \argmax_{\theta}\sum_{s} \nolimits p_{\pi_*}(s) \sum_a \nolimits \nolimits \pi_*(a|s) \log\frac{\pi_{\theta}(a|s)}{\pi_*(a|s)} \nonumber\\
	= & \argmax_{\theta}\sum_{s} \nolimits p_{\pi_*}(s) \sum_a \nolimits - KL(\pi_*(a|s)||\pi_{\theta}(a|s))
	 = \pi_* \nonumber
\end{align}

Therefore,
the optimal solution of ~\eq{\ref{eq:ltpt:slapp}} is also the (near)-optimal solution for the original RL
problem since 
$\sum_{\tau} p_{\pi_*}(\tau)r(\tau) = \sum_{\tau\in\cT} \frac{1}{|\cT|} r(\tau) \geq c = R_{\max}-\epsilon $. The optimal solution is obtained when we set $c = R_{\max}$. 
\label{pf:ltpt}
% \hfill$\blacksquare$
\end{proof}

\begin{lemma}
Given any optimal policy $\pi$ of MDP satisfying Condition~\ref{cond:is},  $ \forall \tau \notin \cT$, we have $p_{\pi}(\tau) = 0 $
, where $\cT$ denotes the set of all possible optimal trajectories in this lemma.
If $\exists \tau \notin \cT$, such that $p_{\pi}(\tau) > 0$, then $\pi$ is not an optimal policy.
\label{lemma:op}
\end{lemma}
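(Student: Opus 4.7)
The plan is to proceed by contradiction. I would assume $\pi$ is optimal yet there exists some $\tau' \notin \cT$ with $p_\pi(\tau') > 0$, and then exhibit a policy that achieves a strictly higher expected return than $\pi$, contradicting optimality.

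The first step is to use Condition~\ref{cond:is} to establish that $R_{\max}$ is the optimal expected return of the MDP. Since every initial state $s_0$ is the starting state of at least one trajectory $\tau \in \cT$ with $r(\tau) = R_{\max}$, I would construct (in the deterministic-dynamics setting the paper implicitly considers) a policy $\pi^{\dagger}$ that, from each initial state $s_0$, selects the action sequence corresponding to some optimal trajectory starting at $s_0$. Then $p_{\pi^{\dagger}}(\tau) = 0$ for all $\tau \notin \cT$, and $\sum_{\tau} p_{\pi^{\dagger}}(\tau) r(\tau) = \sum_{s_0} p(s_0) R_{\max} = R_{\max}$.

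The second step is to compute the expected return of $\pi$ by splitting the sum over $\cT$ and its complement:
\begin{align*}
\sum_{\tau} p_{\pi}(\tau) r(\tau)
&= \sum_{\tau \in \cT} p_{\pi}(\tau) R_{\max} + \sum_{\tau \notin \cT} p_{\pi}(\tau) r(\tau) \\
&= R_{\max} - \sum_{\tau \notin \cT} p_{\pi}(\tau)\bigl(R_{\max} - r(\tau)\bigr),
\end{align*}
using $\sum_{\tau} p_{\pi}(\tau) = 1$. By the definition of $\cT$ as the set of optimal trajectories, every $\tau \notin \cT$ satisfies $r(\tau) < R_{\max}$ strictly. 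Combined with the contradiction hypothesis $p_{\pi}(\tau') > 0$ for some $\tau' \notin \cT$, the subtracted term is strictly positive, so $\sum_{\tau} p_{\pi}(\tau) r(\tau) < R_{\max} = \sum_{\tau} p_{\pi^{\dagger}}(\tau) r(\tau)$, which contradicts $\pi$ being optimal. This finishes the proof.

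The main obstacle is the first step, where one must justify that the optimum is exactly $R_{\max}$. For deterministic dynamics this follows cleanly from Condition~\ref{cond:is} by the trajectory-following construction above; for stochastic dynamics this requires either an additional realizability assumption (that some policy can induce a distribution supported entirely on $\cT$) or reinterpreting the lemma to refer to MDPs whose dynamics permit the attainment of $R_{\max}$. The remaining arithmetic is a straightforward strict-inequality argument and carries no real difficulty.
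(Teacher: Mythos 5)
Your proof is correct and reaches the same conclusion by a genuinely different route. The paper also argues by contradiction, but instead of comparing $\pi$ against a known optimal value, it performs a local mass-transfer on the trajectory distribution: it defines a competitor $\pi'$ with $p_{\pi'}(\tau') = 0$, $p_{\pi'}(\tau_1) = p_{\pi}(\tau_1) + p_{\pi}(\tau')$ for one $\tau_1 \in \cT$, and $p_{\pi'} = p_{\pi}$ elsewhere, then shows the shaped objective $\sum_{\tau} p(\tau) w(\tau)$ increases by exactly $p_{\pi}(\tau') > 0$. Note two differences of substance. First, the paper works with the shaped reward $w(\tau) \in \{0,1\}$ rather than the raw return $r(\tau)$; your decomposition $\sum_{\tau} p_{\pi}(\tau) r(\tau) = R_{\max} - \sum_{\tau \notin \cT} p_{\pi}(\tau)(R_{\max} - r(\tau))$ proves the statement for return-optimality directly, which is arguably closer to how the lemma is invoked in the main text. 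Second, the paper's competitor $\pi'$ is specified only as a trajectory distribution, and it is never verified that some actual policy of the MDP induces that distribution; your construction of $\pi^{\dagger}$ as a trajectory-following policy under Condition~\ref{cond:is} closes exactly that gap, at the price of the deterministic-dynamics caveat you honestly flag (a caveat the paper's argument implicitly shares, since realizability of $p_{\pi'}$ is just as delicate under stochastic transitions). In short: the paper's argument is shorter and avoids identifying the optimal value; yours is more explicit about realizability and pins the optimum at $R_{\max}$, which makes the strict-inequality step transparent.
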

\begin{proof}
We prove this by contradiction. We assume $\pi$ is an optimal policy. If $\exists \tau' \notin \cT$, such that 1) $p_{\pi}(\tau') \neq 0$, or equivalently: $p_{\pi}(\tau') > 0$ since $p_{\pi}(\tau')\in[1, 0]$. 
and 2) $\tau' \notin \cT$. We can find a better policy $\pi'$ by satisfying the following three conditions: 
\begin{align*}
	& p_{\pi'}(\tau')  =  0  \text{ and }\\
	& p_{\pi'}(\tau_1) = p_{\pi}(\tau_1) + p_{\pi}(\tau'),  \tau_1 \in \cT \text{ and }\\
	& p_{\pi'}(\tau) =  p_{\pi}(\tau), \forall \tau \notin \{\tau', \tau_1\}
\end{align*}
Since $p_{\pi'}(\tau) \geq 0, \forall \tau $ and $\sum_{\tau}p_{\pi'}(\tau) = 1$, therefore $p_{\pi'}$
constructs a valid probability distribution.
Then the expected long-term performance of $\pi'$ is greater than that of $\pi$:
\begin{align*}
 &	\sum_{\tau}\nolimits  p_{\pi'}(\tau)w(\tau) - \sum_{\tau}\nolimits  p_{\pi}(\tau)w(\tau) \\
=& \sum_{\tau \notin \{\tau', \tau_1\}}\nolimits  p_{\pi'}(\tau)w(\tau) + p_{\pi'}(\tau_1)w(\tau_1)  + p_{\pi'}(\tau')w(\tau')\\
& \qquad - \left( \sum_{\tau \notin \{\tau', \tau_1\}}\nolimits  p_{\pi}(\tau)w(\tau) + p_{\pi}(\tau_1)w(\tau_1)  + p_{\pi}(\tau')w(\tau')\right)\\
=&  p_{\pi'}(\tau_1)w(\tau_1)  + p_{\pi'}(\tau')w(\tau') - (p_{\pi}(\tau_1)w(\tau_1)  + p_{\pi}(\tau')w(\tau'))\\
& \because \tau' \notin \cT, \therefore w(\tau') = 0 \text{ and } \tau_1 \in \cT, \therefore w(\tau) = 1 \\
=& p_{\pi'}(\tau_1) - p_{\pi}(\tau_1) \\
=& p_{\pi}(\tau_1) + p_{\pi}(\tau') - p_{\pi}(\tau_1) =  p_{\pi}(\tau') > 0.
\end{align*}
Essentially, we can find a policy $\pi'$ that has higher probability on the optimal trajectory $\tau_1$
and zero probability on $\tau'$. This indicates that it is a better policy than $\pi$. Therefore, $\pi$ is not an optimal policy and it contradicts our assumption, which proves that such $\tau'$ does not exist. Therefore, $ \forall \tau \notin \cT$, we have $p_{\pi}(\tau) = 0 $. %\hfill$\blacksquare$
\end{proof}

\begin{lemma}[Policy Performance]
If the policy takes the form as in \eq{\ref{eq:top1:p}} or \eq{\ref{eq:rpg:policy}}, then we have 
$\forall \tau$, $p_\theta(\tau) > 0$. This means for all possible trajectories allowed by the environment, the policy takes the form
of either ranking policy or softmax will generate this trajectory with probability $p_\theta(\tau) > 0$.
Note that because of this property, $\pi_\theta$ is not an optimal policy according to Lemma~\ref{lemma:op}, though it can be arbitrarily close to an optimal policy.
\label{lemma:pp}
\end{lemma}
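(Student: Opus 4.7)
The plan is to directly evaluate $p_\theta(\tau)$ by expanding it according to its definition
$p_\theta(\tau) = p(s_1) \prod_{t=1}^{T} \pi_\theta(a_t|s_t)\, p(s_{t+1}|s_t,a_t)$
and show that every factor is strictly positive for any trajectory that the environment permits. The environment factors $p(s_1)$ and $p(s_{t+1}|s_t,a_t)$ are positive by assumption (otherwise $\tau$ would not be ``possible''), so the only content of the lemma is to verify that both of the parameterized policies assign strictly positive probability to every action.

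For the listwise/softmax policy in Eq.~(\ref{eq:top1:p}), I would simply note that $\phi$ is strictly positive by hypothesis, so the numerator is positive and the denominator is a finite positive sum; hence $\pi_\theta(a_i|s) > 0$ for every $i$. For the pairwise ranking policy in Eq.~(\ref{eq:rpg:policy}), I would observe that each pairwise factor $p_{ij} = \exp(\lambda_i - \lambda_j)/(1+\exp(\lambda_i - \lambda_j))$ lies in $(0,1)$ for any finite $\lambda_i,\lambda_j$, so the finite product $\pi_\theta(a_i|s) = \prod_{j\neq i} p_{ij}$ is also strictly positive. Chaining these bounds across the $T$ time steps of $\tau$ gives $p_\theta(\tau) > 0$.

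For the second assertion (that $\pi_\theta$ is therefore not optimal), I would invoke Lemma~\ref{lemma:op}: any optimal policy $\pi$ must satisfy $p_\pi(\tau) = 0$ for every $\tau \notin \cT$. Since we have just shown $p_\theta(\tau) > 0$ for \emph{every} environmentally possible $\tau$, in particular for those suboptimal trajectories $\tau \notin \cT$ that the environment admits, $\pi_\theta$ violates the necessary condition and hence cannot be an optimal policy. The remark that $\pi_\theta$ can still be ``arbitrarily close'' to optimal follows from the universal approximation argument already cited in the body of the paper (Hornik et al.\ and Syed–Schapire), and needs no separate proof here.

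The main obstacle is essentially notational rather than mathematical: one must be careful to interpret ``possible trajectory'' as a trajectory with strictly positive environment probability, so that the positivity of the $p(s_{t+1}|s_t,a_t)$ factors is guaranteed and the argument reduces to the positivity of $\pi_\theta(a_t|s_t)$. Once this scoping is fixed, both cases are immediate from the explicit formulas of the policies, and no assumption beyond the finiteness of the $\lambda$-values (which holds under Condition~\ref{ass:av}) is required.
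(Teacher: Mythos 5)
Your proposal is correct and follows essentially the same route as the paper's proof: expand $p_\theta(\tau)$ into its environment and policy factors, note that both the softmax form in \eq{\ref{eq:top1:p}} and the pairwise product form in \eq{\ref{eq:rpg:policy}} assign strictly positive probability to every action, and restrict attention to trajectories whose environment factors are positive. Your version is in fact slightly more careful than the paper's (which merely asserts $\pi_\theta(a_t|s_t)>0$ and contains a sign typo in the implication), and your handling of the non-optimality remark via Lemma~\ref{lemma:op} matches the paper's intent.
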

\begin{proof}
\begin{align*}
& \text{The trajectory probability is defined as: } p(\tau) = p(s_1)\Pi_{t=1}^{T}(\pi_{\theta}(a_t|s_t)p(s_{t+1}|s_{t}, a_{t})) \\
& \text{Then we have: }\nonumber \\
& \textrm{The policy takes the form as in \eq{\ref{eq:top1:p}} or \eq{\ref{eq:rpg:policy}} } \Rightarrow \pi_{\theta}(a_t|s_t) > 0. \\
&  p(s_1)>0, p(s_{t+1}|s_{t}, a_{t}) > 0. \Rightarrow p_\theta(\tau) = 0. \\
& p(s_{t+1}|s_{t}, a_{t}) = 0 \text{ or }  p(s_1) = 0, \Rightarrow p_\theta(\tau) = 0, \textrm{ which means } \tau \text{ is not a possible trajectory. }\nonumber \\
% & \text{ then the probability of sampling $\tau$ from any policy is zero. This trajectory does not exist. }\nonumber \\
& \text{In summary, for all possible trajectories, } p_\theta(\tau) > 0. 
\end{align*}
This thus completes the proof. %\hfill$\blacksquare$
\end{proof}

% !TEX ROOT=./main.tex
\section{Proof of Corollary~\ref{cor:rppg}}
\label{sbsec:corrppg}
\begin{corollary}[Ranking performance policy gradient]
The lower bound of expected long-term performance by 
ranking policy can be approximately optimized by the following loss:
\begin{align}
   \min_{\theta} \sum_{s, a_i}\nolimits p_{\pi_*}(s, a_i) L(s_{i}, a_{i}) \label{eq:hingeloss2}
\end{align}
where the pair-wise loss $L(s_{i}, a_{i})$ is defined as:
\begin{align*}
\cL(s, a_{i}) = \sum_{j=1,j\neq i}^{|A|}\nolimits  \max (0 , 1 + \lambda(s, a_j) - \lambda(s,  a_{i}))
% \label{eq:rankingv2} 
\end{align*}
% \label{cor:rppg}
\end{corollary}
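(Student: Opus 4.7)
My plan is to chain Theorem~\ref{th:ltpt} with the explicit parametric form of the pairwise ranking policy (Def~\ref{def:rpg:policy}) and then swap the resulting softplus surrogate for the hinge surrogate. First, I would invoke Theorem~\ref{th:ltpt} to rewrite the task of maximizing the lower bound on long-term performance as the supervised objective
\begin{align*}
\max_\theta \sum_{s}\sum_{a_i} p_{\pi_*}(s, a_i)\,\log \pi_\theta(a_i \mid s).
\end{align*}

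Next, I would substitute the pairwise ranking policy $\pi_\theta(a_i\mid s)=\prod_{j\neq i} p_{ij}$ with $p_{ij}=e^{\lambda_i-\lambda_j}/(1+e^{\lambda_i-\lambda_j})$ from \eq{\ref{eq:rpgt:pij}}. Taking logarithms and flipping sign converts the outer maximization into the minimization
\begin{align*}
\min_\theta \sum_{s,a_i} p_{\pi_*}(s,a_i) \sum_{j\neq i} \log\bigl(1+e^{\lambda(s,a_j)-\lambda(s,a_i)}\bigr),
\end{align*}
i.e.\ a sum of pairwise logistic (softplus) losses that penalize a competing action $a_j$ having a higher $\lambda$-value than the chosen action $a_i$. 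This step is a direct calculation and should not pose difficulty.

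The final step, and the one requiring justification, is to replace the softplus surrogate $\log(1+e^{x})$ by the hinge surrogate $\max(0,1+x)$, yielding exactly the loss in \eq{\ref{eq:rankingv2}}. My plan is to justify this as a standard surrogate substitution used in learning-to-rank: both $\log(1+e^{x})$ and $\max(0,1+x)$ are convex, non-increasing-in-$(\lambda_i-\lambda_j)$ upper bounds of the 0--1 ranking indicator $\mathbf{1}\{\lambda_j\ge\lambda_i\}$, and both share the same population minimizer, namely any $\lambda$ satisfying $\lambda(s,a_i)>\lambda(s,a_j)$ for all $j\neq i$ at states where $\pi_*$ selects $a_i$. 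Since the corollary only claims an approximate optimization of the lower bound (note the word ``approximately''), this surrogate change is the intended interpretation.

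The main obstacle is precisely this last substitution: the softplus-to-hinge swap is not an equality, so I would need to frame it carefully as a surrogate replacement rather than an algebraic identity. I would note that (i) both losses are consistent in the sense that the argmax structure of the optimal $\lambda$ is preserved, (ii) hinge loss enforces a unit margin between the chosen action's $\lambda$-value and every competitor's, which aligns with the goal of Def~\ref{def:rav} of producing the correct ranking rather than calibrated probabilities, and (iii) empirically this matches how pairwise RankSVM replaces the RankNet logistic loss. Everything else in the proof is mechanical substitution.
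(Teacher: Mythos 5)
Your proposal is correct and follows essentially the same route as the paper's proof: invoke Theorem~\ref{th:ltpt}, substitute the pairwise ranking policy of \eq{\ref{eq:rpg:policy}} to obtain the sum of pairwise softplus losses $\sum_{j\neq i}\log\bigl(1+e^{\lambda_j-\lambda_i}\bigr)$, and then justify the hinge loss as an argmax-preserving surrogate. The only difference is that the paper inserts an intermediate first-order Taylor expansion $\log(1+e^{x})\approx \log 2 + x/2$ (under a boundedness condition $|\lambda_{ij}|<1$) to reach a linear pairwise loss before arguing that the hinge objective shares the same optimal ranking, whereas you swap softplus for hinge directly with the same ``shared minimizer structure'' justification --- an equally informal but slightly more direct final step.
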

\begin{proof}
In RPG, the policy $\pi_{\theta}(a|s)$ is defined as in \eq{\ref{eq:rpg:policy}}. 
We then replace the action probability distribution in \eq{\ref{eq:ltpt:sl}} with the 
RPG policy. 
\begin{align}
 & \because  \pi(a=a_i|s) = \Pi_{j=1, j\neq i}^{m} p_{ij} \\
& \text{Because RPG is fitting a deterministic optimal policy, } \nonumber\\
& \text{we denote the optimal action given sate } s  \text{ as }  a_i, \text{ then we have } \nonumber \\
& \max_{\theta}\sum_{s,a_i}p_{\pi_*}(s, a_i) \log\pi(a_i|s)\\
=& \max_{\theta}\sum_{s,a_i}p_{\pi_*}(s, a_i) \log(\Pi_{j\neq i, j=1}^m p_{ij} ) \\
% =&\max_{\theta} \sum_{s, a} p_{\pi_*}(s, a_i)  \log \Pi_{s, a_i}\Pi_{a_j\neq a_i} \frac{1}{1 + e^{Q_{ji}}}\\
=& \max_{\theta} \sum_{s, a_i} p_{\pi_*}(s, a_i) \log \Pi_{j\neq i, j=1}^m \frac{1}{1 + e^{\lambda_{ji}}}\\
=& \min_{\theta} \sum_{s, a_i}p_{\pi_*}(s, a_i)\sum_{j\neq i, j=1}^m\log (1 + e^{\lambda_{ji}}) \text{ first order Taylor expansion}\\
\approx & \min_{\theta} \sum_{s, a_i}p_{\pi_*}(s, a_i)\sum_{j\neq i, j=1}^m \lambda_{ji} \hspace{2em} \text{ s.t. }  |\lambda_{ij}| =c < 1, \forall i, j, s\\
= & \min_{\theta} \sum_{s, a_i}p_{\pi_*}(s, a_i)\sum_{j\neq i, j=1}^m (\lambda_j-\lambda_i) \hspace{2em}  \text{ s.t. }   |\lambda_i - \lambda_{j}| = c < 1, \forall i, j, s  \label{eq:hingebefore}\\
% & \text{Consider a deterministic optimal policy, for each state   $a_i = \pi^*(s)$, as long as we have }\nonumber\\
% & \lambda_j < \lambda_i, \text{ when the }i\text{-th action is the optimal action , we obtained our optimal policy.}\nonumber\\
% & \text{therefore, we can use the hinge loss to replace the above objective function. }\nonumber\\
\Rightarrow & \min_{\theta} \sum_{s, a_i}p_{\pi_*}(s, a_i) L(s_{i}, a_{i}) \label{eq:hingeafter}\\%(Eq. \ref{eq:hingeloss2})\\
& \text{where the pairwise loss $L(s, a_{i})$ is defined as:} \nonumber \\
&\cL(s, a_{i}) = \sum_{j=1,j\neq i}^{|A|} \max (0 , \text{margin}  + \lambda(s, a_j) - \lambda(s,  a_{i})), \\
& \text{where the margin in \eq{\ref{eq:hingeafter}} is a small positive constant. }
% \label{eq:rankingv2}
\end{align}
From~\eq{\ref{eq:hingebefore}} to~\eq{\ref{eq:hingeafter}}, we consider learning a deterministic optimal policy $a_i = \pi^*(s)$, where
we use index $i$ to denote the optimal action at each state.  The optimal $\lambda$-values minimizing \eq{\ref{eq:hingebefore}} (denoted by $\lambda^1$) need
to satisfy $\lambda^1_i = \lambda^1_j + c, \forall j \neq i, s$. The optiaml $\lambda$-values minimizing \eq{\ref{eq:hingeafter}} (denoted by $\lambda^2$) need to satisfy $\lambda^2_i = \max_{j\neq i} \lambda^2_j + \text{margin}, \forall j \neq i, s$. In both cases, the optimal policies from solving \eq{\ref{eq:hingebefore}} and \eq{\ref{eq:hingebefore}} are the same: $\pi(s) = \argmax_k \lambda^1_k = \argmax_k \lambda^2_k = a_i$. Therefore, we use~\eq{\ref{eq:hingeafter}} as a surrogate optimization problem of~\eq{\ref{eq:hingebefore}}.
% where the margin $\rho$ in hinge loss should be a positive value. The absolute value of 
% $\rho$ is not essential since we can normalize loss and we only focus on the pairwise
% relationship of $Q$.
% \hfill$\square$
\end{proof}

% !TEX ROOT=./main.tex

\section{Policy gradient variance reduction}
\label{sec:app:pgvr}
% Before we prove the corollary, we first re-state two mild assumptions as follows:
% \begin{assumption}
% 	we assume the existence of maximum norm of log gradient over all possible state-action pairs, i.e.
% $$C= \max_{s, a} \norm{\grad_{\theta}\log \pi_{\theta}(a|s)}_\infty$$
% % \label{ass:gmn}
% \end{assumption}
% \begin{assumption}
% 	We assume the trajectory reward is bounded $R_{\max} = \max_{\tau}r(\tau)$.
% 	% \label{ass:br}
% \end{assumption}

\begin{corollary}[Variance reduction]
Given a stationary policy, 
the upper bound of the variance of each dimension of policy gradient is
$\cO(T^2C^2R^2_{\max})$. The upper bound of gradient variance of maximizing the
lower bound of long-term performance  \eq{\ref{eq:ltpt:sl}} is $\cO(C^2)$, where
$C$ is the maximum norm of log gradient based on Assumption~\ref{ass:gmn}. The supervised learning has reduced the upper bound of gradient variance by an order of $\cO(T^2 R_{\max}^2)$ as compared to the regular policy gradient, 
considering $R_{\max}\geq 1, T\geq 1$,  which is a very common situation in practice.
\end{corollary}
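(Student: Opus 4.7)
The plan is to bound the variance of each gradient estimator dimension-wise using the standard inequality $\mathrm{Var}(X) \leq \mathbb{E}[X^2] \leq (\sup |X|)^2$, and then compare the two suprema. Concretely, I treat the random variable $X = \grad_\theta J(\theta)$ (a one-sample Monte-Carlo estimator from a trajectory $\tau$) versus $Y = \grad_\theta \log \pi_\theta(a|s)$ (a one-sample estimator from a single state-action pair drawn from $p_{\pi_*}$), and argue component by component.

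First, for the standard policy gradient estimator, a single Monte-Carlo sample takes the form
\[
X \;=\; \sum_{t=1}^{T} \grad_\theta \log \pi_\theta(a_t|s_t)\, r(\tau).
\]
Applying the triangle inequality coordinate-wise, Assumption~\ref{ass:gmn} gives $\|\grad_\theta \log \pi_\theta(a_t|s_t)\|_\infty \leq C$, and by the definition of $R_{\max}$ we have $|r(\tau)| \leq R_{\max}$. Thus each coordinate of $X$ is uniformly bounded by $T C R_{\max}$, and for every dimension $i$,
\[
\mathrm{Var}(X_i) \;\leq\; \mathbb{E}[X_i^2] \;\leq\; (TCR_{\max})^2 \;=\; \cO(T^2 C^2 R_{\max}^2).
\]

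Next, for the supervised learning estimator arising from maximizing the lower bound~\eq{\ref{eq:ltpt:sl}}, a single sample is of the form $Y = \grad_\theta \log \pi_\theta(a|s)$ with $(s,a)\sim p_{\pi_*}$. By the same Assumption~\ref{ass:gmn}, $|Y_i| \leq C$ for every coordinate, so
\[
\mathrm{Var}(Y_i) \;\leq\; \mathbb{E}[Y_i^2] \;\leq\; C^2 \;=\; \cO(C^2).
\]
Taking the ratio gives the claimed reduction factor of $\cO(T^2 R_{\max}^2)$, which is always at least $1$ under the mild regime $T\geq 1$ and $R_{\max}\geq 1$.

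The main obstacle is conceptual rather than technical: one must justify that these two estimators are genuinely comparable as gradient estimators of objectives that share the same optimum (which is exactly what Theorem~\ref{th:ltpt} provides), and that the factor $T$ in the policy gradient genuinely cannot be avoided without further variance-reduction tricks (e.g., baselines). Apart from that, the argument is a straightforward second-moment bound combined with the uniform log-gradient bound in Assumption~\ref{ass:gmn}; no concentration inequality or martingale argument is needed, since we only seek the worst-case upper bound on per-coordinate variance.
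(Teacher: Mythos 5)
Your proposal is correct and follows essentially the same route as the paper: both bound the per-coordinate variance by the second moment, then by the worst-case magnitude $TCR_{\max}$ for the trajectory-level policy gradient sample (the paper does this by expanding the square into a double sum over time steps and bounding each of the $T^2$ cross terms by $C^2R_{\max}^2$, which is equivalent to your triangle-inequality-then-square step) and by $C$ for the single state-action supervised gradient. No substantive difference.
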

\begin{proof}
	The regular policy gradient of policy $\pi_\theta$ is given as~\citep{williams1992simple}:
\begin{align*}
  \sum_{\tau}\nolimits p_{\theta}(\tau)[\sum_{t=1}^{T}\grad_{\theta} \log(\pi_{\theta}(a(\tau, t)|s(\tau, t))) r(\tau)]
\end{align*}
The regular policy gradient variance of the $i$-th dimension is denoted as follows:
\begin{align*}
  Var\left(\sum_{t=1}^{T}\grad_{\theta} \log(\pi_{\theta}(a(\tau, t)|s(\tau, t))_i) r(\tau)\right)
\end{align*}
We denote $x_i(\tau) = \sum_{t=1}^{T}\grad_{\theta} \log(\pi_{\theta}(a(\tau, t)|s(\tau, t))_i) r(\tau)$ for 
convenience. Therefore, $x_i$ is a random variable. 
Then apply $var(x) = \vE_{p_\theta(\tau)}[x^2]-  \vE_{p_\theta(\tau)}[x]^2$, we have:
\begin{align*}
	& Var\left(\sum_{t=1}^{T}\nolimits \grad_{\theta} \log(\pi_{\theta}(a(\tau, t)|s(\tau, t))_i) r(\tau)\right)\\
	=& Var\left(x_i(\tau)\right)\\
	=& \sum_{\tau}\nolimits p_{\theta} (\tau) x_i(\tau)^2 - [\sum_\tau\nolimits  p_{\theta}(\tau) x_i(\tau)]^2\\
	\leq& \sum_{\tau}\nolimits p_{\theta}(\tau) x_i(\tau)^2\\
	=& \sum_{\tau} \nolimits p_{\theta}(\tau) [\sum_{t=1}^{T}\nolimits \grad_{\theta} \log(\pi_{\theta}(a(\tau, t)|s(\tau, t))_i) r(\tau)]^2 \qquad \\ %\text{(Assumption~\ref{ass:br}) }\\
	\leq&\sum_{\tau}\nolimits p_{\theta}(\tau) [\sum_{t=1}^{T}\nolimits \grad_{\theta} \log(\pi_{\theta}(a(\tau, t)|s(\tau, t))_i)]^2R_{max}^2 \\
	=&R_{max}^2 \sum_{\tau}\nolimits p_{\theta}(\tau) [\sum_{t=1}^{T}\nolimits \sum_{k=1}^{T}\nolimits \grad_{\theta} \log(\pi_{\theta}(a(\tau, t)|s(\tau, t))_i)\grad_\theta\log(\pi_{\theta}(a(\tau, k)|s(\tau, k)_i)] \text{ (Assumption~\ref{ass:gmn})}\\
	\leq & R_{max}^2 \sum_{\tau}\nolimits p_{\theta} (\tau) [\sum_{t=1}^{T}\sum_{k=1}^{T}C^2]\\
	=&R_{max}^2 \sum_{\tau}\nolimits p_{\theta}(\tau) T^2C^2\\
	% \leq&\sum_{\tau}p_{\theta}(\tau)  T^2C^2R_{max}^2 \\
	% =&\sum_{\tau}p_{\theta}(\tau) T^2C^2R_{max}^2 \\
	=& T^2C^2R_{max}^2 
\end{align*}
The policy gradient of long-term performance (Def~\ref{def:ltp})
\begin{align*}
\sum_{s,a} \nolimits p_{\pi_*}(s, a) \grad_{\theta} \log\pi_{\theta}(a|s) 	
\end{align*}
The policy gradient variance of the $i$-th dimension is denoted as 
\begin{align*}
	var(\grad_{\theta} \log\pi_{\theta}(a|s)_i) 
\end{align*}
Then the upper bound is given by 
\begin{align*}
& var(\grad_{\theta} \log\pi_{\theta}(a|s)_i) \\
&= \sum_{s,a}\nolimits  p_{\pi_*}(s, a) [\grad_{\theta} \log\pi_{\theta}(a|s)_i]^2	- [\sum_{s,a}\nolimits  p_{\pi_*}(s, a) \grad_{\theta} \log\pi_{\theta}(a|s)_i]^2\\
&\leq \sum_{s,a} \nolimits p_{\pi_*}(s, a) [\grad_{\theta} \log\pi_{\theta}(a|s)_i]^2	\qquad\text{(Assumption~\ref{ass:gmn})}\\
&\leq \sum_{s,a} \nolimits p_{\pi_*}(s, a) C^2\\
&= C^2
\end{align*}

This thus completes the proof. 
\end{proof}

% !TEX ROOT=./main.tex
\section{Discussions of Assumption~\ref{ass:uot}}
\label{sec:lemma:op}

% \begin{lemma}
% 	For MDPs defined in Section~\ref{sec:notations_and_problem_setting} satisfying Condition~\ref{cond:is}, if the environment dynamic 
%   is deterministic, the number of unique initial states and optimal trajectories that can be reached by an optimal deterministic policy is equal to the number of optimal trajectories, i.e. $\epsilon = 0, c = R_{\max}, |s_1| = |\cT|$. 
%   \label{lemma:s_ot}
% \end{lemma}
% \begin{proof}
% 	Given an initial state and a deterministic optimal policy $\pi_*$, 
% 	suppose there are $\tau_1, \tau_2\in\cT, \ s.t. \ s(\tau_1, 1)= s(\tau_2, 1)= s^1$, 
% 	which means they are optimal trajectories starting from the same state, we have 
% 	$p_{\pi_*}(\tau_1|s^1)  = \Pi_{t=1}^T\pi_*(a(\tau_1, t)|s(\tau_1, t))p(s(\tau_1, t+1)|s(\tau_1, t), a(\tau_1, t))> 0$ and 
% 	$p_{\pi_*}(\tau_2|s^1)  = \Pi_{t=1}^T\pi_*(a(\tau_2, t)|s(\tau_2, t))p(s(\tau_2, t+1)|s(\tau_2, t), a(\tau_2, t))> 0$, which contradicts to $p_{\pi_*}(\tau_1|s^1) = 1$. Therefore, there cannot be two or more 
% 	optimal trajectories that can be reached by $\pi_*$ starting from $s^1$. Thus we have that each unique initial state 
% 	corresponds to a unique optimal trajectory. %, i.e. $|s_1| = |\cT|$. 
% 	\hfill$\blacksquare$
% \end{proof}
In this section, we show that UNOP exists in a range of MDPs. Notice that the lemma~\ref{lemma:uot} shows the sufficient conditions of satisfying Asumption~\ref{ass:uot} rather than necessary conditions.
\begin{lemma}
  For MDPs defined in Section~\ref{sec:notations_and_problem_setting}
   satisfying the following conditions:
   \begin{itemize}
     \item Each initial state leads to one optimal trajectory. This also indicates
     $|\cS_1| = |\cT|$, where $\cT$ denotes the set of optimal trajectories in this lemma, $\cS_1$ denotes the set of initial states. 
     \item Deterministic transitions, i.e., $p(s'|s, a) \in \{0, 1\}$. 
     \item Uniform initial state distribution, i.e., $p(s_1) = \frac{1}{|\cT|}, \forall s_1 \in \cS_1$. 
   \end{itemize}
 Then we have: 
  $\exists \pi_* $, where \ s.t. $p_{\pi_*}(\tau) = \frac{1}{|\cT|}, \forall \tau \in \cT$. 
  It means that a deterministic uniformly optimal policy always exists for this MDP. 
  \label{lemma:uot}
\end{lemma}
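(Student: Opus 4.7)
The plan is to prove Lemma~\ref{lemma:uot} by explicitly constructing a deterministic policy $\pi_*$ from the $|\cT|$ optimal trajectories and then verifying that, under the three stated conditions, it induces the uniform distribution over $\cT$. First, I would use the bijection implied by the first condition, together with $|\cS_1|=|\cT|$: each initial state $s_1\in\cS_1$ corresponds to exactly one optimal trajectory $\tau(s_1)\in\cT$, and each $\tau\in\cT$ has a unique initial state. For every state-action pair $(s_t,a_t)$ appearing in any optimal trajectory, I would \emph{define} $\pi_*(a_t\mid s_t)=1$. At any state that is not visited by any optimal trajectory the action can be chosen arbitrarily, since it will not affect the probabilities of trajectories in $\cT$.

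Next, I would verify that this definition is consistent, i.e.\ that no state visited by two different optimal trajectories is assigned two different actions. This is the main technical obstacle. The argument would be by contradiction: suppose $\tau_i,\tau_j\in\cT$ both visit a state $s$ but prescribe distinct actions $a_i\neq a_j$ at $s$. Because the dynamics are deterministic, from $(s,a_i)$ and $(s,a_j)$ the rollouts proceed along fully determined paths. Tracing backward to the initial state on the two branches, this would force one of $\tau_i,\tau_j$ to disagree with itself, or would produce two distinct optimal continuations starting from the same state, which contradicts the condition that each initial state yields one optimal trajectory. In the finite-horizon setting of Section~\ref{sec:notations_and_problem_setting} one can make this precise by treating the pair $(s,t)$ as the effective state so that no trajectory revisits itself, and applying the argument inductively from the root.

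Once $\pi_*$ is well-defined, the probability computation is routine. For any $\tau\in\cT$ with initial state $s_1(\tau)$, using
\begin{align*}
p_{\pi_*}(\tau)=p(s_1(\tau))\prod_{t=1}^{T}\pi_*(a_t\mid s_t)\,p(s_{t+1}\mid s_t,a_t),
\end{align*}
every $\pi_*(a_t\mid s_t)$ equals $1$ by construction, every transition probability equals $1$ by determinism, and $p(s_1(\tau))=1/|\cT|$ by the uniform initial-state condition. Hence $p_{\pi_*}(\tau)=1/|\cT|$ for all $\tau\in\cT$, and summing gives $\sum_{\tau\in\cT}p_{\pi_*}(\tau)=1$, so by Lemma~\ref{lemma:op} (applied in the optimal direction) $p_{\pi_*}(\tau)=0$ for $\tau\notin\cT$. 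Thus $\pi_*$ is deterministic, uniformly distributed over $\cT$, and optimal, which is exactly Assumption~\ref{ass:uot}. The delicate step is the consistency argument; the rest is bookkeeping with the deterministic dynamics and the uniform prior on $\cS_1$.
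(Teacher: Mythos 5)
Your construction is essentially the paper's own proof: the paper also exhibits the deterministic policy that puts probability one on each action along an optimal trajectory and verifies $p_{\pi_*}(\tau)=1/|\cT|$ from determinism and the uniform initial distribution (it merely phrases the verification as checking that this policy solves a system of log-linear equations rather than multiplying the factors directly). Your extra step of checking that the assignment $\pi_*(a\mid s)$ is well defined when two optimal trajectories share a state is a reasonable addition that the paper leaves implicit, and does not change the argument.
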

\begin{proof}
  We can prove this by construction. The following analysis applies for any $\tau \in \cT$.
  \begin{align}
    &p_{\pi_*}(\tau) = \frac{1}{|\cT|}\nonumber\\
    \Longleftrightarrow & \log p_{\pi_*}(\tau) = -\log|\cT| \nonumber\\
   \Longleftrightarrow & \log p(s_1) + \sum_{t=1}^{T}\nolimits \log p(s_{t+1}|s_{t}, a_{t})+ \sum_{t=1}^{T}\nolimits \log\pi_{*}(a_t|s_t)  = -\log|\cT|\nonumber\\
   \Longleftrightarrow & \sum_{t=1}^{T}\nolimits \log\pi_{*}(a_t|s_t)  = - \log p(s_1) - \sum_{t=1}^{T}\nolimits \log p(s_{t+1}|s_{t}, a_{t}) -\log|\cT|\nonumber\\
   & \text{where we use } a_t, s_t \text{ as abbreviations of } a(\tau,t), s(\tau,t). \nonumber \\
   & \text{We denote } D(\tau) = -\log p(s_1) - \sum_{t=1}^{T}\nolimits \log p(s_{t+1}|s_{t}, a_{t}) > 0 \nonumber \\
   \Longleftrightarrow  &\sum_{t=1}^{T}\nolimits \log\pi_{*}(a_t|s_t)  =  D(\tau) -\log|\cT|\nonumber\\
 & \therefore \text{we can obtain a uniformly optimal policy by solving the nonlinear programming: } \nonumber \\
  & \sum_{t=1}^{T} \nolimits \log\pi_{*}(a(\tau, t)|s(\tau, t)) =  D(\tau) -\log|\cT|  \ \forall \tau \in \cT \label{eq:luotcond1}\\
 & \log\pi_{*}(a(\tau, t)|s(\tau, t)) = 0, \ \forall \tau \in \cT, t=1,...,T \label{eq:luotcond2}\\
 & \sum_{i=1}^m \nolimits \pi_{*}(a_i|s(\tau, t)) = 1,  \ \forall \tau \in \cT, t=1,...,T \label{eq:luotcond3}
  \end{align}
  % Based on Lemma~\ref{lemma:s_ot}, for MDPs with deterministic dynamics, given a specific $s(\tau, 1)$ the optimal trajectory is fixed under 
  % a deterministic optimal policy $\pi_*$, since $\Pi_{t=1}^T\pi_*(a(\tau,t)|s(\tau, t))p(s(\tau, t+1)|a_t, s_t) = 1$. Therefore, $p(s_1) = \frac{1}{\cT}$. 
Use the condition $p(s_1) = \frac{1}{|\cT|}$, then we have:
  \begin{align}
   & \because \sum_{t=1}^{T}\nolimits  \log\pi_{*}(a(\tau, t)|s(\tau, t)) = \sum_{t=1}^{T}\nolimits  \log 1  = 0 \ (\text{LHS of }~\eq{\ref{eq:luotcond1}})\nonumber\\
& \because - \log p(s_1) - \sum_{t=1}^{T}\nolimits \log p(s_{t+1}|s_{t}, a_{t}) -\log|\cT| = \log|\cT| - 0 -\log|\cT| =0  \ (\text{RHS of }~\eq{\ref{eq:luotcond1}})\nonumber \\
 & \therefore  D(\tau) -\log|\cT|  = \sum_{t=1}^{T} \nolimits \log\pi_{*}(a(\tau, t)|s(\tau, t)) , \ \forall \tau \in \cT.\nonumber
  \end{align}
  Also the deterministic optimal policy satisfies the conditions in \eq{\ref{eq:luotcond2}~\ref{eq:luotcond3}}. 
  Therefore, the deterministic optimal policy is a uniformly optimal policy. This lemma describes one type of MDP in which UOP exists. From the above reasoning, we can see that 
  as long as the system of non-linear equations~\eq{\ref{eq:luotcond1}~\ref{eq:luotcond2}~\ref{eq:luotcond3}}
  has a solution, the uniformly (near)-optimal policy exists. 
  % \hfill$\blacksquare$
\end{proof}

\begin{lemma}[Hit optimal trajectory]     
The probability that a specific optimal trajectory was not encountered given
an arbitrary softmax policy $\pi_\theta$ is exponentially decreasing with
respect to the number of training episodes. No matter a MDP has
deterministic or probabilistic dynamics.
\label{lemma:hot} 
\end{lemma}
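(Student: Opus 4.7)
The plan is to reduce the claim to Lemma~\ref{lemma:pp} (which already guarantees strictly positive trajectory probabilities under any softmax policy) plus the independence of episodes under a fixed policy. First I would fix an arbitrary optimal trajectory $\tau^{*} \in \cT$ and an arbitrary softmax policy $\pi_{\theta}$, and define $p^{*} := p_{\theta}(\tau^{*})$. Because $\tau^{*}$ is a realizable trajectory in the MDP, we have $p(s_{1}) > 0$ and $p(s_{t+1}\mid s_{t}, a_{t}) > 0$ along $\tau^{*}$; because $\pi_{\theta}$ is a softmax, $\pi_{\theta}(a_{t}\mid s_{t}) > 0$ for every $t$. Lemma~\ref{lemma:pp} then gives $p^{*} > 0$. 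Note that this step does not distinguish between deterministic and probabilistic dynamics: the only requirement is that the transition factors along $\tau^{*}$ are strictly positive, which is automatic as soon as $\tau^{*}$ is a realizable trajectory.

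Next I would use the fact that, for a \emph{fixed} policy $\pi_{\theta}$, the trajectories collected in distinct episodes are i.i.d.\ draws from the distribution $p_{\theta}(\cdot)$. Let $X_{i}$ be the indicator that the $i$-th episode coincides with $\tau^{*}$; then $X_{1}, \dots, X_{k}$ are i.i.d.\ Bernoulli$(p^{*})$, so
\begin{equation*}
  \Pr\bigl(\tau^{*}\text{ not encountered in the first }k\text{ episodes}\bigr)
  \;=\; (1 - p^{*})^{k} \;\leq\; \exp(-p^{*} k).
\end{equation*}
Since $p^{*}$ is a constant depending only on $\theta$ and the MDP, and does not depend on $k$, this is exactly an exponentially decreasing bound in the number of training episodes, which is the claim of the lemma. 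The identical bound applies in both deterministic and probabilistic dynamics, since the only property of the dynamics used is the positivity of $p^{*}$, established in the previous paragraph.

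The main obstacle is not in the derivation itself, which is essentially a Bernoulli tail bound, but in interpreting what the lemma actually guarantees: the constant $p^{*}$ can be astronomically small (it is bounded above by $\prod_{t=1}^{T} \pi_{\theta}(a_{t}^{*}\mid s_{t}^{*})$, which decays geometrically in $T$ and typically shrinks with the size of the action space $m$). Thus the exponential decay is only qualitative, and a quantitative version would require a lower bound on $p^{*}$ in terms of $T$, $m$, and the environment dynamics $D$, tying back to the exploration-efficiency quantities introduced in Lemma~\ref{lemma:osr} and Corollary~\ref{coro:esrlgp}. For the purposes of this lemma, however, we only need the qualitative statement, so the short argument above suffices.
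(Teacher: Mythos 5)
Your argument is correct and is essentially the same as the paper's: both invoke Lemma~\ref{lemma:pp} to get $p_{\theta}(\tau^{*})>0$ and then observe that the miss probability over $k$ independent episodes is $(1-p_{\theta}(\tau^{*}))^{k}$, a geometric (hence exponential) decay independent of whether the dynamics are deterministic or stochastic. Your added remark that the rate constant can be astronomically small is a fair caveat but not part of the paper's proof.
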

\begin{proof}
	Given a specific optimal trajectory $\tau = \{s(\tau, t), a(\tau, t)\}_{t=1}^T$, and an arbitrary 
	stationary policy $\pi_\theta$, the probability that has never encountered at the $n$-th episode is
	$[1 - p_{\theta}(\tau)]^n = \xi^n$, based on lemma~\ref{lemma:pp}, we have 
	$p_{\theta}(\tau) > 0$, therefore we have $\xi \in [0, 1)$. 
	% \hfill$\blacksquare$
\end{proof}

% !TEX ROOT=./main.tex

% The Assumption~\ref{ass:rpgind} is not an unrealistic assumption. Given a
% MDP and a stationary policy, the event that action $i$ is ranked higher than action $j$ 
% is already determined, 
% the relative relationships of actions for each state
% are determined.
% e.g., the pairwise relationship between $a_1$ and $a_2$ is not determined by 
% the pairwise relationship between $a_1$ and $a_3$. 	

\section{Discussions of Assumption~\ref{ass:rpgind}}
\label{sec:asm1}

\begin{figure}
\center
	\begin{tabular}{cc}	
	\includegraphics[width=0.4\textwidth]{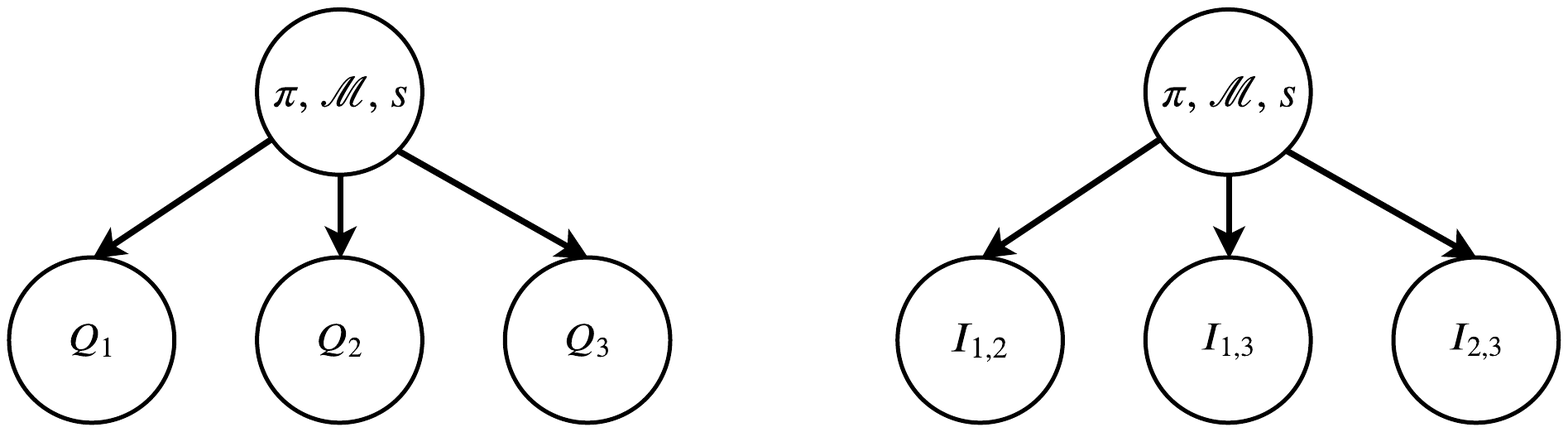} & 
	\includegraphics[width=0.4\textwidth]{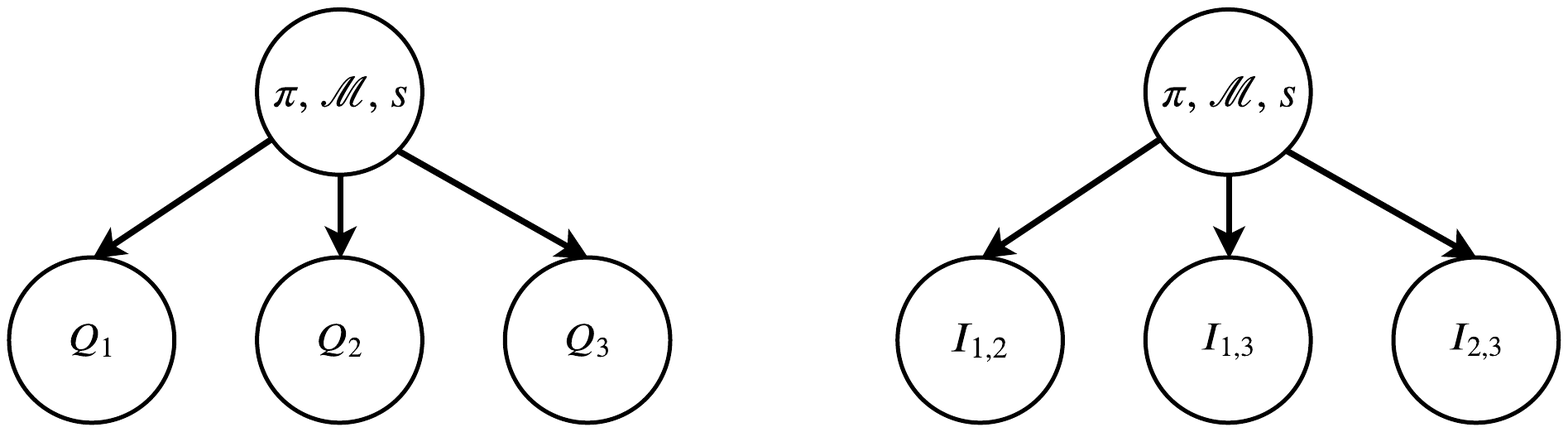}\\
	(a) & (b)
	\end{tabular}
	\caption{
The directed graph that describes the conditional independence of
pairwise relationship of actions, where $Q_1$ denotes the return of taking 
action $a_1$ at state $s$, following policy $\pi$ in $\cM$, i.e., $Q^\pi_\cM(s, a_1)$.
$I_{1,2}$ is a random variable that denotes the pairwise relationship of $Q_1$ and $Q_2$, i.e.,
$I_{1,2} = 1, \ \mathrm{ i.i.f. }\  Q_1 \geq Q_2, \mathrm{\ o.w.\ } I_{1,2} = 0$. }
\label{fig:asm1}
\vspace{-1em}
\end{figure}

Intuitively, given a state and a stationary policy $\pi$, the relative relationships
among actions can be independent, considering a fixed MDP $\cM$. 
The relative relationship among actions
is the relative relationship of actions' return. Starting from the same state, 
following a stationary policy, the actions' return is determined by
MDP properties such as environment dynamics, reward function, etc. 

More concretely, we consider a MDP with three actions $(a_1, a_2, a_3)$
for each state. The action value $Q^{\pi}_\cM$ satisfies the Bellman equation in~\eq{\ref{eq:bellman}}.
Notice that in this subsection, we use $Q^{\pi}_\cM$ to denote the action value that 
estimates the absolute value of return in $\cM$. 
\begin{align}
	Q^{\pi}_\cM(s, a_i) = r(s, a_i) + \max_{a} \vE_{s'\sim p(*|s, a)} Q^{\pi}_\cM(s', a), \forall i = 1,2,3.\label{eq:bellman}
	% Q(s', a) is an abbreivation of future return. 
\end{align}
% \begin{align}
% 	Q^{\pi}_\cM(s_t, a_i) =r(s_t, a_i) + \sum_{k = t+1}^T\nolimits r(s_k, \pi(s_k)), \forall i = 1,2,3.\label{eq:qreturn}
% \end{align}
As we can see from~\eq{\ref{eq:bellman}}, $Q^{\pi}_\cM(s, a_i), i=1,2,3$ is only
related to $s, \pi$, and environment dynamics $\vP$. It means if $\pi$, $\cM$
and $s$ are given, the action values of three actions are determined.
% , and it has no connection
% with $Q^{\pi}_\cM(s, a_1)$ and $Q^{\pi}_\cM(s, a_3)$.
% if $p^k(s|s) = 0, \forall s,  k=1,...,T$, where $p^k(s|s)$ denotes the probability of reaching state $s$ after $k$ steps,
% starting from $s$ and following policy $\pi$.  
Therefore, we can use a
directed graph~\citep{bishop2006pattern} to model the relationship of action values, as shown in
Figure~\ref{fig:asm1} (a). Similarly, if we only consider the ranking of
actions, this ranking is consistent with the relationship of actions' return,  
which is also determined by $s, \pi$, and $\vP$. Therefore, the pairwise relationship among
actions can be described as the directed graph in Figure~\ref{fig:asm1} (b), which establishes
the conditional independence of actions' pairwise relationship. Based on the above
reasoning, we conclude that Assumption~\ref{ass:rpgind} is realistic.

% {\color{red}{Whether the ranking policy gradient aligned with the return?, In practice, 
% during learning, we cannot guarantee the ranking is not affected by other factors, and the
% the action value related to the return. }}

% The gap between the above reasoning and Assumption~\ref{ass:rpgind} is that, 
% in practice, we do not have a stationary policy. By previous examples, we may establish
% $Q_1 > Q_2$, from which we may deviate the policy learning to affect $Q_1 > Q_3$. This could 
% break the conditional independence. 

% The main intuition behind Assumption~\ref{ass:rpgind} is that 
% the pairwise relationship between $a_1$ and $a_2$ is determined by 
% the policy $\pi$ and MDP properties, which is not determined by
% the pairwise relationship between $a_2$ and $a_3$.

%  given the relative relationship
% between $a_1$ and $a_2$, e.g., $Q^{\pi}_o(s, a_1) > Q^{\pi}_o(s, a_2)$,  
% the relative relationship between $a_1$ and $a_3$ is not affected. Either 
% $Q^{\pi}_o(s, a_1) \geq Q^{\pi}_o(s, a_3)$ or $Q^{\pi}_o(s, a_1) <  Q^{\pi}_o(s, a_3)$ 
% can be satisfied, depending on the MDP properties. Therefore, $p\left(Q^{\pi}_o(s, a_1) \geq Q^{\pi}_o(s, a_3)\right) = p\left(Q^{\pi}_o(s, a_1) \geq Q^{\pi}_o(s, a_3)|Q^{\pi}_o(s, a_1) > Q^{\pi}_o(s, a_2)\right)$. 

% !TEX ROOT=./main.tex

\section{The proof of Theorem~\ref{th:rlgp}}
\label{sec:app:subthrlgp}
\begin{proof}
The proof mainly establishes on the proof for long term performance Theorem~\ref{th:ltpt}
and connects the generalization bound in PAC framework to the lower bound of return.
\begin{align}
 % & \sum_{\tau} p_{\theta}(\tau) w(\tau)	\\
 & \log (\frac{1}{|\cT|}\sum_{\tau\in \cT} \nolimits p_{\theta}(\tau)w(\tau) ) \geq  \frac{1}{|\cT|}\sum_{\tau\in \cT}  \log p_{\theta}(\tau)w(\tau) \nonumber\\
 \Leftrightarrow & \sum_{\tau\in \cT}\nolimits  p_{\theta}(\tau)w(\tau)  \geq |\cT| \exp(\frac{1}{|\cT|}\sum_{\tau\in \cT}  \log p_{\theta}(\tau)w(\tau)) \nonumber\\
 &  \text{  denote } F = \sum_{\tau}\nolimits  p_{\theta}(\tau)w(\tau) = \sum_{\tau\in \cT} \nolimits p_{\theta}(\tau)w(\tau) \nonumber \\
 \Leftrightarrow F & \geq |\cT| \exp(\frac{1}{|\cT|}\sum_{\tau\in \cT} \nolimits  \log p_{\theta}(\tau)w(\tau)) \nonumber\\
  % &= |\cT| \exp(\frac{1}{|\cT|}\sum_{\tau\in \cT}  \log p_{\theta}(\tau)w(\tau)) \\  
  &= |\cT| \exp\left( \frac{1}{|\cT|}\sum_{\tau\in \cT} \nolimits \left(\log p(s_1) + \sum_{t=1}^{T}\nolimits \log p(s_{t+1}|s_{t}, a_{t})+ \sum_{t=1}^{T}\nolimits \log\pi_{\theta}(a_t|s_t) + \log w(\tau) \right) \right) \nonumber\\
  & \because w(\tau)= 1, \ \forall \tau \in \cT, s_t = s(\tau, t), a_t=a(\tau, t), t =1,\dots, T \nonumber\\
  &= |\cT| \exp\left( \frac{1}{|\cT|}\sum_{\tau\in \cT} \nolimits \left(\log p(s_1) + \sum_{t=1}^{T}\nolimits \log p(s_{t+1}|s_{t}, a_{t})+ \sum_{t=1}^{T}\log\pi_{\theta}(a_t|s_t) \right) \right) \nonumber\\
  &= |\cT| \exp\left( \frac{1}{|\cT|}\sum_{\tau\in \cT} (\log p(s_1) + \sum_{t=1}^{T}\nolimits \log p(s_{t+1}|s_{t}, a_{t}))\right)\exp\left( \frac{1}{|\cT|}\sum_{\tau\in \cT}\nolimits (\sum_{t=1}^{T}\nolimits \log\pi_{\theta}(a_t|s_t)) \right) \nonumber\\
  & \text{ Denote the dynamics of a trajectory as } p_d(\tau) = p(s_1)\Pi_{t=1}^{T}p(s_{t+1}|s_{t}, a_{t})  \nonumber\\
  & \text{ Notice that } p_d(\tau) \text{ is environment dynamics, which is fixed given a specific MDP.} \nonumber\\
  \Leftrightarrow  F & \geq |\cT| \exp\left( \frac{1}{|\cT|}\sum_{\tau\in \cT}\nolimits \log p_d(\tau)\right)\exp\left( \frac{1}{|\cT|}\sum_{\tau\in \cT}\nolimits (\sum_{t=1}^{T}\nolimits \log\pi_{\theta}(a_t|s_t)) \right) \nonumber\\
  & = |\cT|\left(\Pi_{\tau\in \cT} p_d(\tau)\right)^{\frac{1}{|\cT|}}\exp\left( \frac{1}{|\cT|T}\sum_{\tau\in \cT}\nolimits (\sum_{t=1}^{T}\nolimits \log\pi_{\theta}(a_t|s_t)) T \right) \nonumber\\
  & \text{Use the same reasoning from \eq{\ref{eq:ltpt:sumtraj}} to \eq{\ref{eq:ltpt:slapp}}. }\nonumber\\
  & = |\cT|\left(\Pi_{\tau\in \cT} p_d(\tau)\right)^{\frac{1}{|\cT|}}\exp\left( T\sum_{s,a}\nolimits p_{\pi_*}(s, a) \log\pi_{\theta}(a|s) \right) \nonumber\\
  & = |\cT|\left(\Pi_{\tau\in \cT} p_d(\tau)\right)^{\frac{1}{|\cT|}}\exp(TL) \nonumber\\
  & \text{Apply Corollary~\ref{cor:rppg} and denote } L = \sum_{s,a} p_{\pi_*}(s, a) \log\pi_{\theta}(a|s). \nonumber\\
  & L \text{ is the only term that is related to the policy parameter } \theta \nonumber \\
  L &= \sum_{s,a \in U_w}\nolimits  p_{\pi_*}(s, a) \log\pi_{\theta}(a|s) + \sum_{s,a \notin U_w } \nolimits p_{\pi_*}(s, a) \log\pi_{\theta}(a|s)\nonumber\\
  & \because \text{ with current policy classifier } \theta, \forall s,a \notin U_w, \pi_{\theta}(a|s) = 1\nonumber\\
  & = \sum_{s,a \in U_w}\nolimits  p_{\pi_*}(s, a) \log\pi_{\theta}(a|s)\nonumber\\
  & \text{ If we use RPG as our policy parameterization, then with } \eq{\ref{eq:rpg:policy}}\nonumber\\
  & = \sum_{s,a\in U_w} \nolimits p_{\pi_*}(s, a)\log(\Pi_{j\neq i, j=1}^m p_{ij} ) \nonumber\\
  & = \sum_{s, a_i \in U_w }\nolimits p_{\pi_*}(s, a_i)\sum_{j\neq i, j=1}^m\nolimits \log \frac{1}{1 + e^{Q_{ji}} }\nonumber\\
  & \text{By Condition~\ref{ass:av}, which can be easily satisfied in practice. Then we have: } Q_{ij} < 2c_q \leq 1 \nonumber \\
  &\text{Given } h = \pi_\theta, \text{misclassified state action pairs set } U_w = \{s, a| h(s)\neq a, (s,a)\sim p_*(s,a)\} \nonumber\\
  &\text{Apply Lemma~\ref{lemma:samplecomplexiy}, the misclassified rate is at most } \eta. \nonumber\\
  & \geq \sum_{s, a_i \in U_w}\nolimits p_{\pi_*}(s, a_i)(m-1)\log (\frac{1}{1 + e}) \nonumber\\
  & \geq -\sum_{s, a_i \in U_w}\nolimits p_{\pi_*}(s, a_i)(m-1)\log (1 + e) \nonumber\\
  & \geq -\eta(m-1)\log (1 + e) \nonumber\\
  & = \eta(1-m)\log (1 + e) \nonumber \\
  F &\geq |\cT|\left(\Pi_{\tau\in \cT} p_d(\tau)\right)^{\frac{1}{|\cT|}}\exp(TL) \nonumber \\
    &\geq |\cT|\left(\Pi_{\tau\in \cT} p_d(\tau)\right)^{\frac{1}{|\cT|}}\exp(\eta(1-m)T\log (1 + e)) \nonumber\\
    &\geq |\cT|\left(\Pi_{\tau\in \cT} p_d(\tau)\right)^{\frac{1}{|\cT|}}(1 + e)^{\eta(1-m)T} \nonumber\\
    & = D(1 + e)^{\eta(1-m)T} \nonumber
\end{align}

From generalization performance to sample complexity: 
\begin{align*}
  & \mathrm{Set }\ 1 - \epsilon = D(1 + e)^{\eta(1-m)T} , \text{where } D = |\cT|\left(\Pi_{\tau\in \cT} p_d(\tau)\right)^{\frac{1}{|\cT|}} \nonumber \\
  & \eta = \frac{\log_{1+e}\frac{D}{1- \epsilon}}{(m-1)T} \nonumber\\
  & \text{With realizable assumption~\ref{ass:realiz}, } \epsilon_{\min} = 0 \\
  & \gamma = \frac{\eta - \epsilon_{\min}}{2} = \frac{\eta}{2} \\
  n &\geq \frac{1}{2\gamma^2} \log\frac{2|\cH|}{\delta} \\
    % & =  \frac{1}{2(\frac{\eta}{2})^2} \log\frac{2|\cH|}{\delta}\\
    % & =  \frac{2}{\eta^2} \log\frac{2|\cH|}{\delta}\\
    % & =  \frac{2}{\left(\frac{\log_{1+e}\frac{D}{1- \epsilon}}{(m-1)T}\right)^2} \log\frac{2|\cH|}{\delta}\\
    & =  \frac{2(m-1)^2T^2}{\left(\log_{1+e}\frac{D}{1- \epsilon}\right)^2} \log\frac{2|\cH|}{\delta}\\
\end{align*}

Bridge the long-term reward and long-term performance:
\begin{align}
 & \sum_{\tau} \nolimits p_{\theta}(\tau)r(\tau) ~\textrm{In Section~\ref{sec:samcomplex},  } r(\tau) \in (0, 1], \forall \tau.\nonumber \\
\geq&  \sum_{\tau}\nolimits  p_{\theta}(\tau)w(\tau)  \textrm{ Since we focus on UOP Def~\ref{def:uop}, c = 1 in TSR Def~\ref{def:PI}} \nonumber \\
= & \sum_{\tau\in \cT}\nolimits  p_{\theta}(\tau)w(\tau) \nonumber \\
\geq & 1 - \epsilon \nonumber
\end{align}
This thus concludes the proof. 
\end{proof}

\begin{assumption}[Realizable]
We assume there exists a hypothesis $h_* \in \cH$ that obtains zero expected risk, i.e.
$\exists h_* \in \cH\ \Rightarrow  \ \sum_{s, a}p_{\pi_*}(s,a) \textbf{1}\{ h_*(s) \neq a\} = 0$.
\label{ass:realiz} 
\end{assumption}
The Assumption~\ref{ass:realiz} is not necessary for the proof of Theorem~\ref{th:rlgp}. 
For the proof of Corollary~\ref{coro:rlgp}, we introduce this assumption to 
achieve more concise conclusion. In finite MDP, the realizable assumption 
can be satisfied if the policy is parameterized by multi-layer neural newtwork, due to
its perfect finite sample expressivity~\citep{zhang2016understanding}. 
It is also advocated in our empirical studies since the neural network achieved 
optimal performance in \textsc{Pong}.

% {\color{red} zero expected risk is too strong in traditional learning problems, 
% since there are always noises associated to the  
% 1) is this necessary? 2) if so, are they easily achieved in RL? }

% !TEX ROOT=./main.tex

\section{The proof of Lemma~\ref{lemma:osr}}
\label{sec:app:sublemmaosr}
\begin{proof}
Let $e_{=i}$ denotes the event $n = i|k$, i.e. 
the number of different optimal trajectories in first $k$ episodes is
equal to $i$. Similarly, $e_{\geq i}$ denotes the event  $n \geq i|k$. 
Since the events $e_{=i}$ and $e_{=j}$ are mutually exclusive when 
$i \neq j$. Therefore, $p(e_{\geq i}) = p(e_{=i}, e_{=i+1}, ..., e_{=|\cT|}) 
= \sum_{j=i}^{|\cT|} p(e_{=j})$. Further more, we know that $\sum_{i=0}^{\cT}p(e_{=i})= 1$ since $\{e_{=i}, i = 0,...,|\cT|\}$ constructs an universal set. 
For example, $p(e_{\geq 1}) = p_{\pi_r, \cM}(n \geq 1|k) = 1 - p_{\pi_r, \cM}(n =0|k) = 1 - (\frac{N - |\cT|}{N})^k $. 
\begin{align}
	p_{\pi_r, \cM}(n \geq i|k) %&=% 1-  p_{\pi_r, \cM}(n =0|k) - p_{\pi_r, \cM}(n =1|k),\cdots,p_{\pi_r, \cM}(n =i-1|k)\\
	 &= 1  - \sum_{i'=0}^{i-1}\nolimits p_{\pi, \cM}(n = i'|k) \nonumber \\
	 % &= 1 - p_{\pi, \cM}(n = 0|k) - \sum_{i'=1}^{i-1}\nolimits p_{\pi, \cM}(n = i'|k) \\
  % & = 1- (\frac{N - |\cT|}{N})^k - \sum_{i'=1}^{i-1}\nolimits  C_{|\cT|}^{i'} \frac{\sum_{j=0}^{i'}(-1)^j C_{i'}^j( N - |\cT| + i' -j)^k }{N^k}\label{eq:in-ex}\\
  & = 1- \sum_{i'=0}^{i-1} \nolimits C_{|\cT|}^{i'} \frac{\sum_{j=0}^{i'}\nolimits (-1)^j C_{i'}^j( N - |\cT| + i' -j)^k }{N^k} \label{eq:in-ex}
\end{align}
In \eq{\ref{eq:in-ex}}, we use the inclusion-exclusion principle~\citep{kahn1996inclusion} to have the following equality. 
\begin{align*}
	p_{\pi_r, \cM}(n = i'|k) &=  C_{|\cT|}^{i'} p(e_{\tau_1,\tau_2,...,\tau_{i'}} )\\
	&=  C_{|\cT|}^{i'} \frac{\sum_{j=0}^{i'}(-1)^j C_{i'}^j(N - |\cT| + i' -j)^k }{N^k} 
\end{align*}
$e_{\tau_1,\tau_2,...,\tau_{i'}} $ denotes the event: 
in first $k$ episodes, a certain set of $i'$ 
optimal trajectories $\tau_1,\tau_2,...,\tau_{i'}, i'\leq |\cT|$ is sampled.
\end{proof}

% !TEX ROOT=./main.tex

\section{The proof of Corollary~\ref{coro:esrlgp}}
\label{sec:app:eerlgp}

\begin{proof}
The Corollary~\ref{coro:esrlgp} is a direct application of Lemma~\ref{lemma:osr} and 
Theorem~\ref{th:rlgp}. First, we reformat Theorem~\ref{th:rlgp} as follows:
$$ p(A|B)\geq 1 - \delta$$
where event $A$ denotes $\sum_{\tau} p_{\theta}(\tau)r(\tau) \geq  D(1 + e)^{\eta(1-m)T}$, 
event $B$ denotes the number of state-action pairs $n'$ from UOP (Def~\ref{def:uop}) satisfying $n' \geq n$, given fixed $\delta$. With Lemma~\ref{lemma:osr}, we have
$p(B) \geq p_{\pi_r, \cM}(n' \geq n |k)$. Then, $P(A) = P(A|B)P(B)\geq (1 - \delta)p_{\pi_r, \cM}(n' \geq n |k)$. 
\begin{align}
	&\textrm{Set } (1 - \delta)p_{\pi_r, \cM}(n' \geq n |k) = 1 - \delta' \nonumber \\ 
    &\textrm{we have } P(A) \geq  1- \delta' \nonumber \\
    & \delta =  1- \frac{1 - \delta'}{p_{\pi_r, \cM}(n' \geq n |k)} \nonumber \\
     \eta  & = 2\sqrt{\frac{1}{2n}\log \frac{2|\cH|}{\delta}}  \nonumber \\
           % & = 2\sqrt{\frac{1}{2n}\log \frac{2|\cH|}{\frac{1 - \delta'}{p_{\pi_r, \cM}(n' \geq n |k)}}} \\
           & = 2\sqrt{\frac{1}{2n}\log \frac{2|\cH|p_{\pi_r, \cM}(n' \geq n |k)}{p_{\pi_r, \cM}(n' \geq n |k) - 1 + \delta'}} \nonumber
\end{align}

\end{proof}

% !TEX ROOT=./main.tex

\subsection{Hyperparameters} % (fold)
\label{sec:hyperparameters}
We present the training details of ranking policy gradient in Table~\ref{tb:hyperpara}. 
The network architecture is the same as the convolution neural network used in DQN~\cite{mnih2015human}.
We update the RPG network every four timesteps with a minibatch of size 32.
The replay ratio is equal to eight for all baselines and RPG (except for ACER we use the default setting in openai baselines~\cite{baselines} for better performance). 
% For a fair comparison, we also set the replay ratio of ACER and SIL the same as RPG. 
% Except for the on-policy method PPO, we also control replay ratio of ACER and SIL

\begin{table}[h!]
\caption{Hyperparameters of RPG network}
\centering

\begin{tabular}{ll}\hline
Hyperparameters & Value \\ \hline
Architecture   &  Conv(32-8$\times$8-4) \\
& -Conv(64-4$\times$4-2) \\
& -Conv(64-3$\times$3-2) \\
& -FC(512) \\
Learning rate  &  0.0000625 \\
Batch size     &   32\\
Replay buffer size & 1000000\\
Update period & 4  \\
Margin in \eq{\ref{eq:rankingv2}} & 1 \\\hline
\end{tabular}
\label{tb:hyperpara}
\end{table}
% subsection hyperparameters (end)

\end{appendices}
\end{document}